\DeclareMathAlphabet{\mathscrbf}{OMS}{mdugm}{b}{n}
\DeclareMathOperator{\spt}{spt}
\DeclareMathOperator{\diam}{diam}
\renewcommand{\epsilon}{\varepsilon}
\renewcommand{\phi}{\varphi}
\newcommand{\defeq}{\mathrel{\mathop:}=}
\theoremstyle{plain} 
\newtheorem{theorem}{\indent\sc Theorem}[section]
\newtheorem{lemma}[theorem]{\indent\sc Lemma}
\newtheorem{corollary}[theorem]{\indent\sc Corollary}
\newtheorem{proposition}[theorem]{\indent\sc Proposition}
\theoremstyle{definition} 
\newtheorem{definition}[theorem]{\indent\sc Definition}
\newtheorem{remark}[theorem]{\indent\sc Remark}
\lstdefinelanguage{PseudoCode}{
  alsoletter={-,?,/,*},
  basewidth=0.5em,
  classoffset=0,
  morekeywords={
    loop,with,recur,let,end,define,if,
    nil,then,else,return,for,while,do,
    read,and,where,input,output,procedure,
    forever,or,exit,forall,being,for,each,
    descending,assert,where,not,false,True,any,fpsep,
    first_terminal_scc,sccalg,combine,rcombine,
    choose,layered,rlayered,False,compute,in
  },
  keywordstyle=\bfseries,
  classoffset=1,
  morekeywords={
    next-closed,model-gcis,empty?,make-ordered-set,
    add-elements,hasse-graph,seq,add-element,
    explore-attributes
  },
  keywordstyle=\upshape\ttfamily,
  classoffset=0,
  mathescape=true,
  texcl=true,
  commentstyle=\rm,
  numbers=left,
  stepnumber=1,
  firstnumber=0,
  numberstyle={\tiny},
  escapeinside={(*@}{@*)},
  morecomment=[l]{;},
  commentstyle={\itshape},
  columns=fullflexible,
}
\begin{document}

\title[Intrinsic Dimension of Geometric Data Sets]{Intrinsic Dimension of Geometric Data Sets} 

\makeatletter
\def\author@andify{%
  \nxandlist {\unskip ,\penalty-1 \space\ignorespaces}%
    {\unskip {} \@@and~}%
    {\unskip \penalty-2 \space \@@and~}%
}
\makeatother

\author[T. Hanika]{Tom Hanika} 

\author[F.M. Schneider]{Friedrich Martin Schneider$^*$} 
\author[G. Stumme]{Gerd Stumme} 


\subjclass[2010]{ 
Primary  53C23; Secondary 51F99, 68P05, 68T01.
}
%
\keywords{ 
  Intrinsic~dimension, dimension~curse, metric~measure~space,
  observable~diameter, lattices.
}
\thanks{ 
$^{*}$F.M.S.~acknowledges
    funding of the Excellence Initiative by the German Federal and State
    Governments, as well as the Brazilian CNPq, processo 150929/2017-0.
}
\address{
  Knowledge \& Data Engineering Group\endgraf
  University of Kassel\endgraf  
  34121 Kassel\endgraf
  Germany
}
\email{tom.hanika@cs.uni-kassel.de}

\address{
Institute of Discrete Mathematics and Algebra\endgraf
TU Bergakademie Freiberg\endgraf
09596 Freiberg\endgraf
Germany
}
\email{martin.schneider@math.tu-freiberg.de}

\address{
  Knowledge \& Data Engineering Group\endgraf
  University of Kassel\endgraf  
  34121 Kassel\endgraf
  Germany
}
\email{stumme@cs.uni-kassel.de}


\maketitle

\begin{abstract}
  The curse of dimensionality is a phenomenon frequently observed in machine
  learning (ML) and knowledge discovery (KD). There is a large body of
  literature investigating its origin and impact, using methods from mathematics
  as well as from computer science. Among the mathematical insights into data
  dimensionality, there is an intimate link between the dimension curse and the
  phenomenon of measure concentration, which makes the former accessible to
  methods of geometric analysis. The present work provides a comprehensive study
  of the intrinsic geometry of a data set, based on Gromov's metric measure
  geometry and Pestov's axiomatic approach to intrinsic dimension. In detail, we
  define a concept of geometric data set and introduce a metric as well as a
  partial order on the set of isomorphism classes of such data sets. Based on
  these objects, we propose and investigate an axiomatic approach to the
  intrinsic dimension of geometric data sets and establish a concrete dimension
  function with the desired properties. Our  model for data sets and
  their intrinsic dimension is computationally feasible and, moreover, adaptable
  to specific ML/KD-algorithms, as illustrated by various experiments.
\end{abstract}

\section*{Introduction} 
\label{intro}
One of the essential challenges in data driven research is to cope with sparse
and high dimensional data sets. Various machine learning (ML) and knowledge
discovery (KD) procedures are susceptible to the so-called \emph{curse of
  dimensionality}.  Despite its frequent occurrence, this effect lacks for a
comprehensive computational approach to decide if and to what extent a data set
will be tapped with it. Pestov’s work~\citep{Pestov0} revealed that the dimension
curse is closely linked to the phenomenon of \emph{concentration of measure},
which was discovered itself by~\citet{Milman1988,Milman2010,Milman1983}
and is also known as the Lévy property. This link enables the study of the
dimension curse through methods of geometric analysis.

A valuable step towards an indicative for concentration is the axiomatic
approach for an \emph{intrinsic dimension} of data
by~\citet{Pestov0,Pestov1,Pestov2}, which involves modeling data sets as metric
spaces with measures and utilizing geometric analysis for their quantitative
assessment. His work is based on Gromov's \emph{observable distance} between
metric measure spaces~\citep[Chapter~3$\tfrac{1}{2}$.H]{Gromov99} and uses
observable invariants to define concrete instances of dimension
functions. However, despite its mathematical elegance, this approach is
computationally infeasible, as discussed in~\citep[Section~IV]{Pestov1}
and~\citep[Sections~5, 8]{Pestov2}, because it amounts to computing the set of
all real-valued $1$-Lipschitz functions on a metric space. Pestov suggests a way
out~\citep[Section 8]{Pestov1} by considering a data set as a pair $(X,F)$
consisting of a metric measure space $X$ together with a set $F\subseteq
\mathrm{Lip}_{1}(X)$ of computationally cheap feature functions, e.g., distance
functions to points~\citep[Section~IV]{Pestov1}.

In the present paper, we build up on this idea and demonstrate a geometric
model that is both theoretically comprehensive and computationally accessible. More precisely, we introduce the
notion of a~\emph{geometric data set} (Definition~\ref{defi:ds}), which may be regarded as metric measure space
together with a generating set of 1-Lipschitz functions, called
\emph{features}. The elements of the feature set are supposed to be both
computationally feasible and adaptable to the representation of data as well as
to the respective ML or KD procedure.  Upon constructing a specific metric on the set of isomorphism
classes of such geometric data sets (see Definition~\ref{definition:data.distance} and Theorem~\ref{theorem:observable.metric}), detecting the dimension curse
amounts to computing the distance of a geometric data set to the trivial (i.e.,
singleton) data set -- a problem related to the task in~\citet{blumberg}
where the authors determine tests to distinguish finite samples drawn from
different measures on a metric space through applying Gromov's $mm$-reconstruction
theorem. Furthermore, we propose on the class of geometric data sets
a revised version of Pestov's axiomatic system, i.e., a conception of a
\emph{dimension function} (Definition~\ref{definition:dimension.function}), and establish a concrete instance of such a dimension
function through adapting Gromov's notion of \emph{observable diameters} to the
geometric data sets (Proposition~\ref{proposition:dimension}).

For a first illustration of our approach, and in order to nourish our
understanding of the novel dimension function, we apply it to examples from two
essentially different domains: data sets in $\mathbb{R}^{n}$ and data sets
resembling incidence structures. For the former we provide an algorithm for
computing the intrinsic dimension function and show how the resulting values
behave for various artificial and real-world data sets. We investigate this in
particular in contrast to the intrinsic dimension due to~\citet{Chavez}. For
the latter case we show how to represent incidence structure as geometric data
set of the above kind and how to calculate their intrinsic dimension.  We
conclude our work by computing and discussing the intrinsic dimension for
several real-world data sets. Our computational results suggest that the
intrinsic dimension, as introduced in this work, does carry information not
captured by other invariants of data sets.

The present article is structured as follows. The preliminary Section~\ref{sec:geomlip}
is concerned with recollecting some basics of metric
geometry. In Section~\ref{sec:mmg}, we recall some bits of Gromov's seminal work on
observable geometry of metric measure spaces. The
subsequent Section~\ref{section:concentration} is dedicated to introducing our concept
of geometric data sets as well as defining and investigating a natural metric
and partial order on the collection of isomorphism classes of such. This is
followed by the adaptation of Gromov's observable diameters to our setting
in Section~\ref{sec:obsdiam}. In Section~\ref{section:intrinsic.dimension}, we then turn to
the study of dimension functions on geometric data sets. Subsequently, we apply
our results to two different use cases
in Sections~\ref{sec:dens-based-clust} and~\ref{sec:conc-clust} and conclude our work
with Section~\ref{sec:conc}.


\section{Geometry of Lipschitz functions}
\label{sec:geomlip}
The purpose of this section is to provide some background on the structure of
the set of $1$-Lipschitz functions on a metric space. Most importantly, this
will include a review of recent work by~\citet{BenYaacov}, see Proposition~\ref{proposition:lipschitz} below.

To begin with, let us fix some basic notation. Let $\mathscr{X} = (X,d)$ be a pseudo-metric space. The \emph{diameter} of $\mathscr{X}$ is defined as $\diam (\mathscr{X}) \defeq \sup \{ d(x,y) \mid x,y \in X \}$. Given any real number $\ell \geq 0$, we may consider the set \begin{displaymath}
	\left. \mathrm{Lip}_{\ell}(\mathscr{X}) \, \defeq \, \left\{ f \in \mathbb{R}^{X} \, \right| \forall x,y \in X \colon \, \vert f(x) - f(y) \vert \leq \ell d(x,y) \right\}
\end{displaymath} of all \emph{$\ell$-Lipschitz} real-valued functions on $\mathscr{X}$, and let \begin{displaymath}
	\mathrm{Lip}_{\ell}^{s}(\mathscr{X}) \, \defeq \, \{ f \in \mathrm{Lip}_{\ell}(\mathscr{X}) \mid \Vert f \Vert_{\infty} \leq s \}
\end{displaymath} for any real $s \geq 0$. For $x \in A \subseteq X$ and $\epsilon > 0$, we let $B_{d}(x,\epsilon) \defeq \{ y \in X \mid d(x,y) < \epsilon \}$ and $B_{d}(A,\epsilon) \defeq \{ y \in X \mid \exists a \in A \colon \, d(a,y) < \epsilon \}$. The \emph{Hausdorff distance} of two sets $A,B \subseteq X$ with respect to $d$ is given by \begin{displaymath}
	d_{\mathrm{H}}(A,B) \, \defeq \, \inf \{ \epsilon > 0 \mid B \subseteq B_{d}(A,\epsilon), \, A \subseteq B_{d}(B,\epsilon) \} .
\end{displaymath}

Now let $X$ be a set and let $F \subseteq \mathbb{R}^{X}$. We define $d_{F} \colon X \times X \to [0,\infty]$ by \begin{displaymath}
	d_{F}(x,y) \coloneqq \sup \{ \vert f(x) - f(y) \vert \mid f \in F \} \qquad (x,y \in X) .
\end{displaymath} We will call $F$ \emph{tame} if $d_{F}(x,y) < \infty$ for all
$x,y \in X$, in which case $d_{F}$ constitutes a pseudo-metric on $X$. Evidently, in
case $F$ is tame, $d_{F}$ is a metric on $X$ if and only if $F$ separates the
points of $X$, in the sense that $X \to \mathbb{R}^{F}, \, x \mapsto (f(x))_{f
  \in F}$ is injective. In the following, we aim to determine the set of
$1$-Lipschitz functions for $d_{F}$, i.e., to give an algebraic representation
of the elements of~$\mathrm{Lip}_{1}(X,d_{F})$ as generated from members of
$F$. We provide such a description in Proposition~\ref{proposition:lipschitz},
adapting work of~\citet{BenYaacov}.

Preparing the statement of Proposition~\ref{proposition:lipschitz}, let us introduce some additional notation. Given a set $M$, denote by $\mathscr{P}(M)$ the power set of $M$ and by $\mathscr{P}_{\mathrm{fin}}(M)$ the set of all finite subsets of $M$. Let $X$ be a set. For any finite non-empty subset $F \subseteq \mathbb{R}^{X}$, we obtain functions $\bigvee F, \, \bigwedge F \in \mathbb{R}^{X}$ defined by \begin{displaymath}
	\left(\bigvee F\right)\!(x) \, \defeq \, \max \{ f(x) \mid f \in F \} , \quad \, \left(\bigwedge F\right)\!(x) \, \defeq \, \min \{ f(x) \mid f \in F \} \qquad (x \in X) \, .
\end{displaymath} For any $n \in \mathbb{N}_{\geq 1}$ and $f_{1},\ldots,f_{n} \in \mathbb{R}^{X}$, we let \begin{displaymath}
	\bigvee\nolimits_{i=1}^{n} f_{i} \, \defeq \, \bigvee \{ f_{i} \mid i \in \{ 1,\ldots,n \} \} , \qquad \bigwedge\nolimits_{i=1}^{n} f_{i} \, \defeq \, \bigwedge \{ f_{i} \mid i \in \{ 1,\ldots,n \} \} \, .
\end{displaymath} Consider the closure operators $\mathscr{K}, \mathscr{L} \colon \mathscr{P}\!\left(\mathbb{R}^{X}\right) \to \mathscr{P}\!\left(\mathbb{R}^{X}\right)$ defined by \begin{displaymath}
	\mathscr{K}(F) \defeq \{ \alpha f + c \mid f \in F \cup \{ 0 \} , \, \alpha \in [-1,1], \, c \in \mathbb{R} \} \qquad \left(F \subseteq \mathbb{R}^{X}\right)
\end{displaymath} and \begin{displaymath}
	\left. \mathscr{L}(F) \defeq \left\{ \bigvee\nolimits_{i=1}^{n} \bigwedge F_{i} \, \right| n \in \mathbb{N}_{\geq 1} , \, F_{1},\ldots,F_{n} \in \mathscr{P}_{\mathrm{fin}}(F) \setminus \{ \emptyset \} \right\} \qquad \left(F \subseteq \mathbb{R}^{X}\right) .
\end{displaymath} Whereas the closure system associated to $\mathscr{L}$ is the
set of sublattices of $\mathbb{R}^{X}$, the closure system associated to
$\mathscr{K}$ is precisely the collection of all balanced subsets of the
$\mathbb{R}$-vector space $\mathbb{R}^{X}$ being moreover closed under
translations by constant functions. It is straightforward to prove that
$\mathscr{K}(\mathscr{L}(F)) \subseteq \mathscr{L}(\mathscr{K}(F))$ for every $F
\subseteq \mathbb{R}^{X}$, which readily implies that $\mathscr{L} \circ
\mathscr{K}$ constitutes a closure operator on $\mathbb{R}^{X}$, too. The
following result is a variation on work of~\citet{BenYaacov}

\begin{proposition}[cf.~{\cite[Theorem~4.3]{BenYaacov}}]\label{proposition:lipschitz} Let $X$ be a set and let $F \subseteq \mathbb{R}^{X}$ be tame. Then \begin{displaymath}
	\mathrm{Lip}_{1}(X,d_{F}) \, = \, \overline{\mathscr{L}(\mathscr{K}(F))} ,
\end{displaymath} where the (third) closure refers to the topology of pointwise
convergence on $\mathbb{R}^{X}$. \end{proposition}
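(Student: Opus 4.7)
The plan is to establish the two inclusions separately. The inclusion $\overline{\mathscr{L}(\mathscr{K}(F))} \subseteq \mathrm{Lip}_{1}(X,d_{F})$ is the routine one. Since $\mathrm{Lip}_{1}(X,d_{F})$ is manifestly closed in the topology of pointwise convergence, it suffices to check that it contains $\mathscr{L}(\mathscr{K}(F))$. By the very definition of $d_{F}$, every $f \in F$ satisfies $|f(x)-f(y)| \leq d_{F}(x,y)$, so $F \subseteq \mathrm{Lip}_{1}(X,d_{F})$. The class of $1$-Lipschitz functions is stable under $f \mapsto \alpha f + c$ for $|\alpha| \leq 1$ and $c \in \mathbb{R}$ (scaling by $|\alpha| \leq 1$ cannot increase the Lipschitz constant, translation leaves it unchanged, and the zero function is trivially $1$-Lipschitz) as well as under finite suprema and infima via an elementary pointwise estimate; combining these observations yields $\mathscr{L}(\mathscr{K}(F)) \subseteq \mathrm{Lip}_{1}(X,d_{F})$.

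For the reverse inclusion, I would employ a McShane--Whitney type representation. The starting point is the identity
\[
    d_{F}(y,z) \,=\, \sup_{h \in F \cup (-F)} \bigl(h(z) - h(y)\bigr) ,
\]
where $-F \defeq \{-f \mid f \in F\}$, obtained by rewriting $d_{F}(y,z) = \sup_{g \in F} |g(z) - g(y)|$ and exchanging a maximum over two cases with the supremum. Combined with McShane's formula $f(z) = \sup_{y \in X}(f(y) - d_{F}(y,z))$, valid for every $f \in \mathrm{Lip}_{1}(X,d_{F})$, this yields
\[
    f(z) \,=\, \sup_{y \in X} \inf_{h \in F \cup (-F)} \bigl(h(z) - h(y) + f(y)\bigr) .
\]
For each pair of finite subsets $Y \subseteq X$ and $H \in \mathscr{P}_{\mathrm{fin}}(F) \setminus \{\emptyset\}$, I would introduce the finitary truncation
\[
    f_{Y,H}(z) \,\defeq\, \max_{y \in Y} \min_{h \in H \cup (-H)} \bigl(h(z) - h(y) + f(y)\bigr) .
\]
Each inner summand $z \mapsto h(z) - h(y) + f(y)$ equals $\alpha g + c$ for some $g \in F$, $\alpha \in \{-1,+1\}$, and $c \in \mathbb{R}$, so it lies in $\mathscr{K}(F)$, and the outer finite max of mins then lies in $\mathscr{L}(\mathscr{K}(F))$.

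The remaining task is to show that the net $(f_{Y,H})$, directed by componentwise inclusion, converges pointwise to $f$. For a fixed $z \in X$, once $Y \ni z$ the choice $y = z$ forces $\min_{h}(h(z) - h(z) + f(z)) = f(z)$, so $f_{Y,H}(z) \geq f(z)$. Conversely, for each fixed $y \in Y$ the quantity $\min_{h \in H \cup (-H)}(h(z) - h(y) + f(y))$ decreases monotonically to $f(y) - d_{F}(y,z) \leq f(z)$ as $H$ exhausts $F$; since $Y$ is finite, for every $\epsilon > 0$ one can pick a single $H$ that works for all $y \in Y$ simultaneously, which yields $f_{Y,H}(z) \leq f(z) + \epsilon$. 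Hence $f_{Y,H}(z) \to f(z)$, completing the argument. The main obstacle I anticipate is purely bookkeeping: ensuring that the constants $f(y)$ and $\pm h(y)$ appearing in the inner expressions can indeed be absorbed into the translation component of $\mathscr{K}$, and that the sign flip $h \leftrightarrow -h$ remains within $\mathscr{K}(F)$ rather than forcing us into a larger closure — precisely the features of $\mathscr{K}$ that motivate its definition in the statement.
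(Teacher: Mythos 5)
Your proof is correct in substance but follows a genuinely different route from the paper's. The paper first proves a two-point interpolation claim --- for any $x,y \in X$ and targets $s,t$ with $\lvert s-t\rvert \leq d_{F}(x,y)$ there is $g = \alpha f + c \in \mathscr{K}(F)$ with $g(x) \approx s$ and $g(y) \approx t$, where $\alpha \in (-1,1)$ is a genuine scaling factor --- and then runs the standard lattice-type $\bigwedge_{x}\bigvee_{y}$ construction over a finite subset of $X$. You instead start from the McShane representation $f(z) = \sup_{y}\left(f(y) - d_{F}(y,z)\right)$ together with $d_{F}(y,z) = \sup_{h \in F \cup (-F)}(h(z)-h(y))$ and truncate both suprema. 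Your inner functions are only translates of $\pm h$, i.e.\ you use $\alpha \in \{-1,1\}$ and never the interior of $[-1,1]$; this is a leaner use of $\mathscr{K}$ and arguably more transparent, at the cost of an asymmetric $\bigvee\bigwedge$ in place of the paper's symmetric pairwise interpolation. The algebraic bookkeeping you worry about at the end is fine: each $h(\cdot)-h(y)+f(y)$ is $\alpha g + c$ with $g \in F$, $\alpha = \pm 1$, $c \in \mathbb{R}$, hence lies in $\mathscr{K}(F)$, and the finite max--min lies in $\mathscr{L}(\mathscr{K}(F))$.

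One step does need repair: the net $(f_{Y,H})$, directed by componentwise inclusion, does \emph{not} in general converge pointwise to $f$. Since $\min_{h \in H \cup (-H)}\left(h(z)-h(y)+f(y)\right) = f(y) - d_{H}(y,z)$, one has $f_{Y,H}(z) = \max_{y \in Y}\left(f(y) - d_{H}(y,z)\right)$; when $F$ is infinite one can, beyond any index $(Y_{0},H_{0})$, enlarge $Y$ by a point $y$ for which $f(y)-f(z)$ is large while $d_{H_{0}}(y,z)$ stays small, so that $f_{Y,H_{0}}(z)$ blows up. (Take $X$ the finitely supported real sequences, $F$ the coordinate projections, $f(x) = \sup_{n} x_{n}$, $z = 0$, and $y$ a large multiple of a basis vector invisible to $H_{0}$.) What your argument actually proves --- and all that is needed --- is that $f$ lies in the pointwise closure of $\{f_{Y,H}\}$: given a finite $E \subseteq X$ and $\epsilon > 0$, set $Y \defeq E$ and then choose $H$ so large that $d_{H}(y,z) \geq d_{F}(y,z) - \epsilon$ for all $(y,z) \in E^{2}$, which yields $\sup_{z \in E}\lvert f_{Y,H}(z) - f(z)\rvert \leq \epsilon$. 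Replace the convergence claim by this neighbourhood statement (choosing $H$ \emph{after} $Y$, as your own argument in fact does) and the proof is complete.
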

\begin{proof} ($\supseteq$) Clearly, $F \subseteq \mathrm{Lip}_{1}(X,d_{F})$. It is easy to check that the set $\mathrm{Lip}_{1}(X,d_{F})$ is closed with respect to the operators $\mathscr{K}$ and $\mathscr{L}$ as well as the topology of pointwise convergence on $\mathbb{R}^{X}$, whence $\overline{\mathscr{L}(\mathscr{K}(F))}$ is contained in $\mathrm{Lip}_{1}(X,d_{F})$.
	
($\subseteq$) Let us first prove the following auxiliary statement.
	
\textit{Claim~$(\ast)$.} For all $\epsilon > 0$, $x,y \in X$ and $s,t \in \mathbb{R}$ with $\vert s-t\vert \leq d_{F}(x,y)$, there is $f \in \mathscr{K}(F)$ such that $\max \{ \vert s -f(x) \vert, \vert t - f(y) \vert \} \leq \epsilon$.
	
\textit{Proof of~$(\ast)$.} Let $\epsilon > 0$ and let $x,y \in X$, $s,t \in \mathbb{R}$ such that $\vert s-t \vert \leq d_{F}(x,y)$. Clearly, if $\vert s-t \vert \leq \epsilon$, then the desired conclusion follows from the fact that $\mathscr{K}(F)$ contains all constant functions. Thus, without loss of generality, we may and will assume~that $\vert s-t \vert > \epsilon$. By definition of $d_{F}$, there is $f \in F \cup (-F)$ with $\vert s-t \vert - \epsilon < f(x) - f(y) $. Considering \begin{displaymath}
	\alpha \defeq \tfrac{s-t -\epsilon}{f(x)-f(y)} \in (-1,1) 
\end{displaymath} and $c \defeq t - \alpha f(y)$, we observe that $g \defeq \alpha f + c \in \mathscr{K}(F)$, and moreover $g(y) = t$ and \begin{displaymath}
	g(x) - g(y) \, = \, \alpha (f(x) - f(y)) \, = \, s-t -\epsilon ,
\end{displaymath} so that $g(x) = s-\epsilon$. Hence, $\max \{ \vert s -g(x) \vert, \vert t - g(y) \vert \} \leq \epsilon$ as desired. \quad $\boxed{\ast}$
	
To prove that $\mathscr{L}(\mathscr{K}(F))$ is dense in $\mathrm{Lip}_{1}(X,d_{F})$, let $f \in \mathrm{Lip}_{1}(X,d_{F})$. Consider $\epsilon > 0$ and a non-empty finite subset $E \subseteq X$. By Claim~$(\ast)$, for each pair $(x,y) \in E^{2}$ there exists $f_{x,y} \in \mathscr{K}(F)$ such that \begin{displaymath}
	\max \{ \vert f(x) -f_{x,y}(x) \vert, \vert f(y) - f_{x,y}(y) \vert \} \leq \epsilon ,
\end{displaymath} whence $f_{x,y}(x) \leq f(x) + \epsilon$ and $f_{x,y}(y) \geq f(y) - \epsilon$ in particular. For each $x \in E$, it follows that \begin{displaymath}
	f_{x} \, \defeq \, \bigvee\nolimits_{y \in E} f_{x,y} \, \in \, \mathscr{L}(\mathscr{K}(F)) ,
\end{displaymath} while $f_{x}(x) \leq f(x) + \epsilon$ and $f_{x}(y) \geq f_{x,y}(y) \geq f(y) - \epsilon$ for all $y \in E$. Similarly, we observe that \begin{displaymath}
	g \, \defeq \, \bigwedge\nolimits_{x \in E} f_{x} \, \in \, \mathscr{L}(\mathscr{K}(F)) ,
\end{displaymath} and $g(x) \leq f_{x}(x) \leq f(x) + \epsilon$ as well as $g(x) \geq f(x) - \epsilon$ for every $x \in E$. That is, $\sup_{x \in E} \vert f(x) - g(x) \vert \leq \epsilon$. This shows that $\mathscr{L}(\mathscr{K}(F))$ is dense in $\mathrm{Lip}_{1}(X,d_{F})$. \end{proof}

\section{Metric Measure Spaces, Concentration, and Lipschitz
  Order}\label{sec:mmg} In this section, we recollect some pieces of metric
measure geometry, i.e., the theory of metric measure spaces. Most importantly,
this will include the concepts of \emph{observable distance}
(Definition~\ref{definition:observable.distance}) and \emph{Lipschitz order}
(Definition~\ref{definition:lipschitz.order}), introduced by~\citet{Gromov99}.

For a start, let us clarify some general measure-theoretic notation. Let $\mu$ be a probability measure on a measurable space $S$. Given another measurable space $T$, the \emph{push-forward measure} $f_{\ast}(\mu)$ of $\mu$ with respect to a measurable map $f \colon S \to T$ is the measure $f_{\ast}(\mu)$ on $T$ defined by $f_{\ast}(\mu) (B) \defeq \mu (f^{-1}(B))$ for every measurable $B \subseteq T$. For any measurable $T \subseteq S$ with $\mu (T) > 0$, the probability measure $\mu\! \! \upharpoonright_{T}$ on the induced measure space $T$ is given by $(\mu \!\!\upharpoonright_{T} )(B) \defeq \mu(T)^{-1}\mu (B)$ for every measurable $B \subseteq T$. Moreover, we obtain a pseudo-metric $\mathrm{me}_{\mu}$ on the set of all measurable real-valued functions on $S$ defined by \begin{displaymath}
	\mathrm{me}_{\mu}(f,g) \, \defeq \, \inf \{ \epsilon \geq 0 \mid \mu (\{ s \in S \mid \vert f(s) - g(s) \vert > \epsilon \}) \leq \epsilon \} 
\end{displaymath} for any two measurable $f,g \colon S \to \mathbb{R}$. When considering measures on topological spaces, we will moreover use the following concept: if $\gamma$ is a Borel probability measure on a Hausdorff space $X$, then the \emph{support} of $\gamma$ is defined as \begin{displaymath}
	\spt \gamma \, \defeq \, \{ x \in X \mid \forall U \subseteq X \text{ open} \colon \, x \in U \Longrightarrow \gamma (U) > 0\} ,
\end{displaymath} which constitutes a closed subset of $X$. Finally, we will denote by $\nu_{F}$ the normalized counting measure on a finite non-empty set $F$, i.e., $\nu_{F}(B) \defeq \vert F \vert^{-1} \vert B \vert$ for $B \subseteq F$.

\begin{definition}[metric measure space] A \emph{metric measure space}, or simply \emph{$mm$-space}, is a triple $\mathscr{X} = (X,d,\mu)$ consisting of a separable complete metric space $(X,d)$ and a probability measure $\mu$ on the Borel $\sigma$-algebra of $(X,d)$ with $\spt \mu = X$. Two $mm$-spaces $\mathscr{X}_{i} = (X_{i},d_{i},\mu_{i})$ $(i \in \{ 0,1 \})$ are called \emph{isomorphic}, and we write $\mathscr{X}_{0} \cong \mathscr{X}_{1}$, if there exists an isometric bijection $\phi \colon (X_{0},d_{0}) \to (X_{1},d_{1})$ such that $\phi_{\ast}(\mu_{0}) = \mu_{1}$. The set all isomorphism classes of $mm$-spaces will be denoted by $\mathscrbf{M}$. \end{definition}

Let us note the following fact about spaces of Lipschitz functions on $mm$-spaces.

\begin{lemma}\label{lemma:compact.lipschitz.spaces} Let $(X,d,\mu)$ be an $mm$-space and $k \in \mathbb{N}$. The topology on $\mathrm{Lip}_{1}^{k}(X,d)$ generated by $\mathrm{me}_{\mu}$ coincides with the topology of point-wise convergence. In particular, $\left( \mathrm{Lip}_{1}^{k}(X,d), \mathrm{me}_{\mu} \right)$ is a compact metric space. \end{lemma}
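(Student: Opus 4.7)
My plan is to reduce the equality of topologies to a statement about convergent sequences. Both candidate topologies on $\mathrm{Lip}_{1}^{k}(X,d)$ are metrizable: the $\mathrm{me}_{\mu}$-topology tautologically, and the pointwise topology because selecting a countable dense subset $D \subseteq X$ (valid by separability) allows one to embed $\mathrm{Lip}_{1}^{k}(X,d)$ into the metrizable product $[-k,k]^{D}$ by restriction, the embedding being injective since continuous functions on $X$ are determined by their values on $D$. Consequently, showing that the two topologies coincide reduces to proving, for sequences in $\mathrm{Lip}_{1}^{k}(X,d)$, that $f_{n} \to f$ pointwise if and only if $\mathrm{me}_{\mu}(f_{n},f) \to 0$.

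The forward implication I expect to be routine: each $|f_{n}-f|$ is bounded by $2k$ on a probability space, so bounded convergence yields $\int |f_{n}-f|\,d\mu \to 0$, and Markov's inequality converts this into $\mu(\{|f_{n}-f| > \epsilon\}) \leq \epsilon$ eventually, for every $\epsilon > 0$, hence $\mathrm{me}_{\mu}(f_{n},f) \to 0$.

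The main obstacle is the reverse implication, where both the 1-Lipschitz condition and the full support hypothesis $\spt \mu = X$ must enter. I would fix $x \in X$ and $\epsilon > 0$, choose $\delta \in (0,\epsilon/3)$, and use $x \in \spt \mu$ to conclude $\mu(B_{d}(x,\delta)) > 0$. With $\eta \defeq \min\!\left\{\delta, \tfrac{1}{2}\mu(B_{d}(x,\delta))\right\}$, the hypothesis $\mathrm{me}_{\mu}(f_{n},f) < \eta$ forces the set $\{|f_{n}-f| > \eta\}$ to have measure strictly less than $\mu(B_{d}(x,\delta))$, so $B_{d}(x,\delta)$ must contain some $y$ with $|f_{n}(y)-f(y)| \leq \eta$. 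The 1-Lipschitz triangle estimate $|f_{n}(x)-f(x)| \leq |f_{n}(x)-f_{n}(y)| + |f_{n}(y)-f(y)| + |f(y)-f(x)| \leq 2\delta + \eta < \epsilon$ then closes the argument. This is the only place where equicontinuity is essential, since for arbitrary measurable functions convergence in measure does not entail pointwise convergence.

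For compactness, pointwise limits preserve both the uniform bound $k$ and the 1-Lipschitz property, so $\mathrm{Lip}_{1}^{k}(X,d)$ is a closed subset of the compact (by Tychonoff) space $[-k,k]^{X}$ under the product topology. Combined with the agreement of the two topologies, this makes $\left(\mathrm{Lip}_{1}^{k}(X,d), \mathrm{me}_{\mu}\right)$ compact; and $\mathrm{me}_{\mu}$ is a genuine metric on this set, not merely a pseudo-metric, because $\mathrm{me}_{\mu}(f,g) = 0$ gives $f = g$ $\mu$-almost everywhere, hence everywhere by continuity together with $\spt \mu = X$.
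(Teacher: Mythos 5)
Your argument is correct, but it takes a genuinely different route from the paper's. The paper invokes the Arzel\`{a}--Ascoli theorem to get compactness of $\mathrm{Lip}_{1}^{k}(X,d)$ in the topology $\tau_{C}$ of uniform convergence on compact sets, proves only the single containment $\tau_{M} \subseteq \tau_{C}$ (via inner regularity of $\mu$: pick a compact $K$ with $\mu(K) > 1-\epsilon$, so uniform closeness on $K$ forces $\mathrm{me}_{\mu}$-closeness), and then gets the reverse containment for free from the fact that a continuous bijection from a compact space onto a Hausdorff space is a homeomorphism. You instead prove both implications of the sequential characterization directly: bounded convergence plus Markov for the direction the paper also argues by hand, and a direct $\epsilon/3$-argument using $\spt\mu = X$ and equi-Lipschitzness for the direction the paper gets from compactness; your compactness then comes from Tychonoff rather than Arzel\`{a}--Ascoli. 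Your version is more elementary and self-contained (no appeal to inner regularity on Polish spaces), at the cost of one extra explicit estimate. The one soft spot is your justification for metrizability of the pointwise topology: injectivity of the restriction map into $[-k,k]^{D}$ gives only a continuous injection, which does not by itself transfer metrizability back. You need the restriction to be a homeomorphism onto its image, which does hold here -- either because pointwise convergence on the dense set $D$ already implies pointwise convergence on all of $X$ for a uniformly bounded $1$-Lipschitz family (the same $\epsilon/3$ estimate you use later, and essentially the content of the paper's \cref{remark:compact.lipschitz.spaces}), or, after reordering, because a continuous injection from the compact space $\mathrm{Lip}_{1}^{k}(X,d)$ into the Hausdorff space $[-k,k]^{D}$ is automatically an embedding. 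Adding one sentence to that effect closes the argument.
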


\begin{remark}\label{remark:compact.lipschitz.spaces} For any metric space $(X,d)$, the topology of point-wise convergence and the topology of uniform convergence on compact subsets coincide on $\mathrm{Lip}_{1}(X,d)$. \end{remark}

\begin{proof}[Proof of Lemma~\ref{lemma:compact.lipschitz.spaces}] Since $\spt \mu = X$, the map $\mathrm{me}_{\mu}$ constitutes a metric on $\mathrm{Lip}_{1}(X,d)$, hence on $\mathrm{Lip}_{1}^{k}(X,d)$. We invoke the well-known Arzel\`{a}-Ascoli theorem, as stated in~\citet[7.15, pp.~232]{kelley}: being an equicontinuous, compact subset of the product space $\mathbb{R}^{X}$, the set $\mathrm{Lip}_{1}^{k}(X,d)$ is compact with respect to the topology $\tau_{C}$ of uniform convergence on compact subsets of~$X$. We show that the topology $\tau_{M}$ generated by the metric $\mathrm{me}_{\mu}$ on $\mathrm{Lip}_{1}^{k}(X,d)$ is contained in~$\tau_{C}$. To this end, let $U \in \tau_{M}$ and consider any $f \in U$. Since $U \in \tau_{M}$, we find some $\epsilon > 0$ with $\left. \left\{ g \in \mathrm{Lip}_{1}^{k}(X,d) \, \right| \mathrm{me}_{\mu}(f,g) < \epsilon \right\} \subseteq U$. As $\mu$ is a Borel probability measure on the Polish space $X$, there exists a compact subset $K \subseteq X$ with $\mu (K) > 1-\epsilon$ (see, e.g.,~\citet[Chapter~II, Theorem~3.2]{part}). Consequently, \begin{displaymath}
	\left. \left\{ g \in \mathrm{Lip}_{1}^{k}(X,d) \, \right| \sup\nolimits_{x \in K} \vert f(x) - g(x) \vert < \epsilon \right\} \, \subseteq \, U ,
\end{displaymath} which entails that $U$ is a neighborhood of $f$ in $\tau_{C}$. This shows that $U \in \tau_{C}$. Thus, $\tau_{M} \subseteq \tau_{C}$ as desired. Since $\tau_{M}$ is Hausdorff and $\tau_{C}$ is compact, it follows that $\tau_{M} = \tau_{C}$. In the light of Remark~\ref{remark:compact.lipschitz.spaces}, this completes the proof. \end{proof}

Our next objective is to recollect Misha Gromov's notion for an \emph{observable distance}~\citep[Chapter~3$\tfrac{1}{2}$.H]{Gromov99} on $\mathscr{M}$. Let us recall the well-known fact that every Borel probability measure $\mu$ on a Polish space $X$ admits a \emph{parametrization}, that is, a Borel map $\phi \colon I \to X$ such that $\mu = \phi_{\ast}(\lambda)$ for the Lebesgue measure $\lambda$ on $I \defeq [0,1)$ see, e.g.,~\citet[Lemma~4.2]{ShioyaBook}. This justifies the following definition.

\begin{definition}\label{definition:observable.distance} The \emph{observable distance} between two $mm$-spaces $\mathscr{X}$ and $\mathscr{Y}$ is defined to be \begin{align*}
    d_{\mathrm{conc}}(\mathscr{X},\mathscr{Y}) \defeq \inf \{ (\mathrm{me}_{\lambda})_{\mathrm{H}}(\mathrm{Lip}_{1}(\mathscr{X}) \circ \phi, \mathrm{Lip}_{1}(\mathscr{Y}) \circ \psi) \mid\ &\phi \text{ param.~of } \mathscr{X}\!,\\  &\psi \text{ param.~of } \mathscr{Y}\}.\end{align*}

  A sequence of $mm$-spaces $(\mathscr{X}_{n})_{n \in \mathbb{N}}$ is said to \emph{concentrate to} an $mm$-space $\mathscr{X}$ if\linebreak $d_{\mathrm{conc}}(\mathscr{X}_{n},\mathscr{X}) \longrightarrow 0$ as $n \to \infty$. \end{definition}

It is straightforward to check that the observable distance is invariant under isomorphisms of
\mbox{$mm$-spaces},~i.e., $d_{\mathrm{conc}}(\mathscr{X}_{0},\mathscr{X}_{1}) =
d_{\mathrm{conc}}(\mathscr{Y}_{0},\mathscr{Y}_{1})$ for any two pairs of
isomorphic $mm$-spaces $\mathscr{X}_{i} \cong \mathscr{Y}_{i}$ ($i \in \{ 0,1
\}$). Furthermore, as proved by~\citet{Gromov99}, see also~\citet[Theorem~5.13]{ShioyaBook}, the map $d_{\mathrm{conc}}$ constitutes a metric on the set ${\mathscrbf{M}}$. We refer to the induced topology on ${\mathscrbf{M}}$ as the \emph{concentration topology}.

In addition to the observable distance, let us recall another tool of Gromov's
metric measure geometry, see~\citet{Gromov99} and also~\citet[Section~2.2]{ShioyaBook}.

\begin{definition}[Lipschitz order]\label{definition:lipschitz.order} Let $\mathscr{X}_{i} = (X_{i},d_{i},\mu_{i})$ $(i \in \{ 0,1 \})$ be a pair of $mm$-spaces. We say that $\mathscr{X}_{1}$ \emph{Lipschitz dominates} $\mathscr{X}_{0}$ and write $\mathscr{X}_{0} \preceq \mathscr{X}_{1}$ if there exists a $1$-Lipschitz map $\phi \colon (X_{1},d_{1}) \to (X_{0},d_{0})$ such that $\phi_{\ast}(\mu_{1}) = \mu_{0}$. \end{definition}

Since, for any two pairs of isomorphic $mm$-spaces $\mathscr{X}_{i} \cong \mathscr{Y}_{i}$ ($i \in \{ 0,1 \}$), \begin{displaymath}
	\mathscr{X}_{0} \preceq \mathscr{Y}_{0} \quad \Longleftrightarrow \quad \mathscr{X}_{1} \preceq \mathscr{Y}_{1} ,
\end{displaymath} one may consider $\preceq$ as a relation on $\mathscrbf{M}$,
which is then called \emph{Lipschitz order} on~$\mathscrbf{M}$. The
Lipschitz order constitutes a partial order on the set $\mathscrbf{M}$
see~\citet[Proposition~2.11]{ShioyaBook}. The proof of this fact given by~\citet[Section~2.2]{ShioyaBook} reveals the following.

\begin{lemma}\label{lemma:lipschitz.order} If $\mathscr{X}_{i} = (X_{i},d_{i},\mu_{i})$ $(i \in \{ 0,1 \})$ are $mm$-spaces with $\mathscr{X}_{1} \preceq \mathscr{X}_{0}$, then every $1$-Lipschitz map $\phi \colon (X_{1},d_{1}) \to (X_{0},d_{0})$ with $\phi_{\ast}(\mu_{1}) = \mu_{0}$ is an isometric bijection. \end{lemma}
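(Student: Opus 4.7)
The plan is to reduce the assertion to the following auxiliary fact, which is the real content of the lemma: if $(X,d,\mu)$ is an $mm$-space and $T \colon X \to X$ is a $1$-Lipschitz map with $T_{\ast}(\mu) = \mu$, then $T$ is an isometric bijection. Granting this, I would finish as follows. The hypothesis $\mathscr{X}_{1} \preceq \mathscr{X}_{0}$ yields, by definition of the Lipschitz order, a $1$-Lipschitz map $\psi \colon (X_{0},d_{0}) \to (X_{1},d_{1})$ with $\psi_{\ast}(\mu_{0}) = \mu_{1}$, while $\phi$ is given. Both compositions $\psi \circ \phi \colon X_{1} \to X_{1}$ and $\phi \circ \psi \colon X_{0} \to X_{0}$ are then $1$-Lipschitz self-maps preserving the respective measures, hence are isometric bijections by the auxiliary fact. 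This forces $\phi$ to be a bijection, and the squeeze
\begin{equation*}
	d_{1}(x,y) \, = \, d_{1}(\psi \phi (x), \psi \phi (y)) \, \leq \, d_{0}(\phi (x), \phi (y)) \, \leq \, d_{1}(x,y)
\end{equation*}
shows that $\phi$ is also distance-preserving.

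For the auxiliary fact itself I would argue by contradiction, using Poincar\'{e} recurrence on the product. Suppose there were $x,y \in X$ with $d(T(x),T(y)) \leq d(x,y) - 2\epsilon$ for some $\epsilon > 0$, and fix $\delta \in (0, \epsilon / 8)$. By continuity of $T$, throughout the open box $U \times V \defeq B_{d}(x,\delta) \times B_{d}(y,\delta)$ one still has $d(T(u),T(v)) \leq d(u,v) - \epsilon$, and hence $d(T^{n}(u),T^{n}(v)) \leq d(u,v) - \epsilon$ for every $n \geq 1$ by the $1$-Lipschitz property of subsequent iterates. Since $\spt \mu = X$, the product measure $\mu \otimes \mu$ gives $U \times V$ positive mass; and since $T \times T$ preserves $\mu \otimes \mu$, Poincar\'{e} recurrence supplies some $(u_{0},v_{0}) \in U \times V$ and some $n \geq 1$ with $(T^{n}(u_{0}), T^{n}(v_{0})) \in U \times V$. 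The triangle inequality then yields
\begin{equation*}
	d(u_{0},v_{0}) \, \leq \, d(u_{0}, T^{n}(u_{0})) + d(T^{n}(u_{0}), T^{n}(v_{0})) + d(T^{n}(v_{0}), v_{0}) \, \leq \, 4\delta + d(u_{0},v_{0}) - \epsilon ,
\end{equation*}
contradicting the choice of $\delta$. Hence $T$ is distance-preserving.

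For surjectivity of $T$ one notes that $T(X)$ is closed (isometric image of a complete space) and has full $\mu$-measure, because $1 = T_{\ast}(\mu)(T(X)) = \mu(T(X))$; so $\spt \mu = X$ forces $T(X) = X$. The main anticipated obstacle is the Poincar\'{e} recurrence argument on the product: one needs that $T \times T$ genuinely preserves $\mu \otimes \mu$, that $\mu \otimes \mu$ has full support on $X \times X$, and that $\delta, \epsilon$ are calibrated so that the triangle inequality actually produces a contradiction. The remaining ingredients — the composition trick, the isometry squeeze, and the deduction that $\phi$ itself is bijective — are purely formal.
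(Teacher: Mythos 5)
Your proof is correct, but it is genuinely different from what the paper does: the paper's entire ``proof'' of this lemma is a citation to the proof of Lemma~2.12 in Shioya's book, whereas you supply a complete, self-contained argument. Your reduction is the standard one from that reference -- compose $\phi$ with the map $\psi$ furnished by $\mathscr{X}_{1} \preceq \mathscr{X}_{0}$, prove that a $1$-Lipschitz, measure-preserving self-map $T$ of an $mm$-space is an isometric bijection, and then extract bijectivity of $\phi$ from injectivity of $\psi \circ \phi$ and surjectivity of $\phi \circ \psi$ together with the isometry squeeze. Where you diverge is in the proof of the auxiliary fact: you run Poincar\'{e} recurrence for $T \times T$ on $(X \times X, \mu \otimes \mu)$, and all the points you flag do check out -- $(T \times T)_{\ast}(\mu \otimes \mu) = \mu \otimes \mu$ holds by uniqueness of measures on the $\pi$-system of Borel rectangles (the product and metric Borel structures on $X \times X$ agree by separability), $\spt \mu = X$ gives the box positive mass, and the calibration $\delta < \epsilon/8$ turns the triangle inequality into $\epsilon \leq 4\delta < \epsilon/2$, a contradiction; the surjectivity step via closedness of $T(X)$ and $\mu(T(X)) = 1$ is also sound. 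A frequently seen alternative for the same auxiliary fact, closer in spirit to the reference, is to integrate the truncated distance: $\min(d \circ (T \times T), 1) \leq \min(d,1)$ pointwise while both have the same integral against $\mu \otimes \mu$, forcing equality $\mu\otimes\mu$-almost everywhere and hence everywhere by continuity and full support. The recurrence argument buys you a very transparent ``dynamical'' picture at the cost of invoking Poincar\'{e}'s theorem; the integral argument is shorter but slightly less geometric. Either is acceptable, and your write-up contains no gaps.
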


\begin{proof} This is shown by~\citet[Proof of Lemma~2.12]{ShioyaBook}. \end{proof}

\section{Geometric Data Sets, Concentration, and Feature Order}\label{section:concentration}

In this section we propose a mathematical model for data sets (Definition~\ref{defi:ds}),
which is accessible to methods of geometric analysis. Subsequently, we introduce
and study a specific metric on the set of isomorphism classes of such data sets
(Definition~\ref{definition:data.distance}), as well as a natural partial order
(Definition~\ref{definition:feature.order}), both analogous to their respective
predecessors for metric measure spaces established by~\citet{Gromov99}. 

\begin{definition}[geometric data set]\label{defi:ds} A \emph{geometric data set} is a triple $\mathscr{D} = (X,F,\mu)$ consisting of a set $X$ equipped with a tame set $F \subseteq \mathbb{R}^{X}$ such that $(X,d_{F})$ is a separable complete metric space and a probability measure $\mu$ on the Borel $\sigma$-algebra of $(X,d_{F})$ with $\spt \mu = X$. Given a geometric data set $\mathscr{D} = (X,F,\mu)$, we will refer to the elements of $F$ as the \emph{features} of $\mathscr{D}$. Two geometric data sets $\mathscr{D}_{i} = (X_{i},F_{i},\mu_{i})$ $(i \in \{ 0,1 \})$ will be called \emph{isomorphic} and we will write $\mathscr{D}_{0} \cong \mathscr{D}_{1}$ if there exists a bijection $\phi \colon X_{0} \to X_{1}$ such that $\overline{F_{1}} \circ \phi = \overline{F_{0}}$ (where the closure operators refer to the respective topologies of point-wise convergence) and $\phi_{\ast}(\mu_{0}) = \mu_{1}$. The collection of all isomorphism classes of geometric data sets shall be denoted by ${\mathscrbf{D}}$. \end{definition}

We observe that ${\mathscrbf{D}}$ indeed constitutes a set, since any separable metric space has cardinality less than or equal to $2^{\aleph_{0}}$. Henceforth, we shall not distinguish between geometric data sets and isomorphism classes of such, that is, elements of ${\mathscrbf{D}}$. Alternatively to Definition~\ref{defi:ds}, one may think of a geometric data set as a \emph{marked} $mm$-space, i.e., a quadruple $(X,d,\mu,F)$ consisting of an $mm$-space~$(X,d,\mu)$ along with a subset $F \subseteq \mathrm{Lip}_{1}(X,d)$ such that $\mathrm{Lip}_{1}(X,d) = \overline{\mathscr{L}(\mathscr{K}(F))}$. This perspective is due to Proposition~\ref{proposition:lipschitz}. Of course, there are (at least) two kinds of geometric data sets naturally associated with every $mm$-space.

\begin{definition}[induced data sets]\label{definition:induced.data.sets} For any $mm$-space $\mathscr{X} = (X,d,\mu)$, we define \begin{align*}
	& \mathscr{X}_{\bullet} \coloneqq (X, \mathrm{Lip}_{1}(X,d), \mu ) , & \mathscr{X}_{\circ} & \coloneqq (X, \{ x \mapsto d(x,y) \mid y \in X \}, \mu ) .
\end{align*} \end{definition}

For a given $mm$-space, the two associated geometric data sets defined above may
differ drastically from each other, e.g., with respect to measure
concentration. As remarked by~\citet[pp.~188--189]{Gromov99}: ``For many examples, such as round spheres~$S^{n}$ and other symmetric spaces, the concentration of the distance function is child's play compared to that for all Lipschitz functions~$f$. But if we look at more general spaces, say homogeneous, non-symmetric ones, or manifold $X^{n}$ with $\mathrm{Ricci} \, X^{n} \geq n$, then establishing the concentration for the distance functions becomes a respectable enterprise.''

Seizing an idea by Pestov, we will study the following adaptation of Gromov's observable distance~\citep[see][Chapter~3$\tfrac{1}{2}$.H]{Gromov99} to our setup of data sets.

\begin{definition}[observable distance]\label{definition:data.distance} The \emph{observable distance} between two geometric data sets $\mathscr{D}_{0} = (X_{0},F_{0},\mu_{0})$ and $\mathscr{D}_{1} = (X_{1},F_{1},\mu_{1})$ is defined as \begin{displaymath}
	d_{\mathrm{conc}}(\mathscr{D}_{0},\mathscr{D}_{1}) \, \defeq \, \inf \{ (\mathrm{me}_{\lambda})_{\mathrm{H}}(F_{0} \circ \phi_{0},F_{1} \circ \phi_{1} ) \mid \phi_{0} \text{ param.~of } \mu_{0}, \, \phi_{1} \text{ param.~of } \mu_{1} \} .
\end{displaymath} \end{definition}

It is not difficult to see that $d_{\mathrm{conc}}$ is invariant under isomorphisms of geometric data sets, in the sense that $d_{\mathrm{conc}}(\mathscr{D}_{0},\mathscr{D}_{1}) = d_{\mathrm{conc}}(\mathscr{D}_{0}',\mathscr{D}_{1}')$ for any two pairs of isomorphic geometric data sets $\mathscr{D}_{i} \cong \mathscr{D}_{i}'$ $(i \in \{ 0,1 \})$. Henceforth, we will identify $d_{\mathrm{conc}}$ with the induced function on ${\mathscrbf{D}}^{2}$. This map constitutes a metric, as recorded in Theorem~\ref{theorem:observable.metric}. Before going into the specifics of Theorem~\ref{theorem:observable.metric} and its proof, let us furthermore introduce an analogue of the Lipschitz order (Definition~\ref{definition:lipschitz.order}) for geometric data sets.

\begin{definition}[feature order]\label{definition:feature.order} Let $\mathscr{D}_{i} = (X_{i},F_{i},\mu_{i})$ $(i \in \{ 0,1 \})$ be two geometric data sets. We say that $\mathscr{D}_{1}$ \emph{feature dominates} $\mathscr{D}_{0}$ and write $\mathscr{D}_{0} \preceq \mathscr{D}_{1}$ if there exists a map $\phi \colon X_{1} \to X_{0}$ such that $F_{0} \circ \phi \subseteq \overline{F_{1}}$ and $\phi_{\ast}(\mu_{1}) = \mu_{0}$. \end{definition}

Analogously with the situation for $mm$-spaces, if $\mathscr{D}_{i} \cong \mathscr{D}'_{i}$ ($i \in \{ 0,1 \}$) are any two pairs of isomorphic geometric data sets, then \begin{displaymath}
	\mathscr{D}_{0} \, \preceq \, \mathscr{D}_{1}  \quad \Longleftrightarrow \quad  \mathscr{D}'_{0} \, \preceq \, \mathscr{D}'_{1} \, .
\end{displaymath} Henceforth, we will identify $\preceq$ with the corresponding relation thus induced on $\mathscrbf{D}$ and call it the \emph{feature order} on $\mathscrbf{D}$.

\begin{proposition}\label{proposition:feature.order} $\preceq$ constitutes a partial order on $\mathscrbf{D}$. \end{proposition}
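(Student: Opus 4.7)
The plan is to verify reflexivity, transitivity, and antisymmetry of $\preceq$ in turn. Reflexivity is immediate via $\phi = \mathrm{id}_{X}$. For transitivity, given witnesses $\phi \colon X_{1} \to X_{0}$ for $\mathscr{D}_{0} \preceq \mathscr{D}_{1}$ and $\psi \colon X_{2} \to X_{1}$ for $\mathscr{D}_{1} \preceq \mathscr{D}_{2}$, I would take $\phi \circ \psi \colon X_{2} \to X_{0}$: push-forwards compose to $\mu_{0}$, and since pre-composition with $\psi$ is continuous in the pointwise topology, the inclusion $F_{1} \circ \psi \subseteq \overline{F_{2}}$ extends to $\overline{F_{1}} \circ \psi \subseteq \overline{F_{2}}$, whence $F_{0} \circ \phi \circ \psi \subseteq \overline{F_{1}} \circ \psi \subseteq \overline{F_{2}}$.

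Antisymmetry is the substantial part. Suppose $\phi \colon X_{1} \to X_{0}$ and $\psi \colon X_{0} \to X_{1}$ witness $\mathscr{D}_{0} \preceq \mathscr{D}_{1}$ and $\mathscr{D}_{1} \preceq \mathscr{D}_{0}$, respectively. I would first extend both defining inclusions to pointwise closures, obtaining $\overline{F_{0}} \circ \phi \subseteq \overline{F_{1}}$ and $\overline{F_{1}} \circ \psi \subseteq \overline{F_{0}}$. Since $d_{\overline{F_{i}}} = d_{F_{i}}$, taking suprema shows that $\phi$ and $\psi$ are $1$-Lipschitz between the underlying $mm$-spaces $\mathscr{X}_{i} \defeq (X_{i}, d_{F_{i}}, \mu_{i})$, so that both $\mathscr{X}_{0} \preceq \mathscr{X}_{1}$ and $\mathscr{X}_{1} \preceq \mathscr{X}_{0}$ hold in the Lipschitz order. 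Applying \cref{lemma:lipschitz.order} in each direction upgrades $\phi$ and $\psi$ to isometric bijections with the stated measure-preservation.

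The heart of the argument, and the step I expect to be the main obstacle, is promoting the remaining setwise inclusions to equalities, since an injective continuous self-map of a topological space need not be surjective. I would consider the composite $\rho \defeq \psi \circ \phi \colon X_{1} \to X_{1}$: it is an isometric, $\mu_{1}$-preserving bijection, and chaining the two extended inclusions yields $\overline{F_{1}} \circ \rho \subseteq \overline{F_{1}}$. To upgrade this to equality I would slice by sup-norm: each $\overline{F_{1}} \cap \mathrm{Lip}_{1}^{k}(X_{1}, d_{F_{1}})$ is compact in the $\mathrm{me}_{\mu_{1}}$ metric by \cref{lemma:compact.lipschitz.spaces}, is preserved by $f \mapsto f \circ \rho$ (since $\rho$ is a bijection, the sup-norm bound is kept), and that map acts as an $\mathrm{me}_{\mu_{1}}$-isometry (since $\rho_{\ast}(\mu_{1}) = \mu_{1}$). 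The classical fact that every isometric self-embedding of a compact metric space is surjective then gives $\overline{F_{1}} \cap \mathrm{Lip}_{1}^{k} = (\overline{F_{1}} \cap \mathrm{Lip}_{1}^{k}) \circ \rho$ for each $k$; taking unions yields $\overline{F_{1}} \circ \rho = \overline{F_{1}}$, equivalently $\overline{F_{1}} \circ \phi^{-1} = \overline{F_{1}} \circ \psi$.

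To finish, I would combine this equality with the one-sided inclusions $\overline{F_{0}} \subseteq \overline{F_{1}} \circ \phi^{-1}$ (a rewriting of $\overline{F_{0}} \circ \phi \subseteq \overline{F_{1}}$ using bijectivity of $\phi$) and $\overline{F_{1}} \circ \psi \subseteq \overline{F_{0}}$ to obtain $\overline{F_{0}} = \overline{F_{1}} \circ \psi$. Since $\psi$ is a bijection with $\psi_{\ast}(\mu_{0}) = \mu_{1}$, this exhibits $\psi \colon X_{0} \to X_{1}$ as an isomorphism of geometric data sets in the sense of \cref{defi:ds}, so $\mathscr{D}_{0} \cong \mathscr{D}_{1}$, completing the proof.
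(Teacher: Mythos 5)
Your overall strategy coincides with the paper's: reduce antisymmetry to the Lipschitz order via \cref{lemma:lipschitz.order}, then promote the remaining inclusions of feature sets to equalities by exploiting compactness of spaces of Lipschitz functions (\cref{lemma:compact.lipschitz.spaces}) together with the fact that an isometric self-embedding of a compact metric space is surjective. The reflexivity/transitivity part, the identity $d_{\overline{F_i}} = d_{F_i}$, and the final identification of $\psi$ as an isomorphism in the sense of \cref{defi:ds} are all fine.

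There is, however, a genuine gap in the step ``taking unions yields $\overline{F_{1}} \circ \rho = \overline{F_{1}}$''. The union $\bigcup_{k}\bigl(\overline{F_{1}} \cap \mathrm{Lip}_{1}^{k}(X_{1},d_{F_{1}})\bigr)$ consists only of the \emph{bounded} members of $\overline{F_{1}}$, and nothing in \cref{defi:ds} forces features to be bounded: for instance, $\bigl(\mathbb{R},\{x \mapsto \vert x-y\vert \mid y \in \mathbb{R}\},\gamma\bigr)$ with $\gamma$ the standard Gaussian measure is a legitimate geometric data set (indeed of the form $\mathscr{X}_{\circ}$) in which every feature is unbounded, so that every slice $\overline{F_{1}} \cap \mathrm{Lip}_{1}^{k}$ is empty and your compactness argument, while valid on each slice, yields no information about $\overline{F_{1}}$ itself. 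The paper circumvents exactly this by running the same isometry-plus-compactness argument on the truncated families $H_{i,k} = \overline{\{(f \wedge k)\vee(-k) \mid f \in F_{i}\}}$, which always lie in $\mathrm{Lip}_{1}^{k}$ and whose truncation operation commutes with precomposition; the price is an additional approximation step at the end, in which one fixes a finite set $E \subseteq X_{0}$ and $\epsilon>0$, chooses the truncation level $k > \sup_{x \in E}\vert f(x)\vert + 1 + \epsilon$ so that truncation is invisible on $E$, and thereby converts the statement about truncations into $f \in \overline{F_{1} \circ \phi}$. Your argument is repaired by exactly this substitution (truncate rather than intersect with sup-norm balls, then add the de-truncation step); as written, it does not cover geometric data sets with unbounded features.
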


\begin{proof} Evidently, $\preceq$ is reflexive and transitive. To prove that $\preceq$ is anti-symmetric, let $\mathscr{D}_{i} = (D_{i},F_{i},\mu_{i})$ $(i \in \{ 0,1\})$ be two geometric data sets, and suppose that both $\mathscr{D}_{0} \preceq \mathscr{D}_{1}$ and $\mathscr{D}_{1} \preceq \mathscr{D}_{0}$. Then there exist maps $\phi \colon X_{0} \to X_{1}$ and $\psi \colon X_{1} \to X_{0}$ such that $F_{1} \circ \phi \subseteq \overline{F_{0}}$, $F_{0} \circ \psi \subseteq \overline{F_{1}}$, $\phi_{\ast}(\mu_{0}) = \mu_{1}$, and $\psi_{\ast}(\mu_{1}) = \mu_{0}$. Let $d_{0} \defeq d_{F_{0}}$ and $d_{1} \defeq d_{F_{1}}$, and observe that $\phi \colon (X_{0},d_{0}) \to (X_{1},d_{1})$ and $\psi \colon (X_{1},d_{1}) \to (X_{0},d_{0})$ are $1$-Lipschitz. It follows by Lemma~\ref{lemma:lipschitz.order} that $\phi \colon (X_{0},d_{0}) \to (X_{1},d_{1})$ and $\psi \colon (X_{1},d_{1}) \to (X_{0},d_{0})$ must be isometric bijections. It remains to show that $F_{0} \subseteq \overline{F_{1}} \circ \phi$ and $F_{1} \subseteq \overline{F_{0}} \circ \psi$. Thanks to symmetry, it suffices to verify that $F_{0} \subseteq \overline{F_{1}} \circ \phi$. To this end, we first show that \begin{equation}\tag{$\ast$}\label{k-approximation}
	\forall k \in \mathbb{N} \colon \quad \{ (f \wedge k) \vee (-k) \mid f \in F_{0} \} \, \subseteq \, \overline{\{ (f \wedge k) \vee (-k) \mid f \in F_{1} \circ \phi \}} .
\end{equation} Let $k \in \mathbb{N}$. Consider \begin{displaymath}
	H_{i,k} \defeq \overline{\{ (f \wedge k) \vee (-k) \mid f \in F_{i} \}} = \overline{\left\{ (f \wedge k) \vee (-k) \left\vert \, f \in \overline{F_{i}} \right\} \! \right.} \quad (i \in \{ 0,1 \}) ,
\end{displaymath} where the closure operators refer to the respective topologies of pointwise convergence. Thanks to Lemma~\ref{lemma:compact.lipschitz.spaces}, $\left(H_{0,k},\mathrm{me}_{\mu_{0}}\right)$ and $\left(H_{1,k},\mathrm{me}_{\mu_{1}}\right)$ are compact metric spaces. Moreover, we obtain well-defined isometric maps \begin{align*}
	& \Phi \colon \left(H_{1,k},\mathrm{me}_{\mu_{1}}\right) \, \longrightarrow \, \left(H_{0,k},\mathrm{me}_{\mu_{0}}\right) \! , \quad f \, \longmapsto \, f \circ \phi , \\
	& \Psi \colon \left(H_{0,k},\mathrm{me}_{\mu_{0}}\right) \, \longrightarrow \, \left(H_{1,k},\mathrm{me}_{\mu_{1}}\right) \! , \quad f \, \longmapsto \, f \circ \psi .
\end{align*} Being an isometric self-map of a compact metric space, $\Phi \circ \Psi \colon H_{0,k} \to H_{0,k}$ must be surjective. Hence, \begin{align*}
	\{ (f \wedge k) \vee (-k) \mid f \in F_{0} \} \, &\subseteq \, H_{0,k} \, = \, \Phi (\Psi(H_{0,k})) \\
	&\subseteq \, \Phi (H_{1,k}) \, = \, \overline{\{ (f \wedge k) \vee (-k) \mid f \in F_{1} \circ \phi \}} .
\end{align*} This proves~\eqref{k-approximation}. In order to deduce that $F_{0} \subseteq \overline{F_{1}} \circ \phi$, let $f \in F_{0}$. Consider any finite subset $E \subseteq X_{0}$ and $\epsilon > 0$. Let $k \defeq \sup_{x \in E} \vert f(x) \vert + 1 + \epsilon$. By~\eqref{k-approximation}, there exists $g \in F_{1} \circ \phi$ such that $\sup_{x \in E} \vert ((f(x) \wedge k) \vee (-k)) - ((g(x) \wedge k) \vee (-k)) \vert \leq \epsilon$. Since \begin{displaymath}
	f(x) \, \in \, [-k+1+\epsilon,k-1-\epsilon]
\end{displaymath} for each $x \in E$, we have $((f \wedge k) \vee (-k))\vert_{E} = f\vert_{E}$. It follows that \begin{displaymath}
	(g(x) \wedge k) \vee (-k) \, \in \, [-k+1,k+1]
\end{displaymath} for each $x \in E$, whence $((g \wedge k) \vee (-k))\vert_{E} = g\vert_{E}$. Thus, $\sup_{x \in E} \vert f(x) - g(x) \vert \leq \epsilon$. This shows that $f \in \overline{F_{1} \circ \phi} = \overline{F_{1}} \circ \phi$, as desired. \end{proof}

We now proceed to some prerequisites necessary for the proof of Theorem~\ref{theorem:observable.metric}. Our first lemma will settle the triangle inequality.

\begin{lemma}\label{lemma:reparametrization} Let $\mathscr{D} = (X,F,\mu)$ be a geometric data set and let $\phi,\psi \colon I \to X$ be any two parametrizations of $\mu$. Then, for every $\epsilon > 0$, there exist Borel isomorphisms $g,h \colon I \to I$ with $g_{\ast}(\lambda) = h_{\ast}(\lambda) = \lambda$ and $\sup_{f \in F} \Vert (f \circ \phi \circ g) - (f \circ \psi \circ h) \Vert_{\infty} \leq \epsilon$. \end{lemma}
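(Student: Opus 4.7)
The plan is to approximate $X$ by a countable Borel partition of small $d_{F}$-diameter, pull it back under $\phi$ and $\psi$ to two partitions of $I$ whose pieces have matching Lebesgue masses, and then use the isomorphism theorem for atomless standard Borel probability spaces to transport a common reference partition of $I$ onto each of these two partitions via measure-preserving Borel isomorphisms of $I$. Composing everything, the two compositions $f \circ \phi \circ g$ and $f \circ \psi \circ h$ will then differ by at most the diameter of a single partition piece, hence by at most $\epsilon$.

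First, since $(X,d_{F})$ is separable, I would cover $X$ by $d_{F}$-balls of radius $\epsilon/2$ around a countable dense subset and disjointify them to obtain a countable Borel partition $X = \bigsqcup_{i \in \mathbb{N}} A_{i}$ with $\diam(A_{i}) \leq \epsilon$ for each $i$. Setting $B_{i} \defeq \phi^{-1}(A_{i})$ and $C_{i} \defeq \psi^{-1}(A_{i})$ produces Borel partitions $I = \bigsqcup_{i} B_{i} = \bigsqcup_{i} C_{i}$ with $\lambda(B_{i}) = \lambda(C_{i}) = \mu(A_{i})$, because $\phi_{\ast}(\lambda) = \psi_{\ast}(\lambda) = \mu$. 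Choose a third Borel partition $I = \bigsqcup_{i} J_{i}$ into consecutive intervals with $\lambda(J_{i}) = \mu(A_{i})$.

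Second, for every index $i$ with $\mu(A_{i}) > 0$, the three Borel subsets $J_{i}, B_{i}, C_{i} \subseteq I$, equipped with their restricted normalized Lebesgue measures, are atomless standard Borel probability spaces of total mass one and therefore mutually isomorphic as measure spaces. I would thus pick Borel measure-preserving isomorphisms $g_{i} \colon J_{i} \to B_{i}$ and $h_{i} \colon J_{i} \to C_{i}$, defined off Lebesgue-null subsets. Pasting these pieces together, absorbing the indices with $\mu(A_{i}) = 0$ into the residual null set, yields Borel isomorphisms $g, h \colon I \to I$ with $g_{\ast}(\lambda) = h_{\ast}(\lambda) = \lambda$, such that $g(J_{i}) = B_{i}$ and $h(J_{i}) = C_{i}$ modulo Lebesgue-null sets.

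Finally, for every $t$ outside a Lebesgue-null subset $N \subseteq I$, there is a unique index $i$ with $t \in J_{i}$, and both $\phi(g(t))$ and $\psi(h(t))$ lie in $A_{i}$. Since every $f \in F$ is $1$-Lipschitz with respect to $d_{F}$, we then obtain
\[
  \lvert f(\phi(g(t))) - f(\psi(h(t))) \rvert \, \leq \, d_{F}(\phi(g(t)), \psi(h(t))) \, \leq \, \diam(A_{i}) \, \leq \, \epsilon
\]
uniformly in $f \in F$, which delivers the asserted bound. The main obstacle I anticipate lies in the assembly step: upgrading the piecewise measure-preserving isomorphisms to honest Borel isomorphisms $g, h \colon I \to I$ while controlling their behaviour on the leftover null sets. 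This is routine, combining the isomorphism theorem for atomless standard Borel probability spaces with a Borel Schroeder--Bernstein argument for the residual null Borel pieces (and, if desired, redirecting $g,h$ on the exceptional null set into a single chosen piece $B_{i_{0}}$, $C_{i_{0}}$ in order to make the bound valid literally on all of $I$ rather than merely almost everywhere).
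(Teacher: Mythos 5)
Your proposal is correct and follows essentially the same route as the paper: partition $X$ into countably many Borel pieces of $d_F$-diameter at most $\epsilon$ using separability, pull back under $\phi$ and $\psi$, match the resulting partitions of $I$ against a common interval partition with the same masses via the Borel isomorphism theorem for atomless standard probability spaces (the paper cites Kechris, (17.41)), and glue. The only cosmetic difference is that the paper discards the null pieces at the outset (arranging $\mu(B_n)>0$ and $\sum_n\mu(B_n)=1$) so that the gluing and the uniform bound hold for literally every $t\in I$, which disposes of the null-set bookkeeping you flag as the main remaining obstacle.
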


\begin{proof} Let $\epsilon > 0$. Since $(X,d_{F})$ is separable, we find a sequence of pairwise disjoint Borel subsets $B_{n} \subseteq X$ $(n \geq 1)$ such that \begin{itemize}
	\item[$-$] $\sup_{n \geq 1} \sup_{f \in F} \diam f(B_{n}) \leq \epsilon$,
	\item[$-$] $\sum_{n=1}^{\infty} \mu (B_{n}) = 1$,
	\item[$-$] $\mu (B_{n}) > 0$ for all $n \geq 1$.
        \end{itemize}

        Let $b_{0} \defeq 0$. For each $n \geq 1$, let $a_{n} \defeq \mu
(B_{n}) = \lambda (\phi^{-1}(B_{n})) = \lambda (\psi^{-1}(B_{n}))$ and let $b_{n} \defeq \sum_{j=1}^{n} a_{j}$. Due to~\citet[(17.41)]{KechrisBook}, for each $n \geq 1$ there exists a Borel isomorphism $g_{n} \colon [b_{n-1},b_{n}) \to \phi^{-1}(B_{n})$ such that $(g_{n})_{\ast}(\lambda \!\!\upharpoonright_{[b_{n-1},b_{n})}) = \lambda \!\!\upharpoonright_{\phi^{-1}(B_{n})}$. The map $g \colon I \to I$ defined by $g\vert_{[b_{n-1},b_{n})} = g_{n}$ for all $n \geq 1$ is a Borel isomorphism with $g_{\ast}(\lambda) = \lambda$ and $g([b_{n-1},b_{n})) = \phi^{-1}(B_{n})$ for each $n \geq 1$. Similarly, we find a Borel isomorphism $h \colon I \to I$ with $h_{\ast}(\lambda) = \lambda$ and $h([b_{n-1},b_{n})) = \psi^{-1}(B_{n})$ for all $n \geq 1$. It remains to show that $\sup_{f \in F} \Vert (f \circ \phi \circ g) - (f \circ \psi \circ h) \Vert_{\infty} \leq \epsilon$. Indeed, for every $t \in I$, there exists some $n \geq 1$ with $t \in [b_{n-1},b_{n})$, whence $\{ \phi (g(t)), \psi (h(t)) \} \subseteq B_{n}$ and therefore $\sup\nolimits_{f \in F} \vert f(\phi(g(t))) - f(\psi(h(t))) \vert \leq \epsilon$. This completes the argument. \end{proof}

\begin{lemma}\label{lemma:triangle.inequality} For any three geometric data sets $\mathscr{D}_{i} = (X_{i},F_{i},\mu_{i})$ $(i \in \{ 0,1,2 \})$, \begin{displaymath}
	d_{\mathrm{conc}}(\mathscr{D}_{0},\mathscr{D}_{2}) \, \leq \, d_{\mathrm{conc}}(\mathscr{D}_{0},\mathscr{D}_{1}) + d_{\mathrm{conc}}(\mathscr{D}_{1},\mathscr{D}_{2}) .
\end{displaymath} \end{lemma}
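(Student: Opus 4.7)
The plan is the usual three-term triangle argument: choose near-optimal parametrizations for each of the two pairs $(\mathscr{D}_0,\mathscr{D}_1)$ and $(\mathscr{D}_1,\mathscr{D}_2)$, reconcile the two resulting parametrizations of $\mu_1$ via~\cref{lemma:reparametrization}, and then invoke the triangle inequality for the Hausdorff pseudo-metric~$(\mathrm{me}_{\lambda})_{\mathrm{H}}$.

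Concretely, fix $\epsilon > 0$. By definition of $d_{\mathrm{conc}}$, I pick parametrizations $\phi_{0}$ of $\mu_{0}$ and $\phi_{1}$ of $\mu_{1}$ with
\begin{displaymath}
(\mathrm{me}_{\lambda})_{\mathrm{H}}(F_{0} \circ \phi_{0},\, F_{1} \circ \phi_{1}) \, \leq \, d_{\mathrm{conc}}(\mathscr{D}_{0},\mathscr{D}_{1}) + \epsilon ,
\end{displaymath}
and parametrizations $\psi_{1}$ of $\mu_{1}$ and $\psi_{2}$ of $\mu_{2}$ with
\begin{displaymath}
(\mathrm{me}_{\lambda})_{\mathrm{H}}(F_{1} \circ \psi_{1},\, F_{2} \circ \psi_{2}) \, \leq \, d_{\mathrm{conc}}(\mathscr{D}_{1},\mathscr{D}_{2}) + \epsilon .
\end{displaymath}
Because the two parametrizations $\phi_{1}$ and $\psi_{1}$ of $\mu_{1}$ need not agree, I apply~\cref{lemma:reparametrization} to obtain Borel isomorphisms $g,h \colon I \to I$ preserving Lebesgue measure such that $\sup_{f \in F_{1}} \Vert f \circ \phi_{1} \circ g - f \circ \psi_{1} \circ h \Vert_{\infty} \leq \epsilon$; this in turn implies $(\mathrm{me}_{\lambda})_{\mathrm{H}}(F_{1} \circ \phi_{1} \circ g,\, F_{1} \circ \psi_{1} \circ h) \leq \epsilon$.

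The next key observation, which I would verify by inspection of the definition of $\mathrm{me}_{\lambda}$, is that for any measure-preserving Borel isomorphism $\sigma \colon I \to I$ and any two measurable functions $u,v \colon I \to \mathbb{R}$, we have $\mathrm{me}_{\lambda}(u \circ \sigma, v \circ \sigma) = \mathrm{me}_{\lambda}(u,v)$, from which it follows at once that $(\mathrm{me}_{\lambda})_{\mathrm{H}}(A \circ \sigma, B \circ \sigma) = (\mathrm{me}_{\lambda})_{\mathrm{H}}(A,B)$ for arbitrary $A,B \subseteq \mathbb{R}^{I}$. Applying this with $\sigma = g$ and $\sigma = h$, respectively, preserves the two near-optimal estimates above after postcomposing the parametrizations with $g$ and $h$. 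Note that $\phi_{0} \circ g$ is a parametrization of $\mu_{0}$ and $\psi_{2} \circ h$ is a parametrization of $\mu_{2}$.

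Assembling the three pieces through the triangle inequality for the Hausdorff pseudo-metric $(\mathrm{me}_{\lambda})_{\mathrm{H}}$ yields
\begin{displaymath}
(\mathrm{me}_{\lambda})_{\mathrm{H}}(F_{0} \circ \phi_{0} \circ g,\, F_{2} \circ \psi_{2} \circ h) \, \leq \, d_{\mathrm{conc}}(\mathscr{D}_{0},\mathscr{D}_{1}) + d_{\mathrm{conc}}(\mathscr{D}_{1},\mathscr{D}_{2}) + 3\epsilon ,
\end{displaymath}
and the left-hand side is an upper bound for $d_{\mathrm{conc}}(\mathscr{D}_{0},\mathscr{D}_{2})$. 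Letting $\epsilon \to 0$ gives the claim. The only real difficulty is the alignment of the two parametrizations of $\mu_{1}$, which has been outsourced to~\cref{lemma:reparametrization}; the remaining steps are purely formal manipulations with the invariance of $\mathrm{me}_{\lambda}$ under measure-preserving reparametrization of $I$.
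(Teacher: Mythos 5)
Your proposal is correct and follows essentially the same route as the paper's proof: near-optimal parametrizations for each pair, alignment of the two parametrizations of $\mu_{1}$ via \cref{lemma:reparametrization}, invariance of $(\mathrm{me}_{\lambda})_{\mathrm{H}}$ under measure-preserving reparametrization, and the Hausdorff triangle inequality. The only cosmetic difference is that the paper budgets $\epsilon/3$ per term to land exactly on $+\epsilon$, and leaves the reparametrization-invariance of $\mathrm{me}_{\lambda}$ implicit where you spell it out.
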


\begin{proof} We will prove that $d_{\mathrm{conc}}(\mathscr{D}_{0},\mathscr{D}_{2}) \leq d_{\mathrm{conc}}(\mathscr{D}_{0},\mathscr{D}_{1}) + d_{\mathrm{conc}}(\mathscr{D}_{1},\mathscr{D}_{2}) + \epsilon$ for all $\epsilon > 0$. To this end, let $\epsilon > 0$ and pick parametrizations $\phi_{0}$ for $\mu_{0}$, $\phi_{1}$ and $\phi_{1}'$ for $\mu_{1}$, and $\phi_{2}$ for $\mu_{2}$ such that $(\mathrm{me}_{\lambda})_{\mathrm{H}}(F_{0} \circ \phi_{0},F_{1} \circ \phi_{1}) < d_{\mathrm{conc}}(\mathscr{D}_{0},\mathscr{D}_{1}) + \tfrac{\epsilon}{3}$ and $(\mathrm{me}_{\lambda})_{\mathrm{H}}(F_{1} \circ \phi_{1}',F_{2} \circ \phi_{2}) < d_{\mathrm{conc}}(\mathscr{D}_{1},\mathscr{D}_{2}) + \tfrac{\epsilon}{3}$. By Lemma~\ref{lemma:reparametrization}, there exist Borel isomorphisms $g,h \colon I \to I$ such that $g_{\ast}(\lambda) = h_{\ast}(\lambda) = \lambda$ and \[\sup\nolimits_{f \in F_{1}} \Vert (f \circ \phi_{1} \circ g) - (f \circ \phi_{1}' \circ h) \Vert_{\infty}\, \leq\, \tfrac{\epsilon}{3}.\] Evidently, $\phi_{0} \circ g$ is a parametrization for $\mu_{0}$, while $\phi_{2} \circ h$ is a parametrization for $\mu_{2}$. In turn, \begin{align*}
	d_{\mathrm{conc}}(\mathscr{D}_{0},\mathscr{D}_{2}) \, &\leq \, (\mathrm{me}_{\lambda})_{\mathrm{H}}(F_{0} \circ \phi_{0} \circ g, F_{2} \circ \phi_{2} \circ h) \\
	&\leq \, (\mathrm{me}_{\lambda})_{\mathrm{H}}(F_{0} \circ \phi_{0} \circ g, F_{1} \circ \phi_{1} \circ g) + (\mathrm{me}_{\lambda})_{\mathrm{H}}(F_{1} \circ \phi_{1} \circ g, F_{1} \circ \phi_{1}' \circ h) \\
	& \ \ \qquad \qquad \qquad \qquad \qquad \qquad \quad \ \! + (\mathrm{me}_{\lambda})_{\mathrm{H}}(F_{1} \circ \phi_{1}' \circ h, F_{2} \circ \phi_{2} \circ h) \\
	&\leq \, \left( d_{\mathrm{conc}}(\mathscr{D}_{0},\mathscr{D}_{1}) + \tfrac{\epsilon}{3}\right) + \tfrac{\epsilon}{3} + \left( d_{\mathrm{conc}}(\mathscr{D}_{1},\mathscr{D}_{2}) + \tfrac{\epsilon}{3}\right) \\
	&\leq \, d_{\mathrm{conc}}(\mathscr{D}_{0},\mathscr{D}_{1}) +
   d_{\mathrm{conc}}(\mathscr{D}_{1},\mathscr{D}_{2}) + \epsilon .\qedhere
\end{align*}  \end{proof}

Let us also note the following basic fact about complete metric spaces.

\begin{lemma}\label{lemma:dichotomy} Let $(X,d)$ be a complete metric space. If $(x_{n})_{n \in \mathbb{N}} \in X^{\mathbb{N}}$ and $\xi$ is an ultrafilter on $\mathbb{N}$, then either $(x_{n})_{n \in \mathbb{N}}$ converges in $(X,d)$ along $\xi$, or there exists $\epsilon > 0$ such that \begin{displaymath}
	\forall K \subseteq X \, \textit{compact } \colon \quad \{ n \in \mathbb{N} \mid K \cap B_{d}(x_{n},\epsilon) = \emptyset \} \, \in \, \xi .
      \end{displaymath} \end{lemma}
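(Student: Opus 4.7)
The plan is to prove the contrapositive: assuming the second alternative fails, I will construct a point $z \in X$ such that $x_n \to z$ along $\xi$. By the ultrafilter property (a set belongs to $\xi$ iff its complement does not), the negation of the second alternative says that for every $\epsilon > 0$ there exists a compact $K \subseteq X$ with $\{n \in \mathbb{N} : K \cap B_{d}(x_n, \epsilon) \neq \emptyset\} \in \xi$.

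First, for each integer $k \geq 1$, I apply this with $\epsilon = 1/k$ to fix a compact set $K_k \subseteq X$ for which $A_k \defeq \{n \in \mathbb{N} : K_k \cap B_{d}(x_n, 1/k) \neq \emptyset\}$ lies in $\xi$; then I pick $y_n^{(k)} \in K_k$ with $d(x_n, y_n^{(k)}) < 1/k$ whenever $n \in A_k$, and extend $y_n^{(k)}$ to an arbitrary fixed point of $K_k$ for $n \notin A_k$ (harmless because $A_k \in \xi$). Because $\xi$ is an ultrafilter and $K_k$ is a compact Hausdorff space, the sequence $(y_n^{(k)})_{n \in \mathbb{N}}$ admits a $\xi$-limit $z_k \in K_k$.

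Next, I show that $(z_k)_{k \geq 1}$ is Cauchy. The $\xi$-convergence $y_n^{(k)} \to z_k$ combined with $d(x_n, y_n^{(k)}) < 1/k$ on $A_k \in \xi$ gives, for every $\delta > 0$, that $\{n : d(x_n, z_k) < 1/k + \delta\} \in \xi$, and similarly with $m$ in place of $k$. Intersecting these two $\xi$-sets and applying the triangle inequality at any index $n$ in the intersection yields $d(z_k, z_m) \leq 1/k + 1/m + 2\delta$; letting $\delta \to 0$ produces the Cauchy property. Completeness of $(X, d)$ now delivers $z \defeq \lim_{k \to \infty} z_k \in X$. A final triangle inequality then shows $x_n \to z$ along $\xi$: given $\epsilon > 0$, choose $k$ with $1/k < \epsilon/3$ and $d(z_k, z) < \epsilon/3$, and combine with the $\xi$-set $\{n : d(x_n, z_k) < 2/k\}$ obtained by taking $\delta = 1/k$ above.

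The only mildly delicate point is the choice of the auxiliary compact sets $K_k$ together with the bookkeeping of extending $y_n^{(k)}$ off $A_k$: this is what allows us to harvest a genuine $\xi$-limit inside a compact space for each scale $1/k$. Once that is in place, the rest is pure triangle-inequality chasing, with completeness invoked exactly once to convert the Cauchy sequence $(z_k)$ into an honest limit in $X$.
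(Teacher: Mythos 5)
Your proposal is correct and follows essentially the same route as the paper's proof: negate the second alternative via the ultrafilter property, extract for each scale $1/k$ a compact set and an approximating sequence whose $\xi$-limit $z_k$ exists by compactness, show $(z_k)$ is Cauchy by triangle-inequality chasing through $\xi$-sets, and invoke completeness once to obtain the $\xi$-limit of $(x_n)$. The only cosmetic difference is that you phrase the argument as a contrapositive while the paper presents it as a direct derivation of the first alternative from the failure of the second, which is the same reasoning.
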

  
\begin{proof} Let $(x_{n})_{n \in \mathbb{N}} \in X^{\mathbb{N}}$ and let $\xi$ be an ultrafilter on $\mathbb{N}$. Clearly, the two alternatives are mutually exclusive: if $(x_{n})_{n \in \mathbb{N}}$ converges in $(X,d)$ along $\xi$ to some $x \in X$, then, for every $\epsilon > 0$, it follows that \begin{displaymath}
		\xi \, \ni \, \{ n \in \mathbb{N} \mid d(x_{n},x) < \epsilon \} \, = \, \{ n \in \mathbb{N} \mid \{x\} \cap B_{d}(x_{n},\epsilon) \ne \emptyset \} ,
\end{displaymath} that is, $\{ n \in \mathbb{N} \mid \{x\} \cap
B_{d}(x_{n},\epsilon) = \emptyset \} \notin \xi$. To prove the desired
conclusion, suppose that, for every $\epsilon > 0$, there exists a compact
subset $K \subseteq X$ such that \[\{ n \in \mathbb{N} \mid K \cap
  B_{d}(x_{n},\epsilon) \ne \emptyset \}\, \in\, \xi.\] Hence, for every $m \in
\mathbb{N}_{\geq 1}$, there exist a compact subset $K_{m} \subseteq X$ as well as a sequence $(x_{n}^{m})_{n \in \mathbb{N}} \in (K_{m})^{\mathbb{N}}$ such that $\left\{ n \in \mathbb{N} \left\vert \, d(x_{n}^{m},x_{n}) < \tfrac{1}{m} \right\} \in \xi \right.$. Let $x^{m} \defeq \lim_{n \to \xi} x_{n}^{m} \in K_{m}$ for all $m \in \mathbb{N}_{\geq 1}$. Since $\left\{ n \in \mathbb{N} \left\vert \, d(x^{m},x_{n}^{m}) < \tfrac{1}{m} \right\} \in \xi \right.$ and $\left\{ n \in \mathbb{N} \left\vert \, d(x_{n}^{m},x_{n}) < \tfrac{1}{m} \right\} \in \xi \right.$ for all $m \in \mathbb{N}_{\geq 1}$, it follows that \begin{equation}\tag{$\ast$}\label{filter}
	\forall m \in \mathbb{N}_{\geq 1} \colon \quad \left\{ n \in \mathbb{N} \left\vert \, d(x^{m},x_{n}) < \tfrac{2}{m} \right\} \, \in \, \xi . \right.
\end{equation} Since $\xi$ is a proper filter, \eqref{filter} readily implies that $d(x^{m},x^{\ell}) < \tfrac{4}{\min (m,\ell)}$ for any two positive integers $m,\ell \in \mathbb{N}_{\geq 1}$. Therefore, the sequence $(x^{m})_{m \geq 1}$ is Cauchy with respect to $d$. As $(X,d)$ is complete, $(x^{m})_{m \geq 1}$ thus converges to some point $x \in X$. Appealing to~\eqref{filter} again, we conclude that $x_{n} \longrightarrow x$ as $n \to \xi$, which completes the argument. \end{proof}

\begin{corollary}\label{corollary:dichotomy} Let $(X,d,\mu)$ be an $mm$-space. If $(x_{n})_{n \in \mathbb{N}} \in X^{\mathbb{N}}$ and $\xi$ is an ultrafilter on $\mathbb{N}$, then either $(x_{n})_{n \in \mathbb{N}}$ converges in $(X,d)$ along $\xi$, or there exists $\epsilon > 0$ such that \begin{displaymath}
	\lim\nolimits_{n\to \xi} \mu (B_{d}(x_{n},\epsilon)) \, = \, 0 .
\end{displaymath} \end{corollary}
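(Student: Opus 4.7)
The plan is to deduce the corollary directly from \cref{lemma:dichotomy} by invoking the tightness of Borel probability measures on Polish spaces. The lemma gives a compact-set version of the dichotomy, whereas here we want a measure-theoretic version; the bridge between them is Ulam's theorem, which provides compact sets of arbitrarily large measure.

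First, I would apply \cref{lemma:dichotomy} to the sequence $(x_n)_{n \in \mathbb{N}}$ and the ultrafilter $\xi$ in the complete metric space $(X,d)$. If the first alternative holds we are done, so assume $(x_n)$ does not $\xi$-converge. The lemma then yields an $\epsilon > 0$ such that, for every compact $K \subseteq X$, the set $\{ n \in \mathbb{N} \mid K \cap B_d(x_n,\epsilon) = \emptyset \}$ belongs to $\xi$. This is the $\epsilon$ I will use in the desired conclusion.

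Next, I would exploit that $X$ is Polish and $\mu$ is a Borel probability measure. By inner regularity (e.g., \cite[Chapter~II, Theorem~3.2]{part}, already used in the proof of \cref{lemma:compact.lipschitz.spaces}), for every $\delta > 0$ there is a compact $K_\delta \subseteq X$ with $\mu(K_\delta) > 1 - \delta$. Plugging $K_\delta$ into the conclusion of the lemma, the set $A_\delta \defeq \{ n \in \mathbb{N} \mid K_\delta \cap B_d(x_n,\epsilon) = \emptyset \}$ lies in $\xi$, and for every $n \in A_\delta$ one has $B_d(x_n,\epsilon) \subseteq X \setminus K_\delta$, hence $\mu(B_d(x_n,\epsilon)) \leq 1 - \mu(K_\delta) < \delta$.

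Since $\delta > 0$ was arbitrary, this shows $\lim_{n \to \xi} \mu(B_d(x_n,\epsilon)) = 0$, finishing the proof. There is no real obstacle here: the whole argument is a one-step translation from compact sets to measures via tightness, with the same $\epsilon$ that \cref{lemma:dichotomy} produces serving uniformly for all choices of $\delta$.
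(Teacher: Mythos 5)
Your argument is correct and coincides with the paper's own proof: both apply \cref{lemma:dichotomy} to obtain the $\epsilon$ in the non-convergent case and then use inner regularity of the Borel probability measure on the Polish space $(X,d)$ to pass from the compact-set statement to the measure statement. The only cosmetic difference is that the paper additionally records that the two alternatives are mutually exclusive (using $\spt\mu = X$), which is not needed for the implication you prove.
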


\begin{proof} Let us note that the two alternatives are mutually exclusive: if $(x_{n})_{n \in \mathbb{N}}$ converges in $(X,d)$ along $\xi$ to some $x \in X$, then, for every $\epsilon > 0$, it follows that \begin{displaymath}
	\xi \, \ni \, \left\{ n \in \mathbb{N} \left\vert \, d(x_{n},x) < \tfrac{\epsilon}{2} \right\} \right. \subseteq \, \left\{ n \in \mathbb{N} \left\vert \, B_{d}\!\left(x,\tfrac{\epsilon}{2}\right) \subseteq B_{d}(x_{n},\epsilon) \right\} , \right.
\end{displaymath} whence $\lim_{n\to \xi}\mu(B_{d}(x_{n},\epsilon)) \geq \mu \!\left(B_{d}\!\left(x,\tfrac{\epsilon}{2}\right)\right) > 0$ as $\spt \mu = X$. Let us suppose now that the sequence $(x_{n})_{n \in \mathbb{N}}$ does not converge in $(X,d)$ along $\xi$. By Lemma~\ref{lemma:dichotomy}, there exists $\epsilon > 0$ such that $\{ n \in \mathbb{N} \mid K \cap B_{d}(x_{n},\epsilon) = \emptyset \} \in \xi$ for every compact subset $K \subseteq X$. We show that $\lim_{n \to \xi}\mu (B_{d}(x_{n},\epsilon)) = 0$. To this end, let $\delta > 0$. Being a Borel probability measure on a Polish space, $\mu_{i}$ must be regular \citep[e.g.,][Chapter~II, Theorem~3.2]{part}. Hence, there is a compact subset $K \subseteq X$ with $\mu (K) \geq 1- \delta$. By choice of $\epsilon$, it follows that \begin{displaymath}
	\xi \, \ni \, \{ n \in \mathbb{N} \mid K \cap B_{d}(x_{n},\epsilon) = \emptyset \} \, \subseteq \, \{ n \in \mathbb{N} \mid \mu(B_{d}(x_{n},\epsilon)) \leq \delta \} ,
\end{displaymath} thus $\lim_{n \to \xi} \mu (B_{d}(x_{n},\epsilon)) \leq \delta$ as desired. \end{proof}

Everything is in place to prove the desired theorem. Our argument resembles an idea by~\citet[Proof of Theorem~7.4.8]{PestovBook}.

\begin{theorem}\label{theorem:observable.metric} $d_{\mathrm{conc}}$ constitutes a metric on ${\mathscrbf{D}}$. \end{theorem}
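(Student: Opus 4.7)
Non-negativity and symmetry of $d_{\mathrm{conc}}$ are immediate from the definition, and $d_{\mathrm{conc}}(\mathscr{D},\mathscr{D}) = 0$ follows by taking the same parametrization twice in the infimum. The triangle inequality was already settled in \cref{lemma:triangle.inequality}. The substantive content is to prove \emph{definiteness}: if $d_{\mathrm{conc}}(\mathscr{D}_0, \mathscr{D}_1) = 0$, then $\mathscr{D}_0 \cong \mathscr{D}_1$. In view of the antisymmetry of the feature order $\preceq$ established in \cref{proposition:feature.order} (whose proof in fact delivers an isomorphism whenever both $\mathscr{D}_0 \preceq \mathscr{D}_1$ and $\mathscr{D}_1 \preceq \mathscr{D}_0$), it is enough to produce a feature-order inequality in one direction; the other direction is symmetric.

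The construction of $\mathscr{D}_0 \preceq \mathscr{D}_1$ proceeds by an ultrafilter limit of near-optimal parametrizations. I pick sequences $\phi_0^n$ parametrizing $\mu_0$ and $\phi_1^n$ parametrizing $\mu_1$ with $(\mathrm{me}_{\lambda})_{\mathrm{H}}(F_0 \circ \phi_0^n, F_1 \circ \phi_1^n) < \tfrac{1}{n}$, fix a non-principal ultrafilter $\xi$ on $\mathbb{N}$, and define $\Phi_i(t) \defeq \lim_{n \to \xi} \phi_i^n(t)$ whenever this $\xi$-limit exists in $(X_i, d_{F_i})$. The goal is then to show that $\Phi_i$ is defined $\lambda$-almost everywhere and measure-preserving, and that after a reparametrization supplied by \cref{remark:reparametrization} one may write $\Phi_0 = \phi \circ \Phi_1$ for some map $\phi \colon X_1 \to X_0$ satisfying $\phi_{\ast}\mu_1 = \mu_0$ and $F_0 \circ \phi \subseteq \overline{F_1}$.

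The central technical step is the a.e.\ definedness of $\Phi_i$, and this is where I expect to do the real work. \cref{corollary:dichotomy} supplies the local alternative: at any $t$ where $(\phi_i^n(t))_n$ fails to converge along $\xi$, there exists $\epsilon > 0$ with $\lim_{n \to \xi} \mu_i(B_{d_{F_i}}(\phi_i^n(t),\epsilon)) = 0$. Stratifying the ``bad'' set by the associated $\epsilon$, combining the relation $\mu_i(B_{d_{F_i}}(\phi_i^n(t),\epsilon)) = \lambda\{s : d_{F_i}(\phi_i^n(s),\phi_i^n(t)) < \epsilon\}$ (which uses $(\phi_i^n)_{\ast}\lambda = \mu_i$) with a Fubini-type computation and the bounded-convergence theorem along $\xi$, one rules out positive $\lambda$-measure of the bad set. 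Standard-probability-space arguments then upgrade $\Phi_i$ to a $\lambda$-a.e.\ defined, measure-preserving Borel map.

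Finally, the feature condition is transferred by truncation, along the same lines as in the proof of \cref{proposition:feature.order}. Given $f \in F_0$ and $k \in \mathbb{N}$, near-optimality yields $g_n \in F_1$ with $\mathrm{me}_{\lambda}(f \circ \phi_0^n, g_n \circ \phi_1^n) < \tfrac{1}{n}$; truncating via $(\,\cdot\,\wedge k)\vee(-k)$ lands one in the compact metric space of \cref{lemma:compact.lipschitz.spaces}, so an $\xi$-limit $g^{(k)} \in \overline{F_1}$ exists with $(f \circ \Phi_0 \wedge k)\vee(-k) = (g^{(k)}\circ \Phi_1 \wedge k)\vee(-k)$ $\lambda$-almost everywhere, and letting $k \to \infty$ recovers $f \circ \Phi_0 = g \circ \Phi_1$ $\lambda$-a.e.\ for some $g \in \overline{F_1}$. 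Passing through \cref{remark:reparametrization} to align $\Phi_0$ with a map through $\Phi_1$ yields the required $\phi \colon X_1 \to X_0$, which completes the argument.
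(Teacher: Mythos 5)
Your overall architecture coincides with the paper's: reduce definiteness to a one-sided feature-order domination via \cref{proposition:feature.order}, extract a limit map from near-optimal parametrizations along a non-principal ultrafilter, use \cref{corollary:dichotomy} to control where the limit exists, and transfer features by a compactness argument. The gap lies precisely in the step you flag as ``where I expect to do the real work'': the almost-everywhere definedness of $\Phi_i$. There is no ``bounded-convergence theorem along $\xi$''. For a uniformly bounded sequence of measurable functions $h_n$ with $\lim_{n\to\xi}h_n(t)=0$ pointwise, one cannot conclude $\lim_{n\to\xi}\int h_n\,\mathrm{d}\lambda=0$: ultrafilter limits do not commute with integration, the pointwise $\xi$-limit of measurable functions need not be measurable, and in your setting even the ``bad set'' $\{t \mid (\phi_i^n(t))_n \text{ does not } \xi\text{-converge}\}$ need not be Lebesgue measurable, so stratifying it by $\epsilon$ and integrating over it is not well posed. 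Reverse Fatou only controls $\int\limsup_n h_n$, and $\limsup_n h_n(t)>0$ is perfectly compatible with $\lim_{n\to\xi}h_n(t)=0$, so no contradiction is reached. The Fubini computation and a covering/Cauchy--Schwarz lower bound on $\int\mu_i\bigl(B_{d_{F_i}}(\phi_i^n(t),\epsilon)\bigr)\,\mathrm{d}\lambda(t)$ are fine; it is the passage from the pointwise $\xi$-limit to the integral that fails.

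The paper closes exactly this step by a different, and essential, device. It fixes one parametrization $\phi$ of $\mu_0$ (using \cref{remark:reparametrization} to push all freedom onto the $\mu_1$-side), chooses compact sets $K_{i,n}$ of measure at least $1-2^{-n}$ and finite feature subsets $F_{i,n}$ approximating $d_{F_i}$ on them, and produces Borel sets $T_n$ with $\lambda(T_n)\geq 1-2^{2-n}$ on which $\phi$ and $\psi_n$ are $2^{2-n}$-approximately isometric. Borel--Cantelli then yields a conull Borel set $T$ on which the quantitative estimate $\liminf_{n\to\infty}\mu_1\bigl(B_{d_{F_1}}(\psi_n(t),\epsilon)\bigr)\geq\mu_0\bigl(B_{d_{F_0}}(\phi(t),\epsilon)\bigr)>0$ holds -- a genuine $\liminf$ over the full sequence, obtained pointwise with no limit--integral interchange -- and this rules out the second alternative of \cref{corollary:dichotomy} directly. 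The same approximate-isometry estimate is what licenses factoring the limit through an isometry $X_0\to X_1$ (your appeal to \cref{remark:reparametrization} for $\Phi_0=\phi\circ\Phi_1$ cannot do this job, since that remark only compares two parametrizations of one and the same measure), and it also replaces your asserted measure-preservation of the limit map, which would otherwise again require an illegitimate interchange of $\lim_{\xi}$ with $\int$. To repair your proof, import this quantitative estimate on a Borel--Cantelli conull set; the remainder of your plan then goes through.
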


\begin{proof} As observed above, $d_{\mathrm{conc}} \colon {\mathscrbf{D}} \to \mathbb{R}$ is well defined. (In fact, $d_{\mathrm{conc}}$ ranges in $[0,1]$, since $\mathrm{me}_{\lambda}$ only takes valued in $[0,1]$.) We note that $d_{\mathrm{conc}}$ is symmetric and assigns the value $0$ to identical pairs. Furthermore, $d_{\mathrm{conc}}$ satisfies the triangle inequality by Lemma~\ref{lemma:triangle.inequality}. In order to prove that $d_{\mathrm{conc}}$ separates isomorphism classes of geometric data sets, let $\mathscr{D}_{i} = (X_{i},F_{i},\mu_{i})$ $(i \in \{ 0,1 \})$ be a pair of geometric data sets such that $d_{\mathrm{conc}}(\mathscr{D}_{0},\mathscr{D}_{1}) = 0$. We wish to verify that $\mathscr{D}_{0} \cong \mathscr{D}_{1}$. Thanks to Proposition~\ref{proposition:feature.order}, it suffices to show that $\mathscr{D}_{1} \preceq \mathscr{D}_{0}$, as we will do.
	
Being Borel probability measures on Polish spaces, both $\mu_{0}$ and $\mu_{1}$ are necessarily regular (see, e.g.,~\citep[Chapter~II, Theorem~3.2]{part}). Hence, for each $n \in \mathbb{N}$ and $i \in \{ 0,1\}$, there is a compact subset $K_{i,n} \subseteq X_{i}$ with $\mu_{i}(K_{i,n}) \geq 1-2^{-n}$. A straightforward compactness argument now reveals that, for every $n \in \mathbb{N}$ and $i \in \{ 0,1\}$, there is a finite subset $F_{i,n} \subseteq F_{i}$ such that \begin{displaymath}
	\forall x,y \in K_{i,n} \colon \quad \left\lvert d_{F_{i}}(x,y) - d_{F_{i,n}}(x,y) \right\rvert \, \leq \, 2^{-n} .
\end{displaymath} For the rest of the proof, let $\phi \colon I \to X_{0}$ be a (fixed) parametrization for $\mu_{0}$.

Consider any $n \in \mathbb{N}$. Since $d_{\mathrm{conc}}(\mathscr{D}_{0},\mathscr{D}_{1}) = 0$, we find a parametrization $\phi_{n} \colon I \to X_{0}$ for $\mu_{0}$ and a parametrization $\psi'_{n} \colon I \to X_{1}$ for $\mu_{1}$ such that \begin{displaymath}
	(\mathrm{me}_{\lambda})_{\mathrm{H}}(F_{0} \circ \phi_{n},F_{1} \circ \psi'_{n} ) \, < \, \tfrac{2^{-(n+1)}}{\vert F_{0,n} \vert + \vert F_{1,n} \vert + 1} \, .
\end{displaymath} By Lemma~\ref{lemma:reparametrization}, there exists Borel isomorphisms $g,h \colon I \to I$ with $g_{\ast}(\lambda) = h_{\ast}(\lambda) = \lambda$ and $\sup_{f \in F_{0}} \Vert (f \circ \phi \circ g) - (f \circ \phi_{n} \circ h) \Vert_{\infty} < \tfrac{2^{-(n+1)}}{\vert F_{0,n} \vert + \vert F_{1,n} \vert + 1}$. It follows that $\psi_{n} \defeq \psi_{n}' \circ h \circ g^{-1} \colon I \to X_{1}$ is a parametrization for $\mu_{1}$ and, moreover, \begin{align*}
	&(\mathrm{me}_{\lambda})_{\mathrm{H}}(F_{0} \circ \phi,F_{1} \circ \psi_{n} ) \, = \, (\mathrm{me}_{\lambda})_{\mathrm{H}}(F_{0} \circ \phi,F_{1} \circ \psi_{n}' \circ h \circ g^{-1} ) \\
	& \leq \, (\mathrm{me}_{\lambda})_{\mathrm{H}}(F_{0} \circ \phi,F_{0} \circ \phi_{n} \circ h \circ g^{-1} ) + (\mathrm{me}_{\lambda})_{\mathrm{H}}(F_{0} \circ \phi_{n} \circ h \circ g^{-1},F_{1} \circ \psi_{n}' \circ h \circ g^{-1} ) \\
	& \leq \, \sup\nolimits_{f \in F_{0}} \Vert (f \circ \phi) - (f \circ \phi_{n} \circ h \circ g^{-1}) \Vert_{\infty} + (\mathrm{me}_{\lambda})_{\mathrm{H}}(F_{0} \circ \phi_{n} \circ h \circ g^{-1},F_{1} \circ \psi_{n}' \circ h \circ g^{-1} ) \\
	& = \, \sup\nolimits_{f \in F_{0}} \Vert (f \circ \phi \circ g) - (f \circ \phi_{n} \circ h) \Vert_{\infty} + (\mathrm{me}_{\lambda})_{\mathrm{H}}(F_{0} \circ \phi_{n},F_{1} \circ \psi_{n}' ) \\
	& < \, \tfrac{2^{-n}}{\vert F_{0,n} \vert + \vert F_{1,n} \vert + 1} \, .
\end{align*} In particular, for each $f \in F_{0,n}$ there exist $h_{0,n,f} \in F_{1}$ and a Borel subset $B_{0,n,f} \subseteq I$ such that \begin{displaymath}
	\lambda\!\left(B_{0,n,f}\right) \, \geq \, 1 - \tfrac{2^{-n}}{\vert F_{0,n} \vert + \vert F_{1,n} \vert + 1}, \qquad \sup\nolimits_{t \in B_{0,n,f}}\left\vert f(\phi(t)) - h_{0,n,f}(\psi_{n}(t)) \right\vert \, \leq \, 2^{-n} ,
\end{displaymath} and for each $f' \in F_{1,n}$ there exist $h_{1,n,f'} \in F_{0}$ and a Borel subset $B_{1,n,f'} \subseteq I$ such that \begin{displaymath}
	\lambda\!\left(B_{1,n,f'}\right) \geq 1 - \tfrac{2^{-n}}{\vert F_{0,n} \vert + \vert F_{1,n} \vert + 1}, \qquad \sup\nolimits_{t \in B_{1,n,f'}}\left\vert h_{1,n,f'}(\phi(t)) - f' (\psi_{n}(t)) \right\vert \, \leq \, 2^{-n} .
\end{displaymath} Let us consider the Borel subsets \begin{displaymath}
	B_{n} \defeq \bigcap\nolimits_{f \in F_{0,n}} B_{0,n,f} \cap \bigcap\nolimits_{f' \in F_{1,n}} B_{1,n,f'} , \qquad T_{n} \defeq B_{n} \cap \phi^{-1}(K_{0,n}) \cap \psi_{n}^{-1}(K_{1,n})
\end{displaymath} of $I$. Note that $\lambda (B_{n}) \geq 1 - 2^{-n}$ and thus $\lambda (T_{n}) \geq 1 - 3\cdot 2^{-n} \geq 1 - 2^{2-n}$. Moreover, \begin{displaymath}
	\sup\nolimits_{t \in B_{n}}\!\left\lvert f(\phi(t)) - h_{0,n,f}(\psi_{n}(t)) \right\rvert \! \, \leq \, 2^{-n}
\end{displaymath} for $f \in F_{0,n}$ and $\sup\nolimits_{t \in B_{n}}\!\left\lvert h_{1,n,f'}(\phi(t)) - f'(\psi_{n}(t)) \right\rvert \leq 2^{-n}$ for $f' \in F_{1,n}$. We claim that \begin{equation}\tag{$\ast$}\label{approximation}
	\forall s,t \in T_{n} \colon \quad \left\lvert d_{F_{0}}(\phi(s),\phi(t)) - d_{F_{1}}(\psi_{n}(s),\psi_{n}(t)) \right\rvert \, < \, 2^{2-n} .
\end{equation} To prove this, let $s,t \in T_{n}$. Since $\{ s,t\} \subseteq B_{n}$, it follows that \begin{align*}
	d_{F_{0,n}}(\phi(s),\phi(t)) \, &= \, \sup\nolimits_{f \in F_{0,n}} \vert f(\phi(s)) - f(\phi(t)) \vert \\
	& \leq \, \sup\nolimits_{f \in F_{0,n}} \left\lvert h_{0,n,f}(\psi_{n}(s)) - h_{0,n,f}(\psi_{n}(t)) \right\rvert + 2^{1-n} \\
	& \leq \, d_{F_{1}}(\psi_{n}(s),\psi_{n}(t)) + 2^{1-n} .
\end{align*} Also, $\vert d_{F_{0}}(\phi(s),\phi(t)) - d_{F_{0,n}}(\phi(s),\phi(t)) \vert \leq 2^{-n}$ as $\{\phi (s), \phi(t) \} \subseteq K_{0,n}$. Thus, \begin{align*}
	d_{F_{0}}&(\phi(s),\phi(t)) - d_{F_{1}}(\psi_{n}(s),\psi_{n}(t)) \, \\
	& = \, d_{F_{0}}(\phi(s),\phi(t)) - d_{F_{0,n}}(\phi(s),\phi(t)) + d_{F_{0,n}}(\phi(s),\phi(t)) - d_{F_{1}}(\psi_{n}(s),\psi_{n}(t)) \\
	& \leq \, 2^{-n} + 2^{1-n} \, = \, 3\cdot 2^{-n} \, < \, 2^{2-n} .   
\end{align*} Similarly, we observe that \begin{align*}
	d_{F_{1,n}}(\psi_{n}(s),\psi_{n}(t)) \, &= \, \sup\nolimits_{f' \in F_{1,n}} \vert f'(\psi_{n}(s)) - f'(\psi_{n}(t)) \vert \\
		& \leq \, \sup\nolimits_{f' \in F_{1,n}} \left\lvert h_{1,n,f'}(\phi(s)) - h_{1,n,f'}(\phi(t)) \right\rvert + 2^{1-n} \\
		& \leq \, d_{F_{0}}(\phi(s),\phi(t)) + 2^{1-n} ,
\end{align*} as $\{ s,t\} \subseteq B_{n}$. Furthermore, note that $\left\lvert d_{F_{1}}(\psi_{n}(s),\psi_{n}(t)) - d_{F_{1,n}}(\psi_{n}(s),\psi_{n}(t)) \right\rvert \leq 2^{-n}$, since $\{\psi_{n} (s), \psi_{n}(t) \} \subseteq K_{1,n}$. Accordingly, \begin{align*}
	d_{F_{1}}&(\psi_{n}(s),\psi_{n}(t)) - d_{F_{0}}(\phi(s),\phi(t)) \, \\
		& = \, d_{F_{1}}(\psi_{n}(s),\psi_{n}(t)) - d_{F_{1,n}}(\psi_{n}(s),\psi_{n}(t)) + d_{F_{1,n}}(\psi_{n}(s),\psi_{n}(t)) - d_{F_{0}}(\phi(s),\phi(t)) \\
		& \leq \, 2^{-n} + 2^{1-n} \, = \, 3\cdot 2^{-n} \, < \, 2^{2-n} .   
\end{align*} This proves \eqref{approximation}.

Consider the Borel subset $T \defeq \bigcup_{m \in \mathbb{N}} \bigcap_{n \geq m} T_{n} \subseteq I$. Since $\sum_{n \in \mathbb{N}} \lambda (I\setminus T_{n}) < \infty$, the Borel-Cantelli lemma asserts that $\lambda (T) = 1$. We claim that \begin{equation}\tag{$\ast \ast$}\label{liminf}
	\forall t \in T \ \forall \epsilon > 0 \colon \quad \liminf\nolimits_{n \to \infty} \mu_{1}\!\left(B_{d_{F_{1}}}(\psi_{n}(t),\epsilon)\right) \, \geq \, \mu_{0}\!\left(B_{d_{F_{0}}}(\phi(t),\epsilon)\right) .
\end{equation} To see this, let $t \in T$ and $\epsilon > 0$. Consider any $\delta > 0$. Let $m_{0} \in \mathbb{N}$ such that $t \in \bigcap_{n \geq m_{0}} T_{n}$ and $2^{2-m_{0}} < \tfrac{\delta}{2}$. Since $\mu_{0}$ is $\sigma$-additive, there exists $m \in \mathbb{N}_{\geq m_{0}}$ such that \begin{displaymath}
	\mu_{0}\!\left(B_{d_{F_{0}}}\!\left(\phi(t),\epsilon-2^{2-m}\right)\right) \, \geq \, \mu_{0}\!\left(B_{d_{F_{0}}}(\phi(t),\epsilon)\right) - \tfrac{\delta}{2} .
\end{displaymath} Also, \eqref{approximation} implies that $T_{n} \cap \phi^{-1}\!\left(B_{d_{F_{0}}}\!\left(\phi(t),\epsilon - 2^{2-n}\right)\right) \subseteq \psi_{n}^{-1}\!\left(B_{d_{F_{1}}}(\psi_{n}(t),\epsilon)\right)$ for all $n \in \mathbb{N}$. Hence, if $n \in \mathbb{N}_{\geq m}$, then \begin{align*}
	\mu_{1}\!\left(B_{d_{F_{1}}}(\psi_{n}(t),\epsilon)\right) \, & = \, \lambda \!\left(\psi_{n}^{-1}(B_{d_{F_{1}}}(\psi_{n}(t),\epsilon ))\right) \\
	& \geq \, \lambda \!\left(T_{n} \cap \phi^{-1}\!\left(B_{d_{F_{0}}}\left(\phi(t),\epsilon - 2^{2-n}\right) \right) \right) \\
	& \geq \, 1 - \lambda (I\setminus T_{n}) - \lambda \!\left(I \setminus \phi^{-1}\!\left(B_{d_{F_{0}}}\!\left(\phi(t),\epsilon - 2^{2-n}\right) \right) \right) \\
	& = \, \lambda (T_{n}) - 1 + \mu_{0} \!\left(B_{d_{F_{0}}}\!\left(\phi(t),\epsilon - 2^{2-n} \right) \right) \\
	& \geq \, - 2^{2-n} + \mu_{0} \!\left(B_{d_{F_{0}}}(\phi(t),\epsilon)\right) - \tfrac{\delta}{2} \\
	& \geq \, \mu_{0}\left(B_{d_{F_{0}}}(\phi(t),\epsilon)\right) - \delta .
\end{align*} This proves~\eqref{liminf}.

Henceforth, let $\xi$ be a (fixed) non-principal ultrafilter on
$\mathbb{N}$. Due to~\eqref{liminf} and Corollary~\ref{corollary:dichotomy}, we
may define the map $\psi \colon T \to X_{1}, \, t \mapsto \lim_{n\to \xi} \psi_{n}(t)$. By $\xi$ being non-principal, \eqref{approximation} implies that \begin{displaymath}
	\forall s,t \in T \colon \quad d_{F_{0}}(\phi(s),\phi(t)) \, = \, d_{F_{1}}(\psi(s),\psi(t)) .
\end{displaymath} So, there is a unique map $\sigma \colon \phi(T) \to X_{1}$ such that $\sigma (\phi (t)) = \psi (t)$ for all $t \in T$. Evidently, $\phi (T)$ is dense in $X_{0}$: if $U$ is a non-empty open subset of $X_{0}$, then, as $\lambda (T) = 1$ and $\spt \mu_{0} = X_{0}$, it follows that $\lambda (T \cap \phi^{-1}(U)) = \lambda (\phi^{-1}(U)) = \mu_{0}(U) > 0$, thus $\phi(T) \cap U \ne \emptyset$. Since $\sigma \colon (\phi(T),d_{F_{0}}) \to (X_{1},d_{F_{1}})$ is isometric and $(X_{1},d_{F_{1}})$ is a complete metric space, this implies the existence of a unique isometric mapping $\bar{\sigma} \colon (X_{0},d_{F_{0}}) \to (X_{1},d_{F_{1}})$ such that $\bar{\sigma}\vert_{\phi (T)} = \sigma$, i.e., $(\bar{\sigma} \circ \phi)\vert_{T} = \psi$. In particular, $\bar{\sigma}$ is Borel measurable. We will show that \begin{equation}\tag{$\ast \ast \ast$}\label{integrals}
	\forall f \in \mathrm{Lip}_{1}^{1}\!\left(X_{1},d_{F_{1}}\right) \colon \quad \int f \, \mathrm{d}\mu_{1} \, = \, \int f \circ \bar{\sigma} \, \mathrm{d}\mu_{0} .
\end{equation} Let $f \in \mathrm{Lip}_{1}^{1}(X_{1},d_{F_{1}})$ and $\epsilon >
0$. Put $\tau \defeq \tfrac{\epsilon}{6}$. Since $1 = \lambda (T) = \sup_{m \in
  \mathbb{N}} \lambda (\bigcap_{n \geq m} T_{n})$, there exists $m \in
\mathbb{N}$ such that $\lambda (\bigcap_{n \geq m} T_{n}) \geq 1 - \tau$ and $2^{2-m} \leq \tau$. Consider the Borel set $T^{\ast}_{m} \defeq \bigcap_{n \geq m} T_{n} \subseteq I$. Since $\phi(T^{\ast}_{m})$ is contained in $K_{0,m}$ and thus $d_{F_{0}}$-precompact, there exists a finite subset $E \subseteq T^{\ast}_{m}$ such that $\phi(T^{\ast}_{m}) \subseteq \bigcup_{s \in E} B_{d_{F_{0}}}(\phi(s),\tau)$. By definition of $\psi$ and non-principality of $\xi$, \begin{displaymath}
	M \, \defeq \, \! \left\{ n \in \mathbb{N}_{\geq m} \left\vert \, \forall s \in E \colon \, d_{F_{1}}(\psi_{n}(s),\psi(s)) < \tau \right\} \! \, \in \, \xi . \right.
\end{displaymath} In particular, $M \ne \emptyset$. Pick any $n \in M$. Then $\sup_{t \in T^{\ast}_{m}} \vert f(\psi_{n}(t)) - f(\psi(t)) \vert \leq 4\tau$. Indeed, if $t \in T^{\ast}_{m}$, then there exists $s \in E$ such that $d_{F_{0}}(\phi(s),\phi(t)) < \tau$, whence \begin{align*}
	\vert f(\psi_{n}(t)) - f(\psi (t)) \vert \, & \leq \, d_{F_{1}}(\psi_{n}(t),\psi(t)) \\
		& \leq \, d_{F_{1}}(\psi_{n}(t),\psi_{n}(s)) + d_{F_{1}}(\psi_{n}(s),\psi(s)) + d_{F_{1}}(\psi(s),\psi(t)) \\
		& \leq \, d_{F_{0}}(\phi(t),\phi(s)) + 2^{2-n} + \tau + d_{F_{0}}(\phi(s),\phi(t)) \, \leq \, 4\tau 
\end{align*} by~\eqref{approximation}. We conclude that \begin{align*}
	\left\lvert \int f \circ \bar{\sigma} \, \mathrm{d}\mu_{0} - \int f \, \mathrm{d}\mu_{1} \right\rvert \, & = \, \left\lvert \int f \circ \bar{\sigma} \circ \phi \, \mathrm{d}\lambda - \int f \circ \psi_{n} \, \mathrm{d}\lambda \right\rvert \\
		& \leq \, \int_{T^{\ast}_{m}} \vert f(\psi(t)) - f(\psi_{n}(t)) \vert \, \mathrm{d}\lambda (t) + 2\lambda(I\setminus T^{\ast}_{m}) \\
		& \leq \, 4\tau + 2\tau \, = \, \epsilon ,
\end{align*} proving~\eqref{integrals}. As
$\mathrm{Lip}_{1}^{1}(X_{1},d_{F_{1}})$ spans a $\Vert \cdot
\Vert_{\infty}$-dense linear subspace of the Banach space of uniformly
continuous bounded real-valued functions on $X_{1}$ \citep[Lemma
5.20(2)]{pachl}, assertion~\eqref{integrals} implies that $\int f \,
\mathrm{d}\mu_{1} =  \int f \circ \bar{\sigma} \, \mathrm{d}\mu_{0}$ for
every uniformly continuous bounded function $f\colon X_{1}\to
\mathbb{R}$. Since both $\bar{\sigma}_{\ast}(\mu_{0})$ and $\mu_{1}$ are regular
Borel probability measures on $X_{1}$, it follows that $\bar{\sigma}_{\ast}(\mu_{0}) = \mu_{1}$ \citep[Theorem 5.3]{pachl}.

It only remains to verify that $F_{1} \circ \bar{\sigma} \subseteq \overline{F_{0}}$. For this, let $f \in F_{1}$. For each $n \in \mathbb{N}$, since $(\mathrm{me}_{\lambda})_{\mathrm{H}}(F_{0} \circ \phi,F_{1} \circ \psi_{n} ) \, < \, 2^{-n}$, we find some $f_{n} \in F_{0}$ as well as a Borel subset $Q_{n} \subseteq I$ such that $\sup_{t \in Q_{n}} \vert f_{n}(\phi(t)) - f(\psi_{n}(t)) \vert \leq 2^{-n}$ and $\lambda (Q_{n}) \geq 1 - 2^{-n}$. Since $\sum_{n \in \mathbb{N}} \lambda (I\setminus Q_{n}) < \infty$, the Borel-Cantelli lemma ensures that $\lambda (Q) = 1$ for the Borel set $Q \defeq \bigcup_{m \in \mathbb{N}} \bigcap_{n \geq m} Q_{n} \subseteq I$. Consequently, $\lambda (T \cap Q) = 1$. It follows that $\phi (T \cap Q)$ is dense in $X_{0}$: again, if $U$ is a non-empty open subset of $X_{0}$, then \begin{displaymath}
	\lambda (T \cap Q \cap \phi^{-1}(U)) \, = \, \lambda (\phi^{-1}(U)) \, = \, \mu_{0}(U) \, > \, 0
\end{displaymath} as $\spt \mu_{0} = X_{0}$, and therefore $\phi(T \cap Q) \cap U \ne \emptyset$. Furthermore, by definition of $\psi$ and non-principality of $\xi$, our choice of $(f_{n})_{n \in \mathbb{N}}$ and $(Q_{n})_{n \in \mathbb{N}}$ entails that \begin{displaymath}
	\forall t \in T \cap Q \colon \qquad f_{n}(\phi(t)) \longrightarrow f(\psi(t)) \quad (n \to \xi) .
\end{displaymath} It readily follows that \begin{displaymath}
	\forall x \in X_{0} \colon \qquad f_{n}(x) \longrightarrow f(\bar{\sigma}(x)) \quad (n \to \xi) .
\end{displaymath} Indeed, if $x \in X_{0}$ and $\epsilon > 0$, then density of $\phi(T \cap Q)$ in $X_{0}$ implies the existence of $t \in T \cap Q$ with $d_{F_{0}}(x,\phi(t)) < \tfrac{\epsilon}{3}$, and so \begin{displaymath}
	\vert f(\psi(t)) - f(\bar{\sigma}(x)) \vert \, \leq \, d_{F_{1}}(\psi(t),\bar{\sigma}(x)) \, = \, d_{F_{1}}(\bar{\sigma}(\phi(t)),\bar{\sigma}(x)) \, = \, d_{F_{0}}(\phi(t),x) \leq \tfrac{\epsilon}{3} ,
\end{displaymath} thus \begin{align*}
	\vert f_{n}(x) - f(\bar{\sigma}(x)) \vert \, &\leq \vert f_{n}(x) - f_{n}(\phi(t)) \vert + \vert f_{n}(\phi(t)) - f(\psi(t)) \vert + \vert f(\psi(t)) - f(\bar{\sigma}(x)) \vert \\
	& \leq \, \epsilon
\end{align*} for all $n \in \left\{ m \in \mathbb{N} \left\vert \, \vert f_{m}(\phi(t)) - f(\psi(t)) \vert < \tfrac{\epsilon}{3} \right\} \in \xi \right.$. Hence, $f \circ \bar{\sigma} \in \overline{F_{0}}$ as desired. This shows that $\mathscr{D}_{1} \preceq \mathscr{D}_{0}$, which completes the proof. \end{proof}

The metric $d_{\mathrm{conc}}$ induces a topology on ${\mathscrbf{D}}$, the \emph{concentration topology}. The authors do not know whether the metric space $({\mathscrbf{D}},d_{\mathrm{conc}})$ is separable.

\begin{definition}[concentration of data] A sequence of geometric data sets $(\mathscr{D}_{n})_{n \in \mathbb{N}}$ is said to \emph{concentrate to} a geometric data set $\mathscr{D}$ if $d_{\mathrm{conc}}(\mathscr{D}_{n},\mathscr{D}) \longrightarrow 0$ as $n \to \infty$. \end{definition}

The concentration topology is a conceptual extension of the phenomenon of measure concentration. We refer to the latter as the \emph{L\'evy property}.

\begin{definition} A sequence of geometric data sets $\mathscr{D}_{n} = (X_{n},F_{n},\mu_{n})$ $(n \in \mathbb{N})$ is said to have the \emph{L\'evy property} or to be a \emph{L\'evy family}, resp., if \begin{displaymath}
	\sup\nolimits_{f \in F_{n}} \inf\nolimits_{c \in \mathbb{R}} \mathrm{me}_{\mu_{n}}(f,c) \, \longrightarrow \, 0 \qquad (n \to \infty) .
\end{displaymath} \end{definition}

The subsequent proposition, which is completely analogous to the corresponding result for $mm$-spaces~\citep[Lemma~5.6]{ShioyaBook}, describes the connection between the Lévy property and observable distance.

\begin{proposition}\label{proposition:levy.vs.concentration} For every geometric data set $\mathscr{D} = (X,F,\mu)$, \begin{displaymath}
    d_{\mathrm{conc}}( \mathscr{D}_{\#},\bot) \, = \, \sup\nolimits_{f \in F} \inf\nolimits_{c \in \mathbb{R}} \mathrm{me}_{\mu}(f,c)
\end{displaymath} where $\mathscr{D}_{\#} \defeq (X,F \cup \mathbb{R},\mu)$ and $\bot \defeq (\{ \emptyset \},\mathbb{R},\nu_{\{ \emptyset \}})$. In particular, a sequence of geometric data sets $(\mathscr{D}_{n})_{n \in \mathbb{N}}$ has the L\'evy property if and only if $((\mathscr{D}_{n})_{\#})_{n \in \mathbb{N}}$ concentrates to the (trivial) geometric data set $\bot$. \end{proposition}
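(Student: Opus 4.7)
The plan is to unwind the definition of $d_{\mathrm{conc}}$ directly. Since $\bot$ is supported on a one-point space, the only parametrization $\psi \colon I \to \{\emptyset\}$ of $\nu_{\{\emptyset\}}$ is the constant map, and consequently $\mathbb{R}\circ \psi$ equals the set $C$ of all constant functions $I\to \mathbb{R}$. For any parametrization $\phi \colon I \to X$ of $\mu$, one has $(F \cup \mathbb{R})\circ \phi = (F\circ\phi) \cup C$. Hence the entire infimum in $d_{\mathrm{conc}}(\mathscr{D}_{\#},\bot)$ reduces to evaluating $(\mathrm{me}_{\lambda})_{\mathrm{H}}((F\circ\phi)\cup C,\,C)$ for a single arbitrary $\phi$ (once we show it does not depend on $\phi$).

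Next, I would split the Hausdorff distance into its two inclusion directions. The direction $C \subseteq (F\circ\phi)\cup C$ is immediate, so this half contributes~$0$. For the reverse direction, each constant function in $(F\circ\phi)\cup C$ sits at distance $0$ from itself in $C$, so the supremum collapses to $\sup_{f\in F}\inf_{c\in \mathbb{R}}\mathrm{me}_{\lambda}(f\circ\phi,c)$. Using that $\phi_{\ast}(\lambda)=\mu$, a short computation from the definition of $\mathrm{me}$ gives $\mathrm{me}_{\lambda}(f\circ\phi,c) = \mathrm{me}_{\mu}(f,c)$ for every $f \in F$ and $c \in \mathbb{R}$. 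Therefore
\begin{displaymath}
(\mathrm{me}_{\lambda})_{\mathrm{H}}((F\cup \mathbb{R})\circ\phi,\, C) \, = \, \sup\nolimits_{f\in F}\inf\nolimits_{c\in \mathbb{R}}\mathrm{me}_{\mu}(f,c),
\end{displaymath}
independently of $\phi$. Taking the infimum over $\phi$ and $\psi$ (the latter being trivial) yields the claimed equality.

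The ``in particular'' clause is then a direct translation: by the formula just proved, $d_{\mathrm{conc}}((\mathscr{D}_{n})_{\#},\bot) = \sup_{f \in F_{n}}\inf_{c \in \mathbb{R}}\mathrm{me}_{\mu_{n}}(f,c)$ for every $n$, so the left-hand side tending to $0$ is the same as the L\'evy property for $(\mathscr{D}_{n})_{n \in \mathbb{N}}$.

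No step presents a real obstacle; the only care needed is verifying that $\mathrm{me}_{\lambda}(f\circ\phi,c)=\mathrm{me}_{\mu}(f,c)$ via the change-of-variables formula $\lambda(\phi^{-1}(A))=\mu(A)$ applied to the Borel set $A=\{x\in X \mid |f(x)-c|>\epsilon\}$, and observing that the parametrization of $\bot$ plays no role. The possibly subtlest point is checking that $\mathscr{D}_{\#}$ is itself a geometric data set in the sense of Definition~\ref{defi:ds}, which holds because adding constants to $F$ leaves $d_{F\cup\mathbb{R}}=d_{F}$ unchanged and therefore does not affect separability, completeness, or the support condition.
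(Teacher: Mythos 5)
Your proposal is correct and follows essentially the same route as the paper: the paper likewise evaluates $(\mathrm{me}_{\lambda})_{\mathrm{H}}((F\cup\mathbb{R})\circ\phi,\mathbb{R}\circ\psi)$ directly as $\sup_{f\in F}\inf_{c\in\mathbb{R}}\mathrm{me}_{\lambda}(f\circ\phi,c)=\sup_{f\in F}\inf_{c\in\mathbb{R}}\mathrm{me}_{\mu}(f,c)$ and notes the ``in particular'' clause is immediate. You merely spell out the details (the one-sided collapse of the Hausdorff distance, the change of variables, independence of $\phi$) that the paper leaves implicit.
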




\section{Observable Diameters of Data}\label{sec:obsdiam} We are going to adapt
Gromov's concept of \emph{observable diameter} \citep[Chapter~3$\tfrac{1}{2}$]{Gromov99} to our setup of data sets and study its behavior with respect to the concentration topology. This is a necessary preparatory step towards Section~\ref{section:intrinsic.dimension}.

\begin{definition}[observable diameter]\label{definition:observable.diameter} Let $\alpha \geq 0$. The \emph{$\alpha$-partial diameter} of a Borel probability measure $\nu$ on $\mathbb{R}$ is defined as \begin{displaymath}
    \mathrm{PartDiam}(\nu,1-\alpha) \defeq \inf \{ \diam (B) \mid B \subseteq \mathbb{R} \text{ Borel, } \nu (B) \geq 1-\alpha \} \, \in \, [0,\infty] .
\end{displaymath} We define the \emph{$\alpha$-observable diameter} of a geometric data set $\mathscr{D} = (X,F,\mu)$ to be \begin{displaymath}
    \mathrm{ObsDiam}(\mathscr{D};-\alpha) \defeq \sup \{ \mathrm{PartDiam}(f_{\ast}(\mu),1-\alpha) \mid f \in F \} \, \in \, [0,\infty] .
\end{displaymath} \end{definition}

\begin{remark}\label{remark:partial.diameter} Let $\nu$ be a Borel probability measure on $\mathbb{R}$ and let $\alpha > 0$. For any $x \in X$ there exists $n \in \mathbb{N}_{\geq 1}$ with $\nu \!\left(B_{d_{\mathbb{R}}}(x,n)\right) \geq 1-\alpha$, which readily implies that \begin{displaymath}
	\mathrm{PartDiam}(\nu,1-\alpha) \, \leq \, 2n .
\end{displaymath} In particular, $\mathrm{PartDiam}(\nu,1-\alpha) < \infty$. \end{remark}

As is easily seen, observable diameters are invariant under isomorphisms of geometric data sets,
which means that  $\mathrm{ObsDiam}(\mathscr{D}_{0};-\alpha) =
\mathrm{ObsDiam}(\mathscr{D}_{1};-\alpha)$ for any pair of isomorphic geometric
data sets $\mathscr{D}_{0} \cong \mathscr{D}_{1}$ and $\alpha \geq
0$. Furthermore, we have the following continuity with respect to  $d_{\mathrm{conc}}$.

\begin{lemma}\label{lemma:estimate} Let $\delta \defeq d_{\mathrm{conc}}(\mathscr{D}_{0},\mathscr{D}_{1})$ for geometric data sets $\mathscr{D}_{i} = (X_{i},F_{i},\mu_{i})$ $(i \in \{ 0,1 \})$. For all $\tau > \delta$ and $\alpha > 0$, \begin{displaymath}
	\mathrm{ObsDiam}(\mathscr{D}_{1};-(\alpha + \tau)) \, \leq \, \mathrm{ObsDiam}(\mathscr{D}_{0};-\alpha) + 2\tau .
\end{displaymath} \end{lemma}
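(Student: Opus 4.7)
The plan is to pull both sides back to the unit interval via parametrizations, transfer a set of near-optimal $\alpha$-partial diameter from $\mathscr{D}_0$ to $\mathscr{D}_1$, and pay the cost in two places: the $\tau$-deficit of measure (which inflates $\alpha$ to $\alpha+\tau$) and the $\tau$-thickening of the Borel witness (which inflates the diameter by $2\tau$).

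Concretely, I would proceed as follows. First, fix $\eta>0$ and $\tau'\in(\delta,\tau)$. By definition of $d_{\mathrm{conc}}$, pick parametrizations $\phi_0$ of $\mu_0$ and $\phi_1$ of $\mu_1$ with $(\mathrm{me}_{\lambda})_{\mathrm{H}}(F_0\circ\phi_0, F_1\circ\phi_1)<\tau'$. Let $f_1\in F_1$ be arbitrary; then there exist $f_0\in F_0$ and $\epsilon<\tau'$ so that $\mathrm{me}_{\lambda}(f_0\circ\phi_0,f_1\circ\phi_1)\le \epsilon$, hence a Borel set $T\subseteq I$ with $\lambda(T)\ge 1-\epsilon>1-\tau'$ on which $|f_0(\phi_0(t))-f_1(\phi_1(t))|\le\epsilon<\tau'$.

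Next, set $D\defeq\mathrm{ObsDiam}(\mathscr{D}_0;-\alpha)$ and choose a Borel $B_0\subseteq\mathbb{R}$ with $(f_0)_\ast(\mu_0)(B_0)\ge 1-\alpha$ and $\diam(B_0)\le D+\eta$ (using \cref{remark:partial.diameter} together with the definition of $\mathrm{PartDiam}$). Then $A_0\defeq(f_0\circ\phi_0)^{-1}(B_0)\subseteq I$ satisfies $\lambda(A_0)\ge 1-\alpha$, so
\begin{displaymath}
    \lambda(T\cap A_0)\,\ge\,\lambda(T)+\lambda(A_0)-1\,\ge\,1-\alpha-\tau'.
\end{displaymath}
On $T\cap A_0$ we have $f_1(\phi_1(t))\in B_0^{\tau'}\defeq\{y\in\mathbb{R}\mid \dist(y,B_0)\le\tau'\}$, a Borel set of diameter at most $\diam(B_0)+2\tau'\le D+\eta+2\tau'$. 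Hence
\begin{displaymath}
    (f_1)_\ast(\mu_1)\bigl(B_0^{\tau'}\bigr)\,=\,\lambda\bigl(\phi_1^{-1}(f_1^{-1}(B_0^{\tau'}))\bigr)\,\ge\,\lambda(T\cap A_0)\,\ge\,1-\alpha-\tau',
\end{displaymath}
which gives $\mathrm{PartDiam}((f_1)_\ast(\mu_1),1-\alpha-\tau')\le D+\eta+2\tau'$.

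Finally, monotonicity of $\mathrm{PartDiam}(\nu,\,\cdot\,)$ (the infimum is over a larger family when the mass requirement is weakened) yields
\begin{displaymath}
    \mathrm{PartDiam}((f_1)_\ast(\mu_1),1-(\alpha+\tau))\,\le\,\mathrm{PartDiam}((f_1)_\ast(\mu_1),1-\alpha-\tau')\,\le\,D+\eta+2\tau'\,<\,D+\eta+2\tau.
\end{displaymath}
Taking the supremum over $f_1\in F_1$ and then letting $\eta\downarrow 0$ gives the claim. The only bit of genuine care is the $\tau$--versus--$\tau'$ juggling (needed because $(\mathrm{me}_\lambda)_{\mathrm{H}}$ is not assumed to be attained at $\delta$, and because $\mathrm{me}_\lambda$ uses a single $\epsilon$ for both the exceptional measure and the pointwise slack); once that is organized, the estimate splits cleanly into ``lose $\tau$ in measure, lose $2\tau$ in diameter.''
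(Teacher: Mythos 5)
Your proposal is correct and follows essentially the same route as the paper's proof: pick parametrizations witnessing the observable distance within the allowed slack, match $f_{1}\in F_{1}$ to some $f_{0}\in F_{0}$ via $\mathrm{me}_{\lambda}$, take a near-optimal Borel witness for $\mathrm{PartDiam}((f_{0})_{\ast}(\mu_{0}),1-\alpha)$, and thicken it to lose $\tau$ in measure and $2\tau$ in diameter. The only cosmetic difference is that the paper absorbs the approximation slack multiplicatively via a factor $\kappa>1$ (and works directly with $\tau>\delta$), whereas you use an additive $\eta$ together with an auxiliary $\tau'\in(\delta,\tau)$; both are sound.
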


\begin{proof} Let $\alpha > 0$. It suffices check that \begin{displaymath}
	\forall \kappa > 1 \colon \quad \mathrm{ObsDiam}(\mathscr{D}_{1};-(\alpha + \tau)) \, \leq \, (\mathrm{ObsDiam}(\mathscr{D}_{0};-\alpha) + 2\tau ) \cdot \kappa  .
\end{displaymath} Let $\kappa > 1$. Choose parametrizations, $\phi_{0}$ for $\mu_{0}$ and $\phi_{1}$ for $\mu_{1}$, such that \begin{displaymath}
	(\mathrm{me}_{\lambda})_{\mathrm{H}}(F_{0} \circ \phi_{0},F_{1} \circ \phi_{1}) \, < \, \tau .
\end{displaymath} Let $f_{1} \in F_{1}$. Then there is some $f_{0} \in F_{0}$ such that $\mathrm{me}_{\lambda}(f_{0}\circ \phi_{0},f_{1} \circ \phi_{1}) < \tau$. Fix any Borel subset $B \subseteq \mathbb{R}$ with $\diam (B) \leq \mathrm{ObsDiam}(\mathscr{D}_{0};-\alpha) \cdot \kappa$ and $(f_{0})_{\ast}(\mu_{0})(B) \geq 1-\alpha$. Considering the open subset $C \defeq B_{d_{\mathbb{R}}}(B,\tau \kappa) \subseteq \mathbb{R}$, we note that \begin{align*}
  (f_{1})_{\ast}(&\mu_{1})(C) \, = \, (f_{1} \circ \phi_{1})_{\ast}(\lambda)(C) \, = \, \lambda ((f_{1} \circ \phi_{1})^{-1} (C)) \\
                 &\geq \, \lambda ((f_{0} \circ \phi_{0})^{-1} (B)) - \tau \, = \, (f_{0} \circ \phi_{0})_{\ast}(\lambda)(B) - \tau \, = \, (f_{0})_{\ast}(\mu_{0})(B) - \tau \\
                 &\geq \, 1 - \alpha - \tau \, = \, 1 - (\alpha +\tau )
\end{align*} and $\diam (C) \leq \diam (B) + 2 \tau \kappa \leq (\mathrm{ObsDiam}(\mathscr{D}_{0};-\alpha) + 2\tau )\kappa$, which proves that \begin{displaymath}
	\mathrm{PartDiam}((f_{1})_{\ast}(\mu_{1}),1-(\alpha + \tau)) \leq (\mathrm{ObsDiam}(\mathscr{D}_{0};-\alpha) + 2\tau ) \kappa . \qedhere
\end{displaymath} \end{proof}

In Proposition~\ref{proposition:observable.diameters.continuous} below, we introduce a
quantity for geometric data sets, which is well defined by the following fact.

\begin{remark}\label{remark:antitone} If $\mathscr{D}$ is any geometric data set, then $[0,\infty) \to [0,\infty], \, \alpha \mapsto \mathrm{ObsDiam}(\mathscr{D};-\alpha)$ is antitone, thus Borel measurable. \end{remark}


\begin{proposition}\label{proposition:observable.diameters.continuous} The map $\Delta \colon {\mathscrbf{D}} \to [0,1]$ defined by \begin{displaymath}
	\Delta (\mathscr{D}) \defeq \int_{0}^{1} \mathrm{ObsDiam}(\mathscr{D};-\alpha) \wedge 1 \, \mathrm{d}\alpha \qquad (\mathscr{D} \in {\mathscrbf{D}})
\end{displaymath} is Lipschitz with respect to $d_{\mathrm{conc}}$. \end{proposition}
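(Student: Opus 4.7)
The plan is to show that $\Delta$ is $3$-Lipschitz by leveraging \cref{lemma:estimate} and performing a change of variables in the defining integral. Fix $\mathscr{D}_0, \mathscr{D}_1 \in \mathscrbf{D}$, set $\delta \defeq d_{\mathrm{conc}}(\mathscr{D}_0, \mathscr{D}_1)$, and write $h_i(\alpha) \defeq \mathrm{ObsDiam}(\mathscr{D}_i; -\alpha) \wedge 1$ so that $\Delta(\mathscr{D}_i) = \int_0^1 h_i(\alpha)\, \mathrm{d}\alpha$. The target is the bound $\lvert \Delta(\mathscr{D}_1) - \Delta(\mathscr{D}_0) \rvert \leq 3\delta$, which by the symmetry of $d_{\mathrm{conc}}$ reduces to showing $\Delta(\mathscr{D}_1) - \Delta(\mathscr{D}_0) \leq 3\tau$ for every $\tau > \delta$.

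The first step is to upgrade \cref{lemma:estimate} to the truncated functions $h_i$. For $\tau > \delta$ and $\alpha > 0$, the lemma gives $\mathrm{ObsDiam}(\mathscr{D}_1; -(\alpha+\tau)) \leq \mathrm{ObsDiam}(\mathscr{D}_0; -\alpha) + 2\tau$. A short case analysis (treating the cases $\mathrm{ObsDiam}(\mathscr{D}_0;-\alpha) \geq 1$ and $< 1$ separately, using that $h_1 \leq 1$ automatically) then yields the cleaner inequality
\begin{equation*}
  h_1(\alpha + \tau) \, \leq \, h_0(\alpha) + 2\tau \qquad (\alpha > 0, \, \tau > \delta).
\end{equation*}
This is the workhorse; it replaces the unbounded estimate of \cref{lemma:estimate} with one compatible with the truncation by $1$.

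Next, I would split the integral at $\tau$, assuming without loss of generality that $\tau \leq 1$ (otherwise $\Delta(\mathscr{D}_1) \leq 1 \leq 3\tau$ is trivial). Since $h_1 \leq 1$, the piece $\int_0^\tau h_1(\alpha)\, \mathrm{d}\alpha$ is at most $\tau$. For the remaining piece, the substitution $\beta = \alpha - \tau$ combined with the displayed inequality gives
\begin{equation*}
  \int_\tau^1 h_1(\alpha)\, \mathrm{d}\alpha \, = \, \int_0^{1-\tau} h_1(\beta + \tau)\, \mathrm{d}\beta \, \leq \, \int_0^{1-\tau} \left( h_0(\beta) + 2\tau \right) \mathrm{d}\beta \, \leq \, \Delta(\mathscr{D}_0) + 2\tau.
\end{equation*}
Adding the two pieces gives $\Delta(\mathscr{D}_1) \leq \Delta(\mathscr{D}_0) + 3\tau$, and letting $\tau \downarrow \delta$ and interchanging the roles of $\mathscr{D}_0$ and $\mathscr{D}_1$ yields $\lvert \Delta(\mathscr{D}_0) - \Delta(\mathscr{D}_1)\rvert \leq 3\delta$, as required.

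The routine verification that the integrand is well defined is already handled by \cref{remark:partial.diameter} and \cref{remark:antitone}. The only genuinely delicate point is the passage from \cref{lemma:estimate} to its truncated form; everything else is a straightforward change of variables. I expect no further obstacles.
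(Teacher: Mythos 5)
Your proposal is correct and follows essentially the same route as the paper's proof: split the integral at $\tau$, bound the initial piece by $\tau$ using $h_1 \leq 1$, shift variables, and apply \cref{lemma:estimate} to obtain the $3$-Lipschitz bound. Your explicit case analysis justifying the truncated inequality $h_1(\alpha+\tau) \leq h_0(\alpha) + 2\tau$ is a small point the paper leaves implicit, but it is the same argument.
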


\begin{proof} Let $\delta \defeq d_{\mathrm{conc}}(\mathscr{D}_{0},\mathscr{D}_{1})$ for geometric data sets $\mathscr{D}_{i} = (X_{i},F_{i},\mu_{i})$ $(i \in \{ 0,1 \})$. Without loss of generality, we assume that $\delta < 1$. For every $\tau \in (\delta,1)$, \begin{align*}
	\Delta (\mathscr{D}_{1}) \, & \leq \, \tau + \int_{\tau}^{1} \mathrm{ObsDiam}(\mathscr{D}_{1};-\alpha) \wedge 1 \, \mathrm{d}\alpha 
	\\
	& = \, \tau + \int_{0}^{1-\tau} \mathrm{ObsDiam}(\mathscr{D}_{1};-(\alpha + \tau)) \wedge 1 \, \mathrm{d}\alpha \\
	&\leq \, 3 \tau + \int_{0}^{1-\tau} \mathrm{ObsDiam}(\mathscr{D}_{0};-\alpha) \wedge 1 \, \mathrm{d}\alpha \, \leq \, 3\tau + \Delta (\mathscr{D}_{0})
\end{align*} due to Lemma~\ref{lemma:estimate}. Hence, $\Delta (\mathscr{D}_{1}) \leq \Delta (\mathscr{D}_{0}) + 3\delta$. Thanks to symmetry, it readily follows that $\vert \Delta (\mathscr{D}_{0}) - \Delta (\mathscr{D}_{1}) \vert \leq 3\delta$, i.e., $\Delta$ is $3$-Lipschitz with respect to $d_{\mathrm{conc}}$. \end{proof}

Observable diameters reflect the Lévy property in a natural manner.

\begin{proposition} \label{proposition:observable.diameters.levy} Let $\mathscr{D}_{n} = (X_{n},F_{n},\mu_{n})$ $(n \in \mathbb{N})$ be a sequence of geometric data sets. Then the following are equivalent. \begin{description}
  \item[$(1)\,$] $(\mathscr{D}_{n})_{n \in \mathbb{N}}$ has the L\'evy property.
  \item[$(2)\,$] $\lim_{n \to \infty} \mathrm{ObsDiam}(\mathscr{D}_{n};-\alpha) = 0$ for every $\alpha > 0$.
  \item[$(3)\,$] $\lim_{n \to \infty} \Delta (\mathscr{D}_{n}) = 0$.
\end{description} \end{proposition}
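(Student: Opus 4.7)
The plan is to establish the cyclic chain $(1) \Rightarrow (2) \Rightarrow (3) \Rightarrow (2) \Rightarrow (1)$, but more naturally I will split it as $(1) \Leftrightarrow (2)$ and $(2) \Leftrightarrow (3)$, since the nontrivial bridge is between the Lévy property and the partial diameters, while the equivalence between pointwise decay and integral decay is a straightforward monotonicity/dominated-convergence argument.

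For $(1) \Leftrightarrow (2)$, the key observation is a clean quantitative dictionary between $\mathrm{me}_{\mu}(f,c)$ and $\mathrm{PartDiam}(f_{\ast}(\mu),1-\alpha)$. Concretely, if $\inf_{c}\mathrm{me}_{\mu}(f,c) < \epsilon$, then there exists $c \in \mathbb{R}$ with $\mu(\{x \mid |f(x)-c| > \epsilon\}) \leq \epsilon$, so $f_{\ast}(\mu)([c-\epsilon,c+\epsilon]) \geq 1-\epsilon$ and hence $\mathrm{PartDiam}(f_{\ast}(\mu),1-\epsilon) \leq 2\epsilon$. Conversely, if $\mathrm{PartDiam}(f_{\ast}(\mu),1-\alpha) \leq \ell$, then there is a Borel $B \subseteq \mathbb{R}$ with $\diam(B) \leq \ell$ and $f_{\ast}(\mu)(B) \geq 1-\alpha$; picking $c$ to be any point in $B$ yields $\mathrm{me}_{\mu}(f,c) \leq \max(\ell,\alpha)$. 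Given this, the forward direction of $(1) \Rightarrow (2)$ is: fix $\alpha > 0$; for every $\epsilon \in (0,\alpha)$, eventually $\sup_{f \in F_{n}}\inf_{c}\mathrm{me}_{\mu_{n}}(f,c) < \epsilon$, which forces $\mathrm{ObsDiam}(\mathscr{D}_{n};-\alpha) \leq \mathrm{ObsDiam}(\mathscr{D}_{n};-\epsilon) \leq 2\epsilon$. For $(2) \Rightarrow (1)$, given $\epsilon > 0$, eventually $\mathrm{ObsDiam}(\mathscr{D}_{n};-\epsilon) \leq \epsilon$, and then the second half of the dictionary gives $\inf_{c}\mathrm{me}_{\mu_{n}}(f,c) \leq \epsilon$ uniformly in $f \in F_{n}$.

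For $(2) \Rightarrow (3)$, invoke the dominated convergence theorem: the integrand $\mathrm{ObsDiam}(\mathscr{D}_{n};-\alpha) \wedge 1$ is dominated by the constant $1$ on $[0,1]$ (measurability of the map $\alpha \mapsto \mathrm{ObsDiam}(\mathscr{D}_{n};-\alpha)$ is furnished by \cref{remark:antitone}) and, under (2), tends to $0$ pointwise for every $\alpha > 0$, so the integral converges to $0$. For $(3) \Rightarrow (2)$, exploit the antitonicity of $\alpha \mapsto \mathrm{ObsDiam}(\mathscr{D}_{n};-\alpha)$ from \cref{remark:antitone}: for any fixed $\alpha_{0} \in (0,1)$,
\begin{displaymath}
\alpha_{0} \cdot \bigl(\mathrm{ObsDiam}(\mathscr{D}_{n};-\alpha_{0}) \wedge 1\bigr) \, \leq \, \int_{0}^{\alpha_{0}} \mathrm{ObsDiam}(\mathscr{D}_{n};-\alpha) \wedge 1 \, \mathrm{d}\alpha \, \leq \, \Delta(\mathscr{D}_{n}) ,
\end{displaymath}
so $\mathrm{ObsDiam}(\mathscr{D}_{n};-\alpha_{0}) \wedge 1 \to 0$, which forces $\mathrm{ObsDiam}(\mathscr{D}_{n};-\alpha_{0}) \to 0$ (the truncation at $1$ becomes inactive for $n$ large). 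The case $\alpha_{0} \geq 1$ follows by antitonicity from any $\alpha \in (0,1)$.

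I do not expect any real obstacle here: the only slightly delicate point is keeping track of the factor of $2$ (and the interplay between the "height" and "width" thresholds) in the translation between $\mathrm{me}_{\mu}$ and $\mathrm{PartDiam}$, which is why I would formalise the dictionary first as two explicit inequalities before running the $\varepsilon$-chase. Once that is in place, the remaining steps are essentially a dominated-convergence application and an elementary monotonicity argument.
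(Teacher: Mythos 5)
Your proposal is correct and follows essentially the same route as the paper: the equivalence $(1)\Leftrightarrow(2)$ via the quantitative dictionary between $\mathrm{me}_{\mu}$ and $\mathrm{PartDiam}$ (the paper uses $\tfrac{\epsilon}{4}$ and balls of radius $\tfrac{\epsilon}{2}$ to land on $\epsilon$ rather than $2\epsilon$, a cosmetic difference), $(2)\Rightarrow(3)$ by dominated convergence, and $(3)\Rightarrow(2)$ by the antitonicity bound $\Delta(\mathscr{D})\geq(\alpha\wedge 1)\cdot(\mathrm{ObsDiam}(\mathscr{D};-\alpha)\wedge 1)$. The only point to tidy in writing it up is that the infimum defining $\mathrm{PartDiam}$ need not be attained, so the second half of your dictionary should carry a $\delta$-slack exactly as the paper does.
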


\begin{proof} (1)$\Longrightarrow$(2). Let $\alpha > 0$. To prove that $\mathrm{ObsDiam}(\mathscr{D}_{n};-\alpha) \longrightarrow 0$ as $n \to \infty$, let $\epsilon > 0$. By assumption, there exists $m \in \mathbb{N}$ such that \begin{displaymath}
	\forall n \in \mathbb{N}_{\geq m} \colon \qquad \sup\nolimits_{f \in F_{n}} \inf\nolimits_{c \in \mathbb{R}} \mathrm{me}_{\mu_{n}}(f,c) \, < \, \min \left\{ \tfrac{\epsilon}{4},\alpha \right\} .
\end{displaymath} We show that $\mathrm{ObsDiam}(\mathscr{D}_{n};-\alpha) \leq \epsilon$ for all $n \in \mathbb{N}_{n\geq m}$. Let $n \in \mathbb{N}_{\geq m}$. For every $f \in F_{n}$, there exists $c \in \mathbb{R}$ with $\mathrm{me}_{\mu_{n}}(f,c) < \min \left\{ \tfrac{\epsilon}{4},\alpha \right\}$, whence \begin{displaymath}
	f_{\ast}(\mu_{n})(B) \, = \, \mu_{n}\!\left(f^{-1}(B)\right) \, \geq \, 1-\alpha
\end{displaymath} for the Borel set $B \defeq B_{d_{\mathbb{R}}}\! \left(c,\tfrac{\epsilon}{2}\right) \subseteq \mathbb{R}$. Also, $\diam (B) \leq \epsilon$. Therefore, \begin{displaymath}
	\mathrm{PartDiam}(f_{\ast}(\mu_{n}),1-\alpha) \, \leq \, \epsilon
\end{displaymath} for all $f \in F_{n}$, that is, $\mathrm{ObsDiam}(\mathscr{D}_{n};-\alpha) \leq \epsilon$.

(2)$\Longrightarrow$(1). Let $\epsilon \in (0,1)$. By our hypothesis, there exists some
$m \in \mathbb{N}$ such that $\mathrm{ObsDiam}(\mathscr{D}_{n};-\epsilon) \leq
\epsilon$ for all $n \in \mathbb{N}_{\geq m}$. We will show that \begin{displaymath}
	\forall n \in \mathbb{N}_{\geq m} \colon \qquad \sup\nolimits_{f \in F_{n}} \inf\nolimits_{c \in \mathbb{R}} \mathrm{me}_{\mu_{n}}(f,c) \, \leq \, \epsilon .
\end{displaymath} Let $n \in \mathbb{N}_{\geq m}$. For any $f \in F_{n}$ and $\delta > 0$, we find some (necessarily non-empty) Borel subset $B \subseteq \mathbb{R}$ with $f_{\ast}(\mu_{n})(B) \geq 1-\epsilon$ and $\diam (B) \leq \epsilon + \delta$, and observe that\linebreak $\mathrm{me}_{\mu_{n}}(f,c) \leq \epsilon + \delta$ for any $c \in B$. Thus, $\sup\nolimits_{f \in F_{n}} \inf\nolimits_{c \in \mathbb{R}} \mathrm{me}_{\mu_{n}}(f,c) \leq \epsilon$.

(2)$\Longrightarrow$(3). This follows from Lebesgue's dominated convergence theorem.

(3)$\Longrightarrow$(2). Due to Remark~\ref{remark:antitone}, we have $\Delta (\mathscr{D}) \geq (\alpha \wedge 1) \cdot (\mathrm{ObsDiam}(\mathscr{D};-\alpha) \wedge 1)$ for any geometric data set $\mathscr{D}$ and any $\alpha \geq 0$. Consequently, if $\lim_{n \to \infty} \Delta (\mathscr{D}_{n}) = 0$, then $\lim\nolimits_{n \to \infty} \mathrm{ObsDiam}(\mathscr{D}_{n};-\alpha) = 0$ for every $\alpha > 0$, as desired. \end{proof}

We conclude this section with a useful remark about monotonicity.

\begin{proposition}\label{proposition:monotonicity} $\Delta \colon
  (\mathscrbf{D},\preceq) \to ([0,1],\leq)$ is monotone. \end{proposition}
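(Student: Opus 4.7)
The plan is to reduce to the pointwise estimate $\mathrm{ObsDiam}(\mathscr{D}_{0};-\alpha) \leq \mathrm{ObsDiam}(\mathscr{D}_{1};-\alpha)$ for every $\alpha > 0$ whenever $\mathscr{D}_{0} \preceq \mathscr{D}_{1}$, from which $\Delta(\mathscr{D}_{0}) \leq \Delta(\mathscr{D}_{1})$ then follows by integrating in the definition of $\Delta$. Fix a witness $\phi \colon X_{1} \to X_{0}$ for $\mathscr{D}_{0} \preceq \mathscr{D}_{1}$, so that $F_{0} \circ \phi \subseteq \overline{F_{1}}$ and $\phi_{\ast}(\mu_{1}) = \mu_{0}$. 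For every $f_{0} \in F_{0}$ one has $(f_{0})_{\ast}(\mu_{0}) = (f_{0} \circ \phi)_{\ast}(\mu_{1})$ and $g \defeq f_{0} \circ \phi \in \overline{F_{1}}$, so it is enough to prove $\mathrm{PartDiam}(g_{\ast}(\mu_{1}),1-\alpha) \leq S \defeq \mathrm{ObsDiam}(\mathscr{D}_{1};-\alpha)$ for every $g \in \overline{F_{1}}$ (the case $S = \infty$ being trivial).

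The first substantive step is to approximate $g$ by a sequence from $F_{1}$. Since $\overline{F_{1}} \subseteq \mathrm{Lip}_{1}(X_{1},d_{F_{1}})$ and $(X_{1},d_{F_{1}})$ is separable, a standard diagonal argument over a countable dense subset yields a sequence $(f_{n})_{n \in \mathbb{N}}$ in $F_{1}$ converging to $g$ pointwise on that dense set, and equicontinuity of the $1$-Lipschitz family upgrades this to pointwise convergence on all of $X_{1}$. Finiteness of $\mu_{1}$ then converts pointwise convergence into convergence in $\mu_{1}$-measure, so $(f_{n})_{\ast}(\mu_{1}) \to g_{\ast}(\mu_{1})$ weakly on~$\mathbb{R}$ by dominated convergence on bounded continuous test functions.

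The main step is to push $\mathrm{PartDiam}$ through this weak limit. For each $n$, using that every Borel subset of $\mathbb{R}$ of finite diameter is contained in a closed interval of the same diameter, pick $I_{n} = [a_{n},b_{n}]$ with $b_{n} - a_{n} \leq S + 2^{-n}$ and $(f_{n})_{\ast}(\mu_{1})(I_{n}) \geq 1-\alpha$. An Egoroff-plus-finiteness argument shows that the family $\{(f_{n})_{\ast}(\mu_{1})\}_{n \in \mathbb{N}}$ is tight, which combined with $(f_{n})_{\ast}(\mu_{1})(I_{n}) \geq 1-\alpha$ forces all $I_{n}$ to sit inside a fixed bounded region; extract a subsequence with $a_{n} \to a$, $b_{n} \to b$ and $b - a \leq S$. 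For every $\delta > 0$, eventually $I_{n} \subseteq [a-\delta,b+\delta]$, and the Portmanteau theorem applied to this closed set gives $g_{\ast}(\mu_{1})([a-\delta,b+\delta]) \geq \limsup_{n} (f_{n})_{\ast}(\mu_{1})([a-\delta,b+\delta]) \geq 1-\alpha$. Hence $\mathrm{PartDiam}(g_{\ast}(\mu_{1}),1-\alpha) \leq b - a + 2\delta \leq S + 2\delta$, and letting $\delta \to 0$ completes the pointwise inequality; integrating over $\alpha \in [0,1]$ yields the claim.

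The principal obstacle is the tightness step: a priori the near-optimal intervals $I_{n}$ could drift to $\pm\infty$, and ruling this out requires a uniform-in-$n$ bound of the shape $\mu_{1}(\lvert f_{n}\rvert > M) < \eta$. This can be obtained by combining Egoroff's theorem on a set where the limit $g$ is bounded with the a.\,e.\ finiteness of $g$, which traps the intervals $I_{n}$ in a fixed bounded region of $\mathbb{R}$ and permits the subsequence extraction used above.
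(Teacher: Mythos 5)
Your proof is correct and follows essentially the same route as the paper's: both use the witness map $\phi$ to rewrite $f_{\ast}(\mu_{0})$ as $(f\circ\phi)_{\ast}(\mu_{1})$ and then compare the supremum of partial diameters over $F_{0}\circ\phi\subseteq\overline{F_{1}}$ with that over $F_{1}$. The only difference is that the paper treats the passage from $\overline{F_{1}}$ back to $F_{1}$ as immediate, whereas you supply a careful approximation/tightness/Portmanteau argument for that step; the extra detail is sound.
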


\begin{proof} If $\mathscr{D}_{0} = (D_{0},F_{0},\mu_{0})$ and $\mathscr{D}_{1} = (D_{1},F_{1},\mu_{1})$ are geometric data sets such that $\mathscr{D}_{0} \preceq \mathscr{D}_{1}$, then there is $\phi \colon D_{1} \to D_{0}$ with $F_{0} \circ \phi \subseteq \overline{F_{1}}$ and $\phi_{\ast}(\mu_{1}) = \mu_{0}$, whence \begin{align*}
	\mathrm{ObsDiam}(\mathscr{D}_{0};-\alpha) \, & = \, \sup \{ \mathrm{PartDiam}(f_{\ast}(\mu_{0}),1-\alpha) \mid f \in F_{0} \} \\
		& = \, \sup \{ \mathrm{PartDiam}(f_{\ast}(\phi_{\ast}(\mu_{1})),1-\alpha) \mid f \in F_{0} \} \\
		& = \, \sup \{ \mathrm{PartDiam}((f \circ \phi)_{\ast}(\mu_{1}),1-\alpha) \mid f \in F_{0} \} \\
		& \leq \, \sup \left\{ \mathrm{PartDiam}(f_{\ast}(\mu_{1}),1-\alpha) \left\vert \, f \in \overline{F_{1}} \right\} \right. \\
		& = \, \sup \{ \mathrm{PartDiam}(f_{\ast}(\mu_{1}),1-\alpha) \mid \, f \in F_{1} \} \\
		& = \, \mathrm{ObsDiam}(\mathscr{D}_{1};-\alpha)
\end{align*} for every $\alpha \geq 0$, which readily implies that $\Delta (\mathscr{D}_{0}) \leq \Delta (\mathscr{D}_{1})$. \end{proof}

\section{Intrinsic Dimension}\label{section:intrinsic.dimension}

Below we propose an axiomatic approach to intrinsic dimension of geometric data
sets (Definition~\ref{definition:dimension.function}), a modification of ideas from~\citet{Pestov1} suited for our setup.

\begin{definition}\label{definition:dimension.function} A map $\partial \colon {\mathscrbf{D}} \to [0,\infty]$ is called a \emph{dimension function} if the following hold: \begin{enumerate}
	\item[$(1)$] \emph{Axiom of concentration:} \\ A sequence $(\mathscr{D}_{n})_{n \in \mathbb{N}} \in {\mathscrbf{D}}^{\mathbb{N}}$ has the L\'evy property if and only if \begin{displaymath}
						\lim\nolimits_{n\to \infty} \partial (\mathscr{D}_{n}) \, = \, \infty .
					\end{displaymath}
	\item[$(2)$] \emph{Axiom of continuity:} \\ If a sequence $(\mathscr{D}_{n})_{n \in \mathbb{N}} \in {\mathscrbf{D}}^{\mathbb{N}}$ concentrates to $\mathscr{D} \in {\mathscrbf{D}}$, then \begin{displaymath}
				\partial (\mathscr{D}_{n}) \, \longrightarrow \, \partial (\mathscr{D}) \ \quad (n \to \infty ).
			\end{displaymath}
	\item[$(3)$] \emph{Axiom of feature antitonicity:} \\ If $\mathscr{D}_{0}, \mathscr{D}_{1} \in {\mathscrbf{D}}$ and $\mathscr{D}_{0} \preceq \mathscr{D}_{1}$, then $\partial (\mathscr{D}_{0}) \, \geq \, \partial (\mathscr{D}_{1})$.
	\item[$(4)$] \emph{Axiom of geometric order of divergence:} \\ If $(\mathscr{D}_{n})_{n \in \mathbb{N}} \in {\mathscrbf{D}}^{\mathbb{N}}$ is a L\'evy sequence, then $\partial (\mathscr{D}_{n}) \in \Theta \! \left(\Delta (\mathscr{D}_{n})^{-2}\right)$.\footnote{Given two functions $f,g \colon \mathbb{N} \to [0,\infty)$, we write $f(n) \in \Theta (g(n))$ if there exist $N \in \mathbb{N}$ and $C > c > 0$ with $cf(n) \leq g(n) \leq C f(n)$ for all $n \geq N$.}
\end{enumerate} \end{definition}

\begin{remark} Let $\partial \colon {\mathscrbf{D}} \to [0,\infty]$ be a dimension function and let $\mathscr{D} = (D,F,\mu) \in {\mathscrbf{D}}$. Then $\partial (\mathscr{D}) = \infty$ if and only if $\vert D \vert = 1$. This is by force of the axiom of concentration. \end{remark}

\begin{proposition}\label{proposition:dimension} The map $\partial_{\Delta} \colon {\mathscrbf{D}} \to [1,\infty], \, \mathscr{D} \mapsto \frac{1}{\Delta (\mathscr{D})^{2}}$ is a dimension function. \end{proposition}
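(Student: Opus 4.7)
All four axioms follow by composing established properties of the functional $\Delta$ from Section~\ref{sec:obsdiam} with the continuous, strictly decreasing function $t \mapsto t^{-2}$ on $(0,1]$ (extended by $0^{-2} \defeq \infty$). Since $\Delta$ takes values in $[0,1]$, the map $\partial_\Delta$ is well defined into $[1,\infty]$. The plan is to verify each of the four axioms separately, with nearly all work being outsourced to Propositions~\ref{proposition:observable.diameters.continuous}, \ref{proposition:observable.diameters.levy} and \ref{proposition:monotonicity}.

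For the \textbf{axiom of concentration} I would invoke \cref{proposition:observable.diameters.levy} to translate the L\'evy property of $(\mathscr{D}_n)_{n \in \mathbb{N}}$ into the condition $\Delta(\mathscr{D}_n) \to 0$, and then observe that by strict monotonicity of $t \mapsto t^{-2}$ and the convention $0^{-2} = \infty$, this is equivalent to $\partial_\Delta(\mathscr{D}_n) \to \infty$. The \textbf{axiom of feature antitonicity} is immediate: by \cref{proposition:monotonicity}, $\mathscr{D}_0 \preceq \mathscr{D}_1$ forces $\Delta(\mathscr{D}_0) \leq \Delta(\mathscr{D}_1)$, and applying the antitone function $t \mapsto t^{-2}$ reverses the inequality. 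The \textbf{axiom of geometric order of divergence} is trivial, since $\partial_\Delta(\mathscr{D}_n) = \Delta(\mathscr{D}_n)^{-2}$ by definition, so the $\Theta$-bound holds with both constants equal to $1$.

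The only step requiring a little care is the \textbf{axiom of continuity}. Suppose $d_{\mathrm{conc}}(\mathscr{D}_n,\mathscr{D}) \to 0$. By \cref{proposition:observable.diameters.continuous}, the map $\Delta$ is $3$-Lipschitz with respect to $d_{\mathrm{conc}}$, so $\Delta(\mathscr{D}_n) \to \Delta(\mathscr{D})$. If $\Delta(\mathscr{D}) > 0$, then continuity of $t \mapsto t^{-2}$ at $\Delta(\mathscr{D})$ immediately yields $\partial_\Delta(\mathscr{D}_n) \to \partial_\Delta(\mathscr{D})$. If instead $\Delta(\mathscr{D}) = 0$, then $\partial_\Delta(\mathscr{D}) = \infty$ by convention, while $\Delta(\mathscr{D}_n) \to 0^+$ gives $\partial_\Delta(\mathscr{D}_n) \to \infty$ in the extended real line topology.

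The main (small) obstacle is really this boundary case $\Delta(\mathscr{D}) = 0$ in the continuity axiom; it is resolved by viewing $\partial_\Delta$ as a map into the compact interval $[1,\infty]$ with its natural topology, rather than into $[1,\infty)$. No further arguments are needed beyond assembling the three propositions from Section~\ref{sec:obsdiam} with the monotone function $t \mapsto t^{-2}$.
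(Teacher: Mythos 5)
Your proposal is correct and follows essentially the same route as the paper: the paper's proof likewise just cites \cref{proposition:observable.diameters.levy} for the axiom of concentration, \cref{proposition:observable.diameters.continuous} for continuity, \cref{proposition:monotonicity} for feature antitonicity, and notes that the axiom of geometric order of divergence holds by definition. Your extra care with the boundary case $\Delta(\mathscr{D})=0$ in the continuity axiom is a harmless elaboration the paper leaves implicit.
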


\begin{proof} Clearly, $\partial_{\Delta}$ is well defined on
  ${\mathscrbf{D}}$, since $\Delta$ is invariant under isomorphisms of geometric
  data sets, that is, $\Delta (\mathscr{D}_{0}) = \Delta (\mathscr{D}_{1})$ for
  any pair of isomorphic geometric data sets $\mathscr{D}_{0} \cong
  \mathscr{D}_{1}$. Also, $\partial_{\Delta}$ satisfies the axiom of
  concentration by Proposition~\ref{proposition:observable.diameters.levy} and the axiom of
  continuity by Proposition~\ref{proposition:observable.diameters.continuous}. Due to Proposition~\ref{proposition:monotonicity}, $\Delta \colon (\mathscrbf{D},{\preceq}) \to ([0,1],{\leq})$ is monotone, whence $\partial_{\Delta}$ satisfies the axiom of feature antitonicity. By definition, $\partial_{\Delta}$ obviously satisfies the axiom of geometric order of divergence. \end{proof}

As argued by~\citet{Pestov1,Pestov2}, it is desirable for a reasonable notion of intrinsic dimension to agree with our geometric intuition in the way that the value assigned to the Euclidean $n$-sphere $\mathbb{S}_{n}$, viewed as a geometric data set, would be in the order~of~$n$. To be more precise, for any integer~$n \geq 1$, let us consider the $mm$-space $\mathscr{S}_{n} \defeq (\mathbb{S}_{n},d_{\mathbb{S}_{n}},\xi_{n})$ where $d_{\mathbb{S}_{n}}$ denotes the geodesic distance on $\mathbb{S}_{n}$ and $\xi_{n}$ is the unique rotation invariant Borel probability measure on $\mathbb{S}_{n}$.

\begin{lemma}\label{lem:64} $\Delta ((\mathscr{S}^{n})_{\bullet}) = \Delta ((\mathscr{S}^{n})_{\circ}) \in \Theta \left( \frac{1}{\sqrt{n}} \right)$. \end{lemma}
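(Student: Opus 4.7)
I would prove the equality $\Delta((\mathscr{S}^n)_\bullet) = \Delta((\mathscr{S}^n)_\circ)$ and the $\Theta(1/\sqrt{n})$ estimate separately, using the rotational symmetry of $\mathscr{S}^n$ and L\'evy's spherical isoperimetric inequality as the main tools.

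For the equality, the plan is to establish $\mathrm{ObsDiam}((\mathscr{S}^n)_\bullet; -\alpha) = \mathrm{ObsDiam}((\mathscr{S}^n)_\circ; -\alpha)$ for every $\alpha \in (0,1)$, from which equality of $\Delta$ follows by integration. The direction $\geq$ is immediate since $\{d(\cdot,y_0) \mid y_0 \in \mathbb{S}^n\} \subseteq \mathrm{Lip}_1(\mathbb{S}^n, d_{\mathbb{S}^n})$. For $\leq$, fix a $1$-Lipschitz $f$ with median $M_f$. Applying L\'evy's isoperimetric inequality to $\{f \leq M_f\}$ and $\{f \geq M_f\}$ (each of measure at least $1/2$, hence with $\epsilon$-expansions of at least the measure of a hemisphere's $\epsilon$-expansion) and using $f \in \mathrm{Lip}_1$, I would obtain
\[
  \xi_n\bigl(\{|f - M_f| > \epsilon\}\bigr) \, \leq \, \xi_n\bigl(\{|d(\cdot, y_0) - \tfrac{\pi}{2}| > \epsilon\}\bigr) .
\]
Since the density $\propto \sin^{n-1}\theta$ of $(d(\cdot,y_0))_\ast \xi_n$ is unimodal and symmetric about $\pi/2$, its smallest mass-$(1-\alpha)$ interval is centred at $\pi/2$, so the above inequality transfers to the partial diameter comparison $\mathrm{PartDiam}(f_\ast \xi_n, 1-\alpha) \leq \mathrm{PartDiam}((d(\cdot,y_0))_\ast \xi_n, 1-\alpha)$. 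Taking the supremum over $f$ yields the claim.

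For the asymptotic, by the equality it suffices to estimate $\Delta((\mathscr{S}^n)_\circ)$. For the upper bound, L\'evy's concentration inequality $\xi_n(\{|d(\cdot, y_0) - \pi/2| > \epsilon\}) \leq 2 e^{-(n-1)\epsilon^2/2}$ gives $\mathrm{ObsDiam}((\mathscr{S}^n)_\circ; -\alpha) \leq 2\sqrt{2\log(2/\alpha)/(n-1)}$, and integrating (with the truncation at $1$ absorbing an exponentially small portion of the $\alpha$-interval) yields $\Delta((\mathscr{S}^n)_\circ) = O(1/\sqrt{n})$, using the finiteness of $\int_0^1 \sqrt{\log(2/\alpha)}\, d\alpha$. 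For the lower bound, a Laplace-method asymptotic
\[
  c_n \sin^{n-1}\!\bigl(\tfrac{\pi}{2} + \tfrac{s}{\sqrt{n-1}}\bigr) \, \longrightarrow \, \tfrac{1}{\sqrt{2\pi}}\, e^{-s^2/2} \qquad (n \to \infty) ,
\]
with $c_n^{-1} = \int_0^\pi \sin^{n-1}\theta\, d\theta$, shows that there exist $c > 0$ and $\alpha_0 \in (0, 1/2)$, independent of $n$, with $\xi_n(\{|d(\cdot, y_0) - \pi/2| \leq c/\sqrt{n-1}\}) \leq 1 - \alpha_0$ for all large $n$. Hence $\mathrm{ObsDiam}((\mathscr{S}^n)_\circ; -\alpha) \geq 2c/\sqrt{n-1}$ for every $\alpha \leq \alpha_0$, and integration gives $\Delta((\mathscr{S}^n)_\circ) \geq 2c\alpha_0/\sqrt{n-1}$ for $n$ large.

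The main technical obstacle is the careful invocation of L\'evy's isoperimetric inequality to convert concentration bounds for arbitrary $1$-Lipschitz functions into the \emph{pointwise} partial diameter comparison, exploiting that the extremal (cap) case is realised exactly by the level sets of $d(\cdot, y_0)$; the asymptotic estimate of the spherical density then reduces to a standard Laplace-method calculation and the integrability of $\sqrt{\log(1/\alpha)}$ near zero.
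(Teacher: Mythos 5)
Your proposal is correct in substance, but it takes a genuinely different (and more self-contained) route than the paper. The paper's proof is essentially a citation-unpacking: it invokes Shioya's ``normal law \`a la L\'evy'' to get the exact rescaled limit $\sqrt{n}\cdot\mathrm{ObsDiam}((\mathscr{S}_{n})_{\bullet};-\alpha)\to\mathrm{PartDiam}(\gamma,1-\alpha)$ for the standard Gaussian $\gamma$, deduces both the $O(1/\sqrt{n})$ upper bound and the matching lower bound from this single limit via dominated convergence (the lower bound requires an extra step showing $\mathrm{ObsDiam}<1$ eventually on $[\alpha_{0},1)$ so that the truncation $\wedge\,1$ can be removed), and cites Shioya's Lemma~2.33 for the equality of the $\bullet$ and $\circ$ observable diameters. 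You instead prove everything from first principles: the equality via L\'evy's isoperimetric inequality applied to the two median-level sets of an arbitrary $1$-Lipschitz function together with the unimodality of the pushforward density $\propto\sin^{n-1}\theta$ (which is exactly the content of the cited Lemma~2.33, and your argument for it is sound); the upper bound via the explicit concentration inequality and the integrability of $\sqrt{\log(2/\alpha)}$; and the lower bound via a Laplace-method estimate showing the mass in a window of width $\Theta(1/\sqrt{n})$ around $\pi/2$ stays bounded away from $1$. What the paper's route buys is the sharper asymptotic constant $\int_{0}^{1}\mathrm{PartDiam}(\gamma,1-\alpha)\,\mathrm{d}\alpha$; what yours buys is independence from the two external results and a more elementary lower bound that avoids the dominated-convergence domination trick. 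Two cosmetic slips: your displayed Laplace limit is missing the Jacobian factor, i.e.\ it should read $\tfrac{c_{n}}{\sqrt{n-1}}\sin^{n-1}\bigl(\tfrac{\pi}{2}+\tfrac{s}{\sqrt{n-1}}\bigr)\to\tfrac{1}{\sqrt{2\pi}}e^{-s^{2}/2}$ (the conclusion you draw from it is nevertheless the right one), and in the lower bound you should take $\alpha<\alpha_{0}$ strictly so that mass $\leq 1-\alpha_{0}$ genuinely forces $\mathrm{PartDiam}\geq 2c/\sqrt{n-1}$.
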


\begin{proof} Let $\gamma$ denote the standard Gaussian measure on $\mathbb{R}$, i.e., $\gamma$ is the Borel probability measure on $\mathbb{R}$ given by $\gamma (B) \coloneqq \tfrac{1}{\sqrt{2\pi}} \int_{\mathbb{R}} \chi_{B}(t) \exp \left(-\tfrac{t^{2}}{2}\right) \,\mathrm{d}t$ for every Borel $B \subseteq \mathbb{R}$. According to~\citet[Corollary~8.5.7]{Shioya17} and~\citet[Proposition~2.19]{ShioyaBook}, \begin{equation}\tag{$\ast$}\label{nice}
	\sqrt{n} \cdot \mathrm{ObsDiam}((\mathscr{S}_{n})_{\bullet};-\alpha) \, \longrightarrow \, \mathrm{PartDiam}(\gamma,1-\alpha) \qquad (n \, \longrightarrow \, \infty)
\end{equation} for every $\alpha \in (0,1)$. Moreover, by~\citet[Theorem~2.29]{ShioyaBook}, \begin{displaymath}
	\sqrt{n} \cdot \mathrm{ObsDiam}((\mathscr{S}_{n})_{\bullet};-\alpha) \, \leq \, \sqrt{\tfrac{n}{n-1}} \cdot 2\sqrt{2} \sqrt{- \log\left( \sqrt{\tfrac{2}{\pi}} \alpha \right) } \, \leq \, 4 \sqrt{- \log\left( \sqrt{\tfrac{2}{\pi}} \alpha \right) }
\end{displaymath} for all $n \in \mathbb{N}_{\geq 2}$ and $\alpha \in (0,1]$. Since $\int_{0}^{1} 4 \sqrt{- \log\left( \sqrt{\tfrac{2}{\pi}} \alpha \right) } \, \mathrm{d} \alpha < \infty$, we may apply Lebesgue's dominated convergence theorem to conclude that \begin{align*}
	\limsup_{n \to \infty} \sqrt{n} \cdot \Delta ((\mathscr{S}_{n})_{\bullet}) \, &\leq \, \limsup_{n \to \infty} \int_{0}^{1} \sqrt{n} \cdot \mathrm{ObsDiam}((\mathscr{S}_{n})_{\bullet};-\alpha ) \, \mathrm{d}\alpha \\
	&= \, \int_{0}^{1} \mathrm{PartDiam}(\gamma,1-\alpha) \, \mathrm{d}\alpha \, < \, \infty ,
\end{align*} which entails that $\Delta ((\mathscr{S}_{n})_{\bullet}) \in O \left( \frac{1}{\sqrt{n}} \right)$.\footnote{Given two functions $f,g \colon \mathbb{N} \to [0,\infty)$, we write $f(n) \in O (g(n))$ if there exist $N \in \mathbb{N}$ and $C > 0$ such that $f(n) \leq C g(n)$ for all $n \geq N$.} On the other hand, picking any $\alpha_{0} \in (0,1)$ with $\int_{\alpha_{0}}^{1} \mathrm{PartDiam}(\gamma,1-\alpha) \, \mathrm{d}\alpha > 0$, we infer from~\eqref{nice} and Remark~\ref{remark:antitone} that \begin{displaymath}
	\exists n_{0} \in \mathbb{N} \ \forall n \in \mathbb{N}_{\geq n_{0}} \ \forall \alpha \in [\alpha_{0},1) \colon \qquad \mathrm{ObsDiam}((\mathscr{S}_{n})_{\bullet};-\alpha) \, < \, 1 .
\end{displaymath} Combining this with~\eqref{nice} and Lebesgue's dominated convergence theorem, we see that \begin{align*}
	\liminf_{n \to \infty} \sqrt{n} \cdot \Delta ((\mathscr{S}_{n})_{\bullet}) \, &\geq \, \liminf_{n \to \infty} \sqrt{n} \int_{\alpha_{0}}^{1} \mathrm{ObsDiam}((\mathscr{S}_{n})_{\bullet};-\alpha ) \wedge 1 \, \mathrm{d}\alpha \\
	&= \, \liminf_{n \to \infty} \int_{\alpha_{0}}^{1} \sqrt{n} \cdot \mathrm{ObsDiam}((\mathscr{S}_{n})_{\bullet};-\alpha ) \, \mathrm{d}\alpha \\
	&= \, \int_{\alpha_{0}}^{1} \mathrm{PartDiam}(\gamma,1-\alpha) \, \mathrm{d}\alpha \, > \, 0 ,
\end{align*} which shows that $\frac{1}{\sqrt{n}} \in O (\Delta((\mathscr{S}_{n})_{\bullet}))$. Thus, $\Delta((\mathscr{S}_{n})_{\bullet}) \in \Theta \left( \frac{1}{\sqrt{n}} \right)$ as desired. Also, due to~\citet[Proof of Lemma~2.33]{ShioyaBook}, $\mathrm{ObsDiam}((\mathscr{S}_{n})_{\bullet}) = \mathrm{ObsDiam}((\mathscr{S}_{n})_{\circ})$ for all $\alpha \in (0,1)$ and $n \in \mathbb{N}_{\geq 1}$. Hence, $\Delta ((\mathscr{S}_{n})_{\circ}) = \Delta((\mathscr{S}_{n})_{\bullet}) \in \Theta \left( \frac{1}{\sqrt{n}} \right)$. \end{proof}

By force of the axiom of geometric order of divergence, we have the following.

\begin{corollary} If $\partial \colon {\mathscrbf{D}} \to [0,\infty]$ is a dimension function, then \begin{displaymath}
		\partial ((\mathscr{S}_{n})_{\bullet}), \, \partial ((\mathscr{S}_{n})_{\circ}) \, \in \, \Theta (n) .
\end{displaymath} \end{corollary}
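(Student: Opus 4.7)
The plan is to apply axiom (4) of \cref{definition:dimension.function} directly, feeding in the preceding lemma as input. First I would verify that both $((\mathscr{S}_{n})_{\bullet})_{n \in \mathbb{N}}$ and $((\mathscr{S}_{n})_{\circ})_{n \in \mathbb{N}}$ are Lévy families. By the previous lemma, $\Delta((\mathscr{S}_{n})_{\bullet}) = \Delta((\mathscr{S}_{n})_{\circ}) \in \Theta(1/\sqrt{n})$, so in particular $\Delta((\mathscr{S}_{n})_{\bullet}) \longrightarrow 0$ and $\Delta((\mathscr{S}_{n})_{\circ}) \longrightarrow 0$ as $n \to \infty$. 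Invoking the equivalence (1)$\Longleftrightarrow$(3) in \cref{proposition:observable.diameters.levy} then identifies both sequences as Lévy.

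Next I would apply the axiom of geometric order of divergence to each sequence. This immediately yields $\partial ((\mathscr{S}_{n})_{\bullet}) \in \Theta\!\left(\Delta((\mathscr{S}_{n})_{\bullet})^{-2}\right)$ and $\partial ((\mathscr{S}_{n})_{\circ}) \in \Theta\!\left(\Delta((\mathscr{S}_{n})_{\circ})^{-2}\right)$. Since $\Delta((\mathscr{S}_{n})_{\bullet}), \Delta((\mathscr{S}_{n})_{\circ}) \in \Theta(1/\sqrt{n})$ and the map $x \mapsto x^{-2}$ preserves $\Theta$-equivalence on sequences of positive reals bounded away from $0$ and $\infty$ in the relevant window, we get $\Delta((\mathscr{S}_{n})_{\bullet})^{-2}, \Delta((\mathscr{S}_{n})_{\circ})^{-2} \in \Theta(n)$. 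Chaining the two $\Theta$ relations (which is valid since $\Theta$ is an equivalence on eventually positive sequences) yields the claim.

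There is no real obstacle: the corollary is by design a direct consequence of the axioms once the lemma is in hand. The only point worth being precise about is the transitivity of the $\Theta$ relation as used in the final step, which is routine given the footnote definition in \cref{definition:dimension.function}.
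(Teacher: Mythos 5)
Your proposal is correct and follows essentially the same route as the paper, which simply invokes the axiom of geometric order of divergence together with the preceding lemma. Your explicit verification of the Lévy property via \cref{proposition:observable.diameters.levy} (1)$\Leftrightarrow$(3) is a detail the paper leaves implicit but which is indeed needed to apply axiom (4).
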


We continue by showing that the dimension function from Proposition~\ref{proposition:dimension} is compatible with the order of direct powers of
metric measure spaces. For any $n \in \mathbb{N}_{\geq 1}$ and an $mm$-space
$\mathscr{X} = (X,d,\mu)$, let $\mathscr{X}^{n} \coloneqq
(X^{n},d_{n},\mu^{\otimes n})$ where $d_{n}(x,y)
\coloneqq \tfrac{1}{n} \sum_{i=1}^{n} d(x_{i},y_{i})$ for all $x,y \in X^{n}$.

\begin{lemma}\label{lemma:delta} For any $\mathscr{X} \in {\mathscrbf{M}}$ with $0 < \diam (\mathscr{X}) \leq 1$, \begin{displaymath}
	\Delta ((\mathscr{X}^{n})_{\bullet}) , \, \Delta ((\mathscr{X}^{n})_{\circ}) \, \in \, \Theta \left( \tfrac{1}{\sqrt{n}} \right) .
\end{displaymath} \end{lemma}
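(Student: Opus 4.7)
The plan is to obtain matching upper and lower bounds of order $1/\sqrt{n}$ for both $\Delta((\mathscr{X}^{n})_{\bullet})$ and $\Delta((\mathscr{X}^{n})_{\circ})$, paralleling the structure of the preceding sphere lemma but replacing its sphere-specific asymptotics with the bounded-differences inequality for the upper bound and with the classical central limit theorem for the lower bound.

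For the upper bound on $\Delta((\mathscr{X}^{n})_{\bullet})$, I would first observe that any $f\in\mathrm{Lip}_{1}(X^{n},d_{n})$ satisfies the bounded-differences condition: if $x,y\in X^{n}$ agree in all but the $i$-th coordinate, then $|f(x)-f(y)|\le d_{n}(x,y)\le d(x_{i},y_{i})/n\le \diam(\mathscr{X})/n\le 1/n$. McDiarmid's inequality then yields $\mu^{\otimes n}\{|f-\mathbb{E}f|>t\}\le 2\exp(-2nt^{2})$, and hence $\mathrm{PartDiam}(f_{\ast}(\mu^{\otimes n}),1-\alpha)\le C\sqrt{\log(2/\alpha)/n}$ uniformly in $f\in\mathrm{Lip}_{1}(X^{n},d_{n})$. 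Integrating this bound in $\alpha$ (the logarithmic singularity at $0$ is integrable) gives $\Delta((\mathscr{X}^{n})_{\bullet})\in O(1/\sqrt{n})$. The upper bound on $\Delta((\mathscr{X}^{n})_{\circ})$ then comes for free: the identity map witnesses $(\mathscr{X}^{n})_{\circ}\preceq(\mathscr{X}^{n})_{\bullet}$ because distance functions are $1$-Lipschitz, so Proposition~\ref{proposition:monotonicity} yields $\Delta((\mathscr{X}^{n})_{\circ})\le\Delta((\mathscr{X}^{n})_{\bullet})$.

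For the lower bound on $\Delta((\mathscr{X}^{n})_{\circ})$, the plan is to exhibit a single distance feature whose push-forward has partial diameter of order $1/\sqrt{n}$. Since $\diam(\mathscr{X})>0$ and $\spt\mu=X$, I would pick $y_{0}\in X$ such that $g\defeq d(\cdot,y_{0})$ is non-constant on a set of positive $\mu$-measure; a quick argument using two points $a,b$ with $d(a,b)>0$ and disjoint neighbourhoods of positive measure produces such a $y_{0}$ with variance $\sigma^{2}\defeq\mathrm{Var}_{\mu}(g)>0$. The feature $G(x_{1},\dots,x_{n})\defeq d_{n}(x,(y_{0},\dots,y_{0}))=\frac{1}{n}\sum_{i=1}^{n}g(x_{i})$ belongs to the feature set of $(\mathscr{X}^{n})_{\circ}$, and under $\mu^{\otimes n}$ it is the sample mean of i.i.d.\ copies of $g$. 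The classical CLT gives $\sqrt{n}(G-\mathbb{E}g)\Rightarrow \mathcal{N}(0,\sigma^{2})$ in distribution. Since the Gaussian limit has continuous, strictly positive density, this weak convergence passes to partial diameters, giving $\sqrt{n}\cdot\mathrm{PartDiam}(G_{\ast}(\mu^{\otimes n}),1-\alpha)\to\mathrm{PartDiam}(\mathcal{N}(0,\sigma^{2}),1-\alpha)$ pointwise in $\alpha\in(0,1)$. Choosing $\alpha_{0}\in(0,1)$ so that $\int_{\alpha_{0}}^{1/2}\mathrm{PartDiam}(\mathcal{N}(0,\sigma^{2}),1-\alpha)\,\mathrm{d}\alpha>0$ and applying Fatou's lemma, exactly as in the last display of the preceding lemma, I obtain $\liminf_{n\to\infty}\sqrt{n}\cdot\Delta((\mathscr{X}^{n})_{\circ})>0$. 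The lower bound for $(\mathscr{X}^{n})_{\bullet}$ is then immediate from Proposition~\ref{proposition:monotonicity} via $\Delta((\mathscr{X}^{n})_{\bullet})\ge\Delta((\mathscr{X}^{n})_{\circ})$.

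The main obstacle I anticipate is the interaction of the truncation $\mathrm{ObsDiam}\wedge 1$ in the definition of $\Delta$ with the CLT-based lower bound. This is resolved by noting that the partial diameters $\sqrt{n}\cdot\mathrm{PartDiam}(G_{\ast}(\mu^{\otimes n}),1-\alpha)$ remain uniformly bounded (by their Gaussian limit), so for all sufficiently large $n$ and every $\alpha\in[\alpha_{0},1/2]$ we have $\mathrm{ObsDiam}((\mathscr{X}^{n})_{\circ};-\alpha)<1$ and the truncation is inactive on the range of integration. The remaining steps (existence of a suitable $y_{0}$, and verifying that convergence of distributions with smooth Gaussian limit implies convergence of partial diameters via the Portmanteau theorem) are short and routine.
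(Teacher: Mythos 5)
Your proof is correct, and it takes a genuinely different route from the paper's. The paper obtains the upper bound by quoting the concentration estimate of Ozawa--Shioya (their Theorem~1.1, transferred via the rescaling rule for observable diameters), and the lower bound by invoking the argument from their proof of Theorem~1.3, which supplies a constant $V(\mathscr{X})>0$ and a half-Gaussian limit law for $\sqrt{n}\cdot\mathrm{ObsDiam}((\mathscr{X}^{n})_{\circ};-\alpha)$; both bounds are then shuttled between the $\bullet$ and $\circ$ versions exactly as you do, using $\Delta((\mathscr{X}^{n})_{\circ})\leq\Delta((\mathscr{X}^{n})_{\bullet})$ and Fatou. You replace both citations with self-contained arguments: McDiarmid's bounded-differences inequality reproves the needed uniform concentration for $\mathrm{Lip}_{1}(X^{n},d_{n})$ (this is essentially the content of the cited Theorem~1.1 for the $\ell_{1}$-type product metric, so nothing is lost), and for the lower bound you use only the classical CLT applied to the single distance feature $d_{n}(\cdot,(y_{0},\dots,y_{0}))$, which is a sample mean of i.i.d.\ bounded variables of positive variance. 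This is more elementary than the paper's appeal to the Ozawa--Shioya machinery, at the cost of a slightly less informative constant (the paper's route identifies the sharp rate in terms of an invariant $V(\mathscr{X})$ of the base space, whereas yours only needs the variance of one distance function); for the stated $\Theta(1/\sqrt{n})$ conclusion the two are equivalent. Your handling of the details is sound: the existence of $y_{0}$ with $\mathrm{Var}_{\mu}(d(\cdot,y_{0}))>0$ follows from $\diam(\mathscr{X})>0$ and $\spt\mu=X$ as you say; the passage from weak convergence to partial diameters only requires the $\liminf$ inequality $\liminf_{n}\mathrm{PartDiam}(\nu_{n},1-\alpha)\geq\mathrm{PartDiam}(\nu,1-\alpha)$, which holds by tightness plus the Portmanteau theorem for closed intervals; and the truncation $\mathrm{ObsDiam}\wedge 1$ is indeed harmless -- in fact, since $\diam(X^{n},d_{n})\leq\diam(\mathscr{X})\leq 1$, every partial diameter in sight is already at most $1$, which is precisely why the paper imposes the hypothesis $\diam(\mathscr{X})\leq 1$.
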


\begin{proof} Due to~\citet[Theorem~1.1]{OzawaShioya} and~\citet[Proposition~2.19]{ShioyaBook}, \begin{displaymath}
	\mathrm{ObsDiam}((\mathscr{X}^{n})_{\bullet};-\alpha) \, \leq \, 4 \sqrt{2 \log \tfrac{2}{\alpha}} \cdot \tfrac{1}{\sqrt{n}}
\end{displaymath} for all $n \in \mathbb{N}$ and $\alpha \in (0,1)$. Since \begin{align*}
	K \, &\defeq \, 4 \sqrt{2} \int_{0}^{1} \sqrt{\log \tfrac{2}{\alpha}} \, \mathrm{d}\alpha \, = \, 4\sqrt{2} \left( 2 \int_{\sqrt{\log 2}}^{\infty} \exp(-t^{2}) \, \mathrm{d}t + \sqrt{\log 2} \right) \, \in \, (0,\infty) ,
\end{align*} thus $\Delta ((\mathscr{X}^{n})_{\circ}) \leq \Delta ((\mathscr{X}^{n})_{\bullet}) \leq \frac{K}{\sqrt{n}}$ for all $n \in \mathbb{N}$. So, \begin{displaymath}
	\Delta ((\mathscr{X}^{n})_{\bullet}) , \, \Delta ((\mathscr{X}^{n})_{\circ}) \, \in \, O \! \left(\tfrac{1}{\sqrt{n}}\right) .
\end{displaymath} Conversely, the argument in~\cite[Proof of Theorem~1.3]{OzawaShioya}, together with~\cite[Proposition~2.19]{ShioyaBook}, asserts the existence of a positive real number $V(\mathscr{X})$ such that \begin{displaymath}
	\forall \alpha \in (0,1) \colon \ \ \liminf_{n \to \infty} \sqrt{n} \cdot \mathrm{ObsDiam}((\mathscr{X}^{n})_{\circ};-\alpha) \, \geq \, \sqrt{V(\mathscr{X})} \cdot \mathrm{PartDiam}(\nu, 1-\alpha) ,
\end{displaymath} where $\nu$ is the Borel probability measure on $\mathbb{R}$ given by \begin{displaymath}
	\nu (B) \, \defeq \, \sqrt{\tfrac{2}{\pi}} \int_{0}^{\infty} \chi_{B}(t) \exp \left(-\tfrac{t^{2}}{2}\right) \,\mathrm{d}t
\end{displaymath} for every Borel $B \subseteq \mathbb{R}$. Thus, thanks to Fatou's lemma and the fact that $\diam (\mathscr{X}) \leq 1$, \begin{align*}
	\liminf_{n \to \infty} \sqrt{n} \cdot \Delta ((\mathscr{X}^{n})_{\circ}) \, & \geq \, \liminf_{n \to \infty} \int_{0}^{1/2} \sqrt{n}\cdot \mathrm{ObsDiam}((\mathscr{X}^{n})_{\circ};-\alpha) \, \mathrm{d}\alpha \\
	& \geq \, \int_{0}^{1/2} \liminf_{n \to \infty} \sqrt{n}\cdot \mathrm{ObsDiam}((\mathscr{X}^{n})_{\circ};-\alpha) \, \mathrm{d}\alpha \\
	& \geq \, \sqrt{V(\mathscr{X})} \int_{0}^{1/2} \mathrm{PartDiam}(\nu, 1-\alpha) \, \mathrm{d}\alpha \\
	& \geq \, \tfrac{1}{2}\sqrt{V(\mathscr{X})} \cdot \mathrm{PartDiam}\left(\nu, \tfrac{1}{2}\right) \, \in \, (0,\infty ) ,
\end{align*} which implies that $\frac{1}{\sqrt{n}} \in O(\Delta ((\mathscr{X}^{n})_{\circ}))$, and so $\frac{1}{\sqrt{n}} \in O(\Delta ((\mathscr{X}^{n})_{\bullet}))$. It follows that $\Delta ((\mathscr{X}^{n})_{\bullet}), \Delta ((\mathscr{X}^{n})_{\circ}) \in \Theta \left( \frac{1}{\sqrt{n}} \right)$. \end{proof}

Again, we arrive at a geometric consequence for dimension functions.

\begin{corollary}\label{dimfunc} Let $\partial \colon {\mathscrbf{D}} \to
  [0,\infty]$ be a dimension function. For every $\mathscr{X} \in
  {\mathscrbf{M}}$ with  $0 < \diam (\mathscr{X}) \leq 1$,
  \[\partial_{\Delta} ((\mathscr{X}^{n})_{\bullet}) , \, \partial_{\Delta} ((\mathscr{X}^{n})_{\circ})\, \in\, \Theta (n).\] \end{corollary}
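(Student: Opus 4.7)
The proof is essentially a three-line synthesis of the preceding lemma with the dimension-function axioms, so my plan is to make exactly that combination explicit.

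First I would invoke the lemma immediately preceding the corollary, which yields $\Delta((\mathscr{X}^{n})_{\bullet}), \Delta((\mathscr{X}^{n})_{\circ}) \in \Theta(1/\sqrt{n})$ for any $mm$-space $\mathscr{X}$ with $0 < \diam(\mathscr{X}) \leq 1$. In particular both quantities tend to $0$, so by the equivalence (1)$\Leftrightarrow$(3) established in \cref{proposition:observable.diameters.levy}, the sequences $((\mathscr{X}^{n})_{\bullet})_{n \in \mathbb{N}}$ and $((\mathscr{X}^{n})_{\circ})_{n \in \mathbb{N}}$ are both L\'evy families in $\mathscrbf{D}$.

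Next I would apply the axiom of geometric order of divergence from \cref{definition:dimension.function} to each of these L\'evy sequences, obtaining
\begin{displaymath}
    \partial((\mathscr{X}^{n})_{\bullet}) \, \in \, \Theta\!\left(\Delta((\mathscr{X}^{n})_{\bullet})^{-2}\right) \quad \text{and} \quad \partial((\mathscr{X}^{n})_{\circ}) \, \in \, \Theta\!\left(\Delta((\mathscr{X}^{n})_{\circ})^{-2}\right) .
\end{displaymath}
Substituting the $\Theta(1/\sqrt{n})$ asymptotics from the lemma and using the multiplicativity of $\Theta$ under continuous monomial transformations, both right-hand sides are in $\Theta(n)$, which is the claimed conclusion.

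I do not anticipate any real obstacle: the work has been absorbed into the preceding lemma (concentration rates for direct powers, via Ozawa--Shioya) and into \cref{proposition:observable.diameters.levy} (equivalence of the L\'evy property with $\Delta \to 0$). The only minor care point is the trivial observation that if $a_{n} \in \Theta(1/\sqrt{n})$ then $a_{n}^{-2} \in \Theta(n)$, so that the $\Theta$ in the axiom transports cleanly to the stated $\Theta(n)$. (I also read the $\partial_{\Delta}$ in the statement as a typo for $\partial$, since the argument above works for any dimension function, and for $\partial_{\Delta}$ specifically the conclusion is by definition the reciprocal square of the lemma.)
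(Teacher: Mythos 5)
Your proposal is correct and is exactly the paper's (unstated) argument: the paper derives the corollary "by force of the axiom of geometric order of divergence," i.e., by combining Lemma~\ref{lemma:delta} with Proposition~\ref{proposition:observable.diameters.levy} to see that the sequences are L\'evy and then reading off $\partial(\mathscr{D}_n)\in\Theta\bigl(\Delta(\mathscr{D}_n)^{-2}\bigr)=\Theta(n)$. Your reading of $\partial_{\Delta}$ as a typo for $\partial$ also agrees with the analogous preceding corollary for spheres.
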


\section{Applications}
Equipped with this new notion of dimension function, we propose two
applications in the field of machine learning. The first is situated in a
classical learning realm where data sets are represented as subsets of
$\mathbb{R}^{n}$. The second applies to purely categorical data and the
challenges that arise with that.

\subsection{Distance-Based Machine Learning Methods}
\label{sec:dens-based-clust}
Distance functions are fundamental to the majority of ML
procedures. Classification tasks depend on this kind of features up to the same
proportion as clustering tasks do. Modeling distances as features of geometric data sets allows us to assign an intrinsic dimension to such problems and investigate its explanatory power for concrete real-world data.
So far there are only a few theoretical investigations of the dimension curse in
the realm of machine learning. One exception to this is the work of~\citet{Beyer99} investigating the impact of high
dimension in data to the kNN-Classification method. However, their main theoretical result~\citep[Theorem~1]{Beyer99} relies on a collection of assumptions rarely met by real-world data sets~\citep{Korn}. More recent works, e.g.,~\citet{Houle,Korn}, showed
that often the curse of dimensionality can be overcome through an appropriate
choice of feature functions. This illustrates the necessity to analyze data sets and
machine learning procedures based on their features. In the present section, we compute the dimension function established in Corollary~\ref{dimfunc} in order to detect and
quantify the extent of dimension curse in concrete data.

\subsubsection{Distances as Features}
\label{sec:dist-feat-data}
Let $n \in \mathbb{N}_{\geq 1}$ and let $d_{\mathrm{eucl}}$ denote the Euclidean metric on~$\mathbb{R}^{n}$. Given a non-empty finite subset $X\subseteq\mathbb{R}^{n}$ of points to be analyzed via some distance-based machine learning procedure, we propose to study the geometric data set \begin{displaymath}
	\mathscr{D}_{n}(X) \, \defeq \, \left(X,d_{\mathrm{eucl}}\vert_{X^{2}},\nu_{X}\right)_{\circ} \, = \, \left(X,\{ x \mapsto d_{\mathrm{eucl}}(x,y) \mid y \in X \},\nu_{X}\right) ,
      \end{displaymath} cf.~Definition~\ref{definition:induced.data.sets}. Furthermore, in order
      to be able to compare observable diameters of different data sets having
      different absolute diameters, we perform a normalization based on the following
      observation: for any geometric data set $\mathscr{D} = (Y,F,\mu)$ and
      $\alpha,\tau \geq 0$, it is not difficult to see that $\tau \cdot
      \mathrm{ObsDiam}(\mathscr{D};-\alpha)=\mathrm{ObsDiam}(\tau \cdot
      \mathscr{D};-\alpha)$, where\linebreak $\tau \cdot \mathscr{D} \defeq (Y,\{ \tau f \mid f \in F \},\mu)$. (The proof of the corresponding fact about $mm$-spaces is to be found in~\citet[Proposition~2.19]{ShioyaBook}) In particular, we may consider $\tau = \mathrm{diam}(Y,d_{F})^{-1}$ if $\vert Y \vert > 1$.

In Algorithms~\ref{algo:ObsDiam} and~\ref{algo:mindiam} we present a simple procedure for computing
the observable diameter of a geometric data set with distance features.  We may
infer from it an upper bound for the computational time complexity for computing
$\mathop{ObsDiam}$. Computing all features, i.e., all distances, requires
$O(cn^{2})$ time, where $c$ indicates the complexity for computing the distance
of two points in $X$. Computing the counting measure can be done alongside by
additionally counting the occurrence of a particular distance.  For every
distance we further have to compute the set of the minimal diameters. The
challenge here is traversing $f(X)$ for all possible subsets. Since the diameter
of some subset $B\subseteq f(X)$ is reflected by a choice of two points in $B$,
only subsets of cardinality two have to be checked, as shown
in Algorithm~\ref{algo:mindiam}, which requires
$O(n\cdot\sum\nolimits_{i=1}^{n}n-i)=O(n^{3})$ steps. The necessary time for
computing the maximum afterwards is subsumed by this. Hence, we conclude that
computing the observable diameter for a given geometric data set using distances
as features is at most in $O(cn^{2}+n^{3})$ for run-time complexity.

\subsection{Intrinsic Dimension of Incidence Geometries}
\label{sec:conc-clust}

As a second exemplary application of the intrinsic dimension function we choose
incidence structures as investigated in Formal Concept Analysis (FCA). These
data tables are natural in a way that they are widely used in data science far
beyond FCA. We recall the basic notions of FCA relevant to this work. For a
detailed introduction to FCA, we refer to~\citet{fca-book}. Let $\mathbb{K} =
(G,M,I)$ be a \emph{formal context}, i.e., a triple consisting of two non-empty
sets $G$ and $M$ and a relation $I \subseteq G \times M$. The elements of $G$
are called the \emph{objects} of $\mathbb{K}$ and the elements of $M$ are called
the \emph{attributes} of $\mathbb{K}$, while $I$ is referred to as the
\emph{incidence relation} of $\mathbb{K}$. We call $\mathbb{K}$ \emph{empty} if
$I = \emptyset$, and \emph{finite} if both $G$ and $M$ are finite.  For $A
\subseteq G$ and $B \subseteq M$, put
\begin{align*}
&A'\,\defeq\,\{ m \in M \mid \forall g \in A
\colon \, (g,m) \in I \}, &B'\,\defeq\,\{ g \in G \mid \forall m \in B \colon
\, (g,m) \in I \}.
\end{align*}
As common in formal concept analysis, we will refer to the
elements of
\begin{align*}
&\mathfrak{B}(\mathbb{K})\,\defeq\,\{ (A,B) \mid A \subseteq G, \, B
\subseteq M, \, A' = B, \, B' = A \}
\end{align*}
as \emph{formal concepts} of
$\mathbb{K}$. We endow $\mathfrak{B}(\mathbb{K})$ with the partial order given
by
\begin{align*}
(A,B)\,\leq\,(C,D)\quad :\Longleftrightarrow\quad A\,\subseteq\,C
\end{align*}
for  $(A,B),(C,D)\in\mathfrak{B}(\mathbb{K})$.

\subsubsection{Concept Lattices as Geometric Data Sets}
\label{application:geometric}
In order to assign an intrinsic dimension to a concept lattice, we need to
transform a formal context into a geometric data set accordant to
Definition~\ref{defi:ds}. The crucial step here is a meaningful choice for the set of
features, which should reflect essential properties for the applied machine
learning procedure, or employed knowledge discovery process. Holding on to this
idea, we propose the following construction.

\begin{definition} Let $\mathbb{K} = (G,M,I)$ be a finite formal context. The geometric data set \emph{associated to} $\mathbb{K}$ is defined to be $\mathscr{D}(\mathbb{K}) \defeq (M,F(\mathbb{K}),\nu_{M})$ with 
  \begin{align*}
    F(\mathbb{K})\,\defeq\,\{ \nu_{G}(A)\cdot \mathds{1}_{B} \mid (A,B) \in \mathfrak{B}(\mathbb{K})\}. 
  \end{align*}
\end{definition}

Let us unravel Definition~\ref{definition:observable.diameter} for data sets arising from formal contexts.

\begin{proposition}\label{prop:fca}
  Let $\mathbb{K} = (G,M,I)$ be a finite formal context and let $\alpha \geq 0$. For every concept $(A,B) \in \mathfrak{B}(\mathbb{K})$, \begin{displaymath}
	\mathrm{PartDiam}((\nu_{G}(A)\cdot \mathds{1}_{B})_{\ast}(\nu_{M}),1-\alpha) \, = \, \begin{cases}
			\nu_{G}(A) & \text{if } \alpha < \nu_{M}(B) < 1 - \alpha , \\
			0 & \text{otherwise.}
                      \end{cases}
\end{displaymath} Hence, $
  \mathrm{ObsDiam}(\mathscr{D}(\mathbb{K});-\alpha) \, = \, \sup \{ \nu_{G}(A) \mid (A,B) \in \mathfrak{B}(\mathbb{K}), \, \alpha < \nu_{M}(B) < 1 - \alpha \} .
$ \end{proposition}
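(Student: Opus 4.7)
The plan is to observe that each feature $f = \nu_{G}(A)\cdot \mathds{1}_{B} \in F(\mathbb{K})$ takes only two possible values, so that $f_{\ast}(\nu_{M})$ is a measure supported on at most two points of $\mathbb{R}$, and then perform a short case analysis on the masses of those atoms.

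First I would unpack the push-forward: $f$ equals $\nu_{G}(A)$ on $B$ and $0$ on $M\setminus B$, hence $f_{\ast}(\nu_{M}) = \nu_{M}(B)\,\delta_{\nu_{G}(A)} + (1-\nu_{M}(B))\,\delta_{0}$. In particular, any Borel subset $C \subseteq \mathbb{R}$ satisfies $f_{\ast}(\nu_{M})(C) = \nu_{M}(B)[\nu_{G}(A) \in C] + (1-\nu_{M}(B))[0 \in C]$ (with the convention that the two summands collapse when $\nu_{G}(A) = 0$, in which case the claim is trivial since then $f \equiv 0$).

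Next I would analyze the three cases. If $\nu_{M}(B) \geq 1-\alpha$, then $C = \{\nu_{G}(A)\}$ witnesses $\mathrm{PartDiam}(f_{\ast}(\nu_{M}),1-\alpha) = 0$; if $\nu_{M}(B) \leq \alpha$, then $1-\nu_{M}(B) \geq 1-\alpha$, and $C = \{0\}$ does the job. In the remaining case $\alpha < \nu_{M}(B) < 1-\alpha$, any Borel $C$ with $f_{\ast}(\nu_{M})(C) \geq 1-\alpha$ must contain both atoms: omitting $0$ leaves at most $\nu_{M}(B) < 1-\alpha$ of the mass, while omitting $\nu_{G}(A)$ leaves at most $1-\nu_{M}(B) < 1-\alpha$. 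Hence $\{0,\nu_{G}(A)\} \subseteq C$, forcing $\diam(C) \geq \nu_{G}(A)$, and equality is attained by $C = \{0,\nu_{G}(A)\}$, whose $f_{\ast}(\nu_{M})$-mass is $1$.

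Finally, for the second formula I would simply take the supremum over all features. Since $F(\mathbb{K}) = \{\nu_{G}(A)\cdot \mathds{1}_{B} \mid (A,B) \in \mathfrak{B}(\mathbb{K})\}$, the first part yields
\begin{displaymath}
  \mathrm{ObsDiam}(\mathscr{D}(\mathbb{K});-\alpha) \, = \, \sup_{(A,B) \in \mathfrak{B}(\mathbb{K})} \mathrm{PartDiam}((\nu_{G}(A)\cdot \mathds{1}_{B})_{\ast}(\nu_{M}),1-\alpha),
\end{displaymath}
and concepts in the ``otherwise'' branch contribute $0$, so that the supremum reduces to the one over concepts with $\alpha < \nu_{M}(B) < 1-\alpha$ (the claimed value being $0$ precisely when that index set is empty, in which case the supremum of the empty set is interpreted as $0$ as usual for partial diameters). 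There is essentially no hard step; the only care required is in handling degenerate cases ($A = \emptyset$, $B \in \{\emptyset, M\}$, or $\alpha \geq \tfrac{1}{2}$), each of which places us in the ``otherwise'' branch and returns $0$ consistently.
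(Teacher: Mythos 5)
Your proof is correct. The paper in fact states this proposition without any proof, treating it as a routine computation, so there is nothing to compare against; your argument --- identifying $f_{\ast}(\nu_{M})$ as the two-atom measure $\nu_{M}(B)\,\delta_{\nu_{G}(A)} + (1-\nu_{M}(B))\,\delta_{0}$ and running the three-case analysis on which atoms a set of mass $\geq 1-\alpha$ must contain --- is exactly the intended verification, and your attention to the degenerate cases ($\nu_{G}(A)=0$, $\alpha \geq \tfrac{1}{2}$, empty index set in the supremum) is appropriate.
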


Note that in the special case of an empty context the observable diameter of the
associated data set is zero, in accordance with Definition~\ref{definition:observable.diameter}.

\subsubsection{Intrinsic Dimension of Scales}
There are particular formal contexts used for scaling non-binary attributes into
binary ones. Investigating them increases the first grasp for the intrinsic
dimension of concept lattices.  The most common scales are the \emph{nominal
  scale}, $\mathbb{K}_{n}^{\mathrm{nom}} \defeq ([n],[n],=)$, and the
\emph{contranominal scale}, $\mathbb{K}_{n}^{\mathrm{con}} \defeq
([n],[n],\neq)$, where $[n] \defeq \{1,\dotsc,n\}$ for a natural number $n \geq
1$. A straightforward application of the trapezoidal rule reveals that
\begin{align*}
  \Delta(\mathscr{D}(\mathbb{K}_{n}^{\mathrm{con}}))=\int_{0}^{1/2}\mathrm{ObsDiam}(\mathscr{D}(\mathbb{K}_{n}^{\mathrm{con}});-\alpha)\,\mathrm{d}\alpha
  &= \tfrac{1}{n}\left(\tfrac{1}{2}\tfrac{n-1}{n}+\sum\nolimits_{k=1}^{n/2-1}\tfrac{n-k}{n}\right).
\end{align*}
So,
$\lim_{n\to\infty}\partial_{\Delta}(\mathscr{D}(\mathbb{K}_{n}^{\mathrm{con}}))=\frac{64}{9}$.
For the nominal scale, we see that
$\partial_{\Delta}(\mathscr{D}(\mathbb{K}_{n}^{\mathrm{nom}})) = n^{4}$, which
diverges to $\infty$ as $n\to\infty$. In the latter case, we observe that our
intrinsic dimension reflects the dimension curse appropriately as the number of
attribute increases.

\section{Conclusion}
\label{sec:conc}
This work provides a comprehensive approach to intrinsic dimensionality of a
data set, as often encountered explicitly or implicitly in machine learning and
knowledge discovery. Inspired by and extending Pestov's work, we introduced a space of geometric data sets, developed a natural axiomatization of intrinsic dimension, and established a specific dimension function satisfying the axioms proposed. Our axiomatic approach (hence every concrete instance) reflects the
dimension curse correctly and agrees with common geometric intuition in various respects. Furthermore,
it facilitates a quantification of the dimension curse. We illustrated our
feature-based approach through exemplary computations for various artificial and
real-world data sets. For those we observed a difference in evaluation by the 
intrinsic dimension function compared to Chavez intrinsic dimension.

We identify various future works. Due to the challenging task to compute the
intrinsic dimension, in particular in the case of incidence structures,
heuristics for approximation are of great interest. For example, one could apply
feature sampling. Furthermore, an important problem to be investigated is the
influence of feature selection or feature reduction, like principle component
analysis, to the value of intrinsic dimension, which should lead to a monotone
increase in the values of the intrinsic dimension.

\section*{Acknowledgements}

The authors would like to express their sincere gratitude to Vladimir
Pestov for a number of insightful comments on this work, as well as to
the anonymous referee for their very careful review of this
manuscript.

\appendix
\section{Experiments}

To motivate the use of our results we added two experimental
investigations to this work. The first is concerned with the distance based
learning approach as discussed in Section~\ref{sec:dens-based-clust}. The second
explores the proposed intrinsic dimension function with respect to incidence
geometries as treated in Section~\ref{sec:conc-clust}.

\subsection{Experiment: Distances as Features}
For this experiment we applied the algorithms as depicted
in Section~\ref{sec:naiveAlgo} to ten artificial and four real-world data sets. The
artificial ones in detail are:
\begin{inparadesc}
\item[Dimset$*$:] six data sets with 1024 data points in $\mathbb{R}^{d}$ for
  $d\in\{32, 64, 128, 256,512, 1024\}$, constructed and investigated
  in~\citep{DIMsets};
\item[Golf ball:] set of 4200 points resembling a three dimensional
  ball in $\mathbb{R}^{3}$ from~\citet{Ultsch2005};
\item[Wingnut:] 1,070 points resembling two antipodal dense rectangles in
  $\mathbb{R}^{2}$ from~\citet{Ultsch2005};
\item[Atom:] 800  points representing a golf ball containing a smaller golf
  ball, both having the same center coordinate in $\mathbb{R}^{3}$ from~\citet{Ultsch2005};
\item[Engy:] 4,096 points shaped in a circular and in an
  elliptical disc in $\mathbb{R}^{2}$ from~\citet{Ultsch2005}.
\end{inparadesc}
The four real-world data sets are in detail the following: 
\begin{inparadesc}
\item[Alon:] biological tumor data set that contains 2,000 measured gene
  expression levels of 40 tumor and 22 normal colon tissues from~\citet{Alon1999dy};
\item[Shippi:]  6,817 measured gene expression levels from 58 lymphoma
  patients from~\citet{Shipp}; 
\item[Nakayama:] 105 samples from 10 types of soft tissue tumors measured with
  22,283 gene expression levels from~\citet{Nakayama};
\item[NIPS:] the binary relation of 11463 words used in 5811 NIPS conference
  papers from~\citet{PerroneJST17}.
\end{inparadesc}

For comparison, alongside with the values of our dimension function
from Corollary~\ref{dimfunc}, we also computed the following quantity introduced by~\citep{Chavez}: given a non-void finite metric space $(X,d)$, let us refer to
\begin{align*}
\mathop{dim}\nolimits_{\textrm{dist}}(X)\,\coloneqq\,
\tfrac{\mu^{2}}{2\cdot\sigma^{2}}
\end{align*}
as the \emph{Chavez intrinsic dimension}, or simply \emph{Chavez ID}, of $(X,d)$, where $\mu \defeq \mathbb{E}_{\nu_{X2}}(d)$ is the expectation of $d$ with respect to $\nu_{X^{2}}$ and $\sigma \defeq \bigl( \mathbb{E}_{\nu_{X^{2}}}(d-\mu)^{2}\bigr)^{1/2}$ is the corresponding standard deviation.

\subsubsection{Observations}
\label{sec:evaluation}
\begin{table}
  \centering
    \caption{Intrinsic dimension for various data clustering sets.}
    {\footnotesize
    \begin{tabular}{@{}lrrrr@{}}\toprule
      Name&\# Points&\# Dimensions&Chavez ID&Intrinsic dimemsion\\\midrule
      dimset32&1,024&32&6.67&24.0\\
      dimset64&1,024&64&7.31&41.2\\
      dimset128&1,024&128&7.56&56.5\\
      dimset256&1,024&256&7.59&76.6\\
      dimset512&1,024&512&7.60&102.6\\
      dimset1024&1,024&1,024&7.59&116.2\\\midrule
      Golfball&4,200&3&4.00&8.89\\
      Wingnut&1,070&2&1.91&8.02\\
      Atom&800&3&1.45&11.0\\
      Engy&4,096&2&1.79&18.0\\\midrule
      Alon&62&2,000&3.50&13.9\\
      Shippi&58&6,817&4.12&36.9\\
      Nakayama&105&22,283&2.08&43.3\\
      NIPS&11,463&5,812&0.36&1463.6\\\bottomrule
    \end{tabular}}
  \label{tab:idclustering}
\end{table}

We illustrated the computational results of our algorithm for the featured data
sets in Figure~\ref{fig:dimsetplot}, and show the values for intrinsic
dimension (\emph{ID}) in Table~\ref{tab:idclustering}. For comparison we included the
values for the Chavez' intrinsic dimension (\emph{CID}).  Our first observation
is the repeating descend-pattern for the $\mathrm{ObsDiam}$-values of the dimset
data sets as shown in Figure~\ref{fig:dimsetplot}. We attribute this to the (unknown)
generation process for these data sets. The CID does not vary for the dimset
data sets with more than 64 dimensions, as depicted
in Table~\ref{tab:idclustering}. The interpretation for this drawn from~\citet{Chavez}
would be that the similarity between the points does not change when increasing
the number of dimensions. One would expect here that the intrinsic dimension
would stay constant as well. However, the intrinsic dimension increases
monotonously as the number of dimensions
goes 
to 1024.
\begin{figure}
  \centering
  \includegraphics[width=0.78\textwidth]{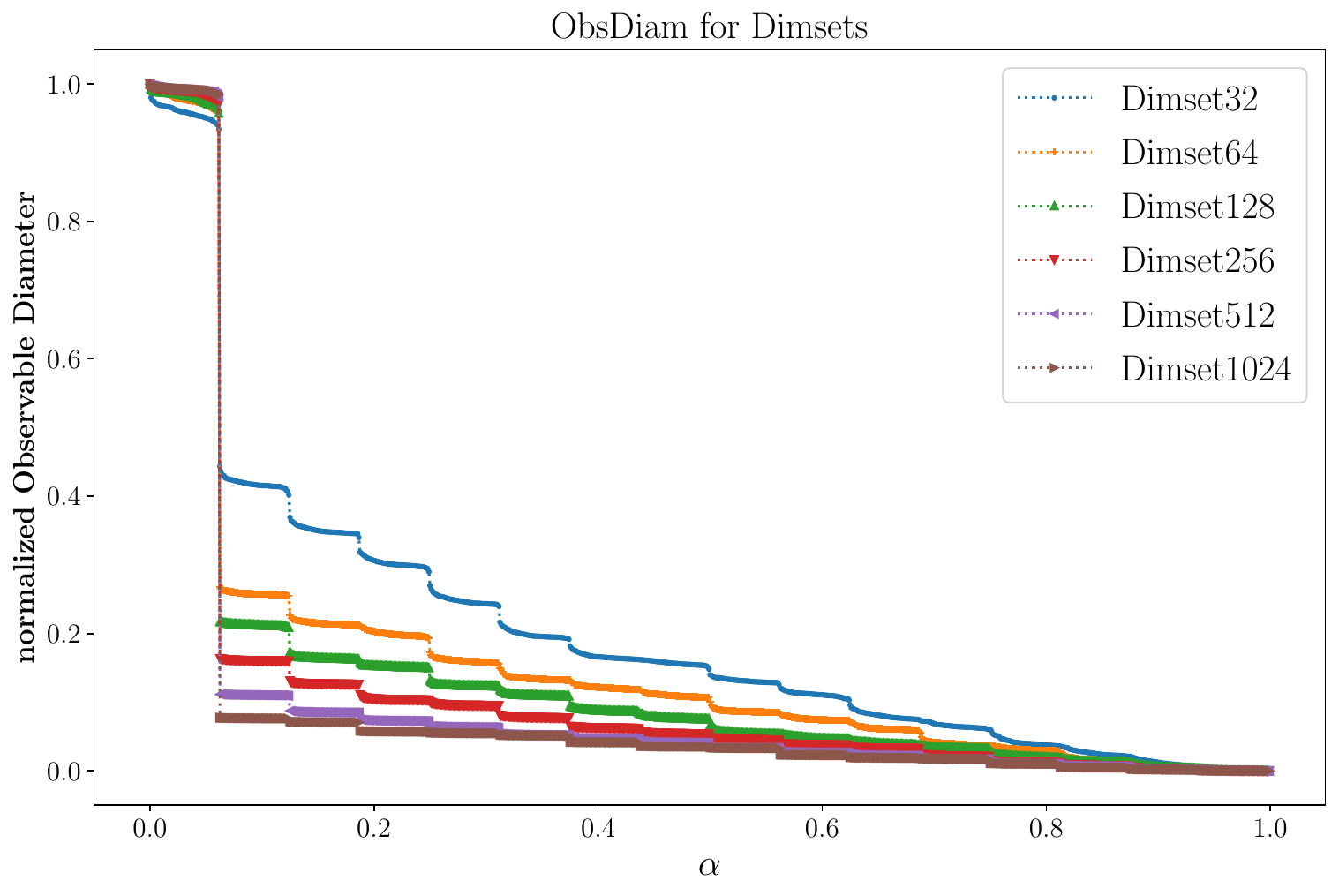}
    \includegraphics[width=0.8\textwidth]{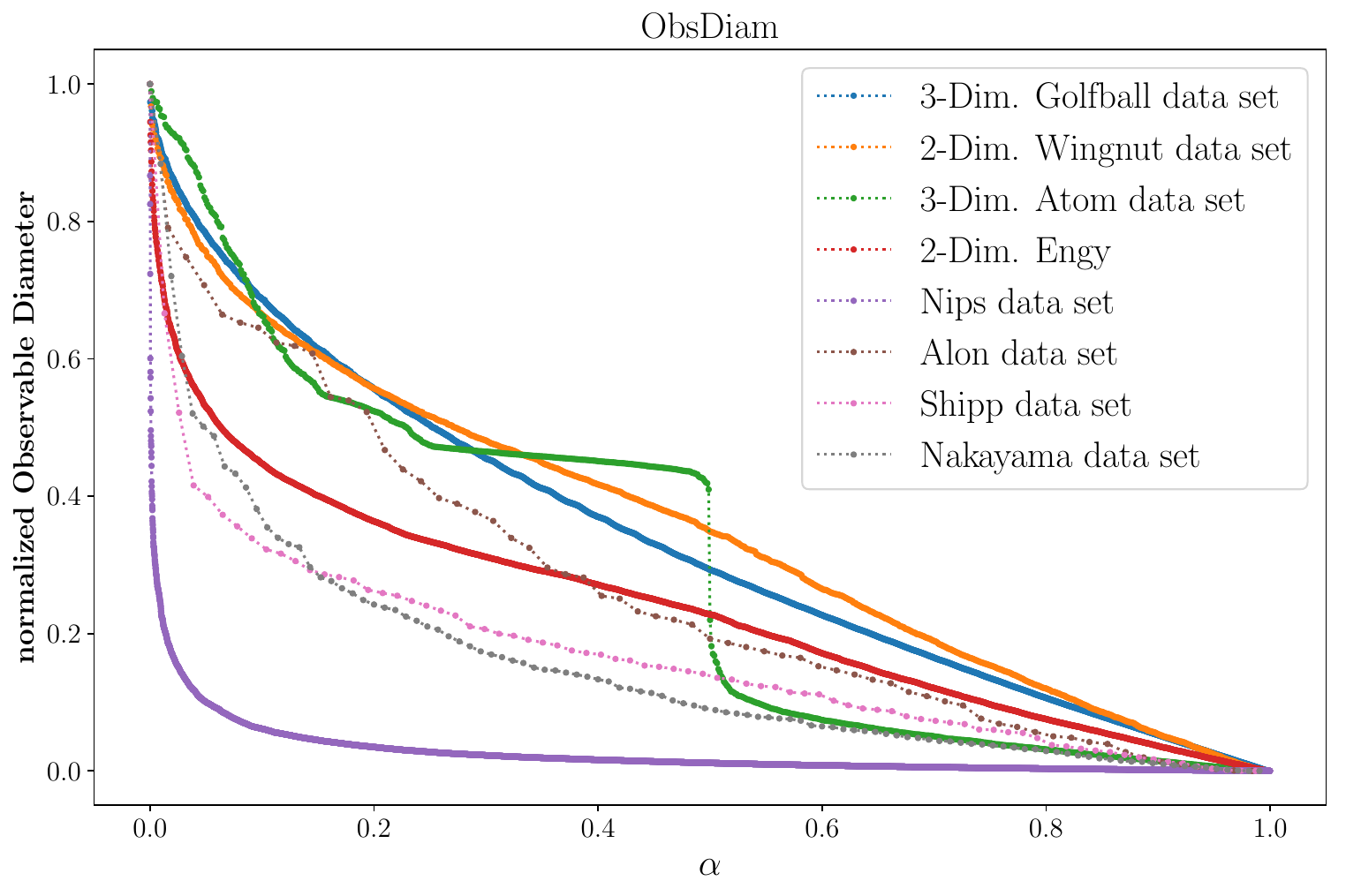}
  \caption{Observable diameter for $\alpha\in [0,1]$
    artificial data sets dimset (top) and data sets from~\citet{Ultsch2005} (bottom).}
  \label{fig:dimsetplot}
\end{figure}
Since all dimset data sets were generated using the same procedure with
the same number of point samples (1024) one would expect this increase. This is
not a mere correlation to the increase in the number of dimensions, but evidence
for the inability of the particular generation process to bound the intrinsic
dimension.  As for the low dimensional artificial data sets we observe a
different interaction between the CID an the ID. For example, the CID does
decrease when comparing the Golfball data set with the Atom data set, whereas
the intrinsic dimension increases. This indicates that the different dimension
functions cover different data set properties.

Finally, we compare the results for the real-world data sets. Even though the
number of dimensions is quite large, for those we may point out that the number
of point samples is quite small, in comparison. Nonetheless, all data sets have
essentially enough points to possibly span subspaces of 62 (Alon), 58 (Shippi),
and 105 (Nakayama) dimensions. We observe again that an increase in CID does not
precede an decrease in ID, as seen for Alon and Shippi. The converse, however,
can be observed as well when comparing Alon with Nakayama. The NIPS data set
exhibits by far the lowest CID as well as the highest ID. All these observations
lead us to conclude that the notion for intrinsic dimension, as introduced in
this work, captures an aspect of geometric data sets which is qualitatively
different to the Chavez intrinsic dimension.

\subsection{Experiments: Incidence Geometries}
\label{sec:comp-exper}

\begin{table}[t]
  \centering
  \caption{Intrinsic dimension for various data sets and their randomized
    counterparts.}
  {\footnotesize
  \begin{tabular}{@{}lrrrrr@{}}
    \toprule
    Name&\# Objects& \# Attributes& Density&\# Concepts& $\partial_\Delta(\mathscr{D}(\mathbb{K}))$\\\midrule
    zoo&101&28&0.30&379&52.44\\
    zoor&101&28&0.30&3339&1564.40\\\midrule
    cancer&699 & 92 & 0.10 & 9862 & 614.35\\
    cancerr&699 & 92 & 0.10 & 23151 & 417718.62\\\midrule
    southern&18 & 14 & 0.35 & 65 & 54.93\\
    southernr&18 & 14 & 0.37 & 120 & 167.01\\\midrule
    aplnm&79 & 188 & 0.06 & 1096 & 11667.14\\
    aplnmr&79 & 188 & 0.06 & 762 & 185324.01\\\midrule
    club &25& 15 & 0.25 & 62 & 118.15\\
    clubr&25 & 15 & 0.25 & 85 & 334.62\\\midrule
    facebooklike&377 & 522 & 0.01 & 2973 & 2689436.00\\
    facebookliker&377 & 522 & 0.01& 1265 & 5.73E7\\\midrule
    mushroom&8124&119&0.19& 238710& 263.49\\\bottomrule
  \end{tabular}}
  \label{tab:idtab}
\end{table}

We computed the intrinsic dimension function for different real-world data sets
to provide a first impression of
$\partial_{\Delta}(\mathscr{D}(\mathbb{K}))$. For brevity we reuse data sets
investigated by~\citet{Borchmann2017} and refer the reader there for an
elaborate discussion of those. All but one of the data sets are scaled versions
of downloads from the UCI Machine Learning Repository~\citep{Lichman:2013}. In
short we will consider the Zoo data set \emph{(zoo)} describing 101 animals by
fifteen attributes. The Breast Cancer data set \emph{(cancer)} representing 699
clinical cases of cell classification. The Southern Woman data set
\emph{(southern)}, a (offline) social network consisting of fourteen woman
attending eighteen different events. The Brunson Club Membership Network data
set \emph{(club)}, another (offline) social network describing the affiliations
of a set of 25 corporate executive officers to a set of 40 social
organizations. The Facebook-like Forum Network data set \emph{(facebooklike)}, a
(online) social network from an online community linking 377 users to 522
topics. A data set from an annual cultural event organized in the city of Munich
in 2013, the so-called \emph{Lange Nacht der Musik} \emph{(aplnm)}, a
(online/offline) social network linking 79 users to 188 events. And, finally the
well-known \emph{Mushroom} data set, a collection of 8124 described by 119
attributes. Additionally we consider for all those data sets, with exception for
mushroom, a randomized version. Those are indicated by the suffix \emph{r}.  We
conducted our experiments straightforward applying Proposition~\ref{prop:fca}. This was
done using
\texttt{conexp-clj}.\footnote{\url{https://github.com/exot/conexp-clj}} The
intermediate results for $\mathrm{ObsDiam}$ can be seen in
Figures~\ref{fig:obsdiam} and~\ref{fig:obsdiam2} and the final result for
$\partial_{\Delta}(\mathscr{D}(\mathbb{K}))$ is denoted in Table~\ref{tab:idtab}.
\begin{figure}
  \centering
  \includegraphics[width=1\textwidth]{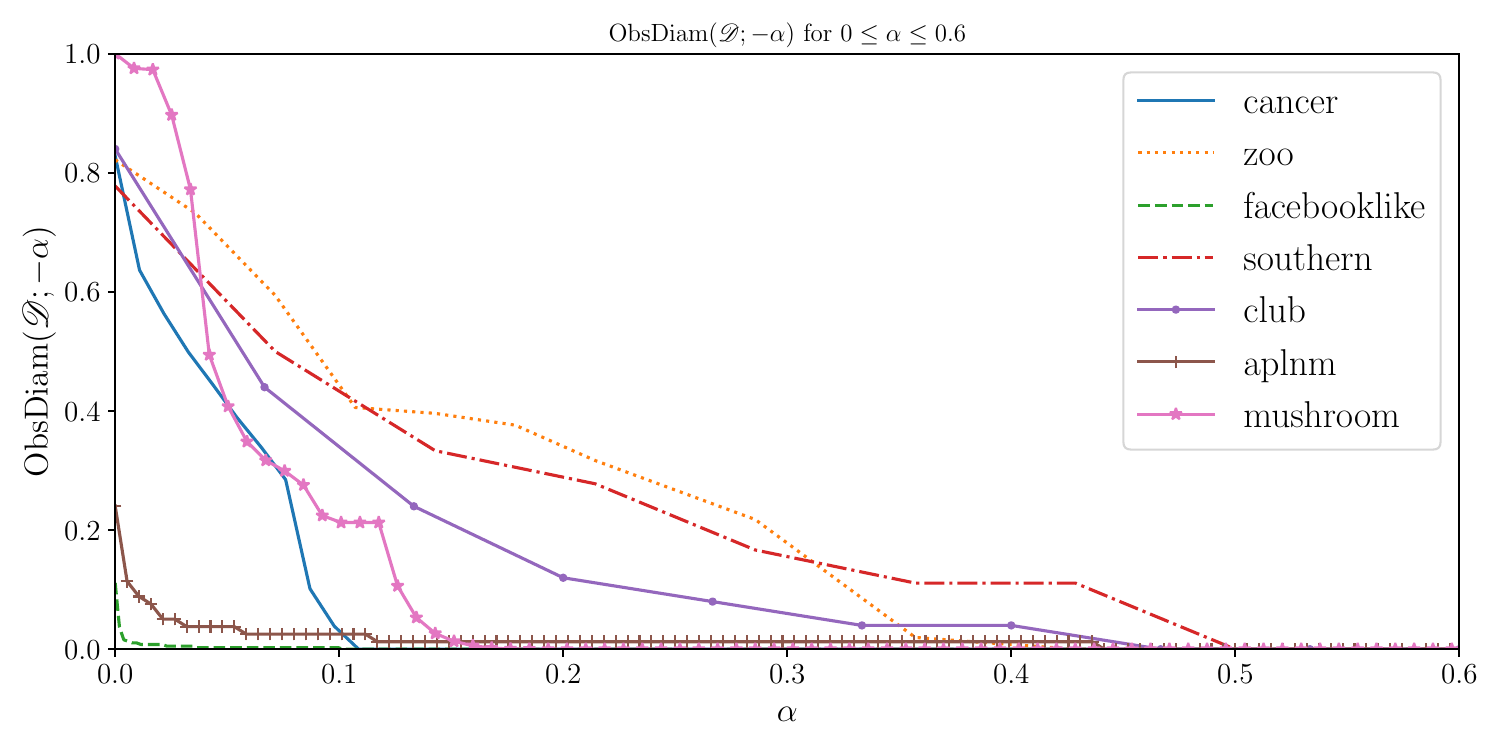}
  \caption{$\mathrm{ObsDiam}$ for all considered real-world data sets.}
  \label{fig:obsdiam}
\end{figure}

\begin{figure}
  \centering
  \includegraphics[width=1\textwidth]{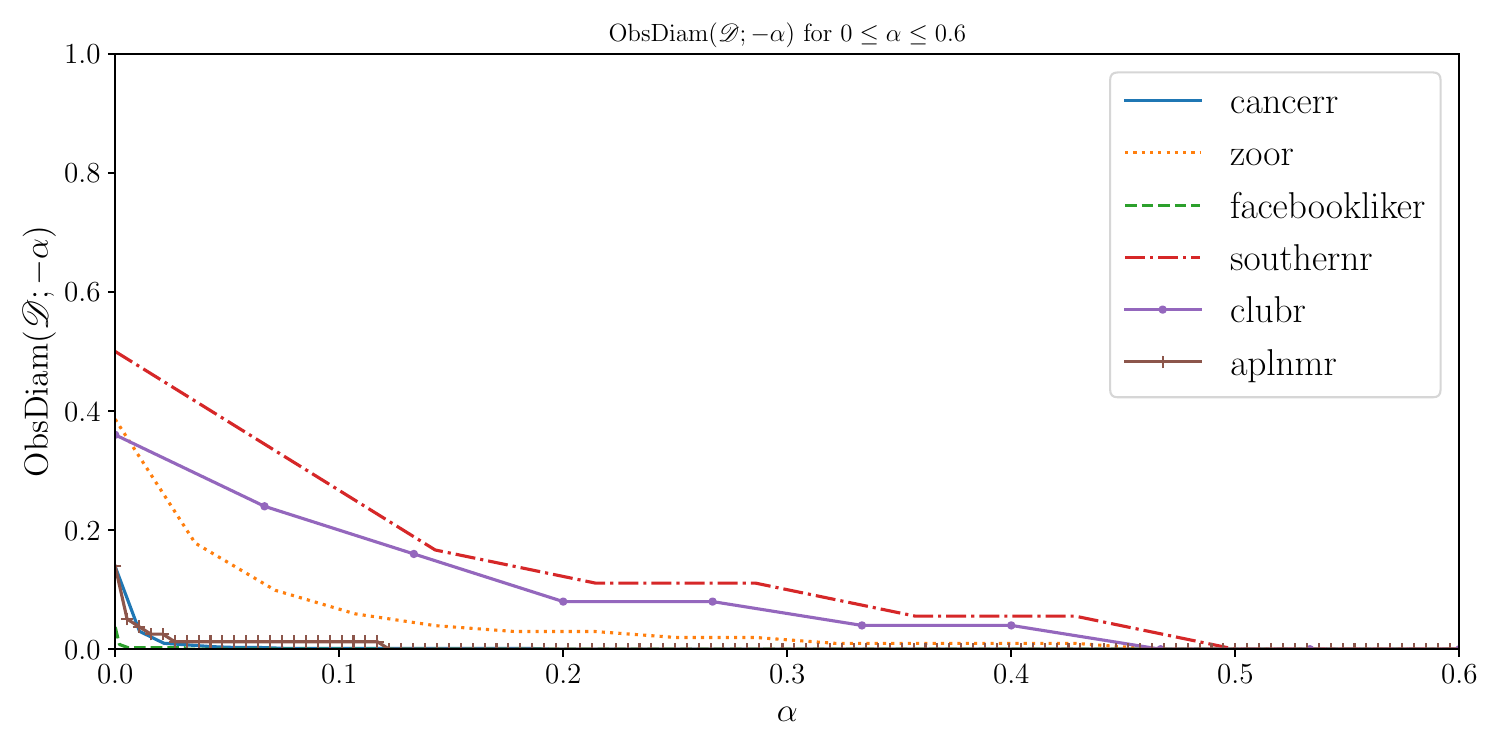}
  \caption{$\mathrm{ObsDiam}$ for randomized data sets based on Figure~\ref{fig:obsdiam}.}
  \label{fig:obsdiam2}
\end{figure}

\subsubsection{Observations}
All curves in Figure~\ref{fig:obsdiam} show a different behavior resulting in
different values for $\partial_{\Delta}(\mathscr{D})$. The overall descending
monotonicity is expected, however, the average as well as the local slopes are
quite distinguished. The general trend that comparably sparse contexts receive a
higher intrinsic dimension is also expected taking the results for the empty
context into account as well as the overall motivation of the curse of
dimension. Considering the random data sets in Table~\ref{tab:idtab} we observe that
neither the density nor the number of formal concepts (features) is an indicator
for the intrinsic dimension. This suggests that introduced intrinsic dimension
is independent of the usual descriptive properties. Comparing these results
to the Chavez ID is not applicable due to the non-metric nature of the
investigated data sets.

\section{Algorithms}
 \label{sec:naiveAlgo}
 \begin{algorithm}[H]
  \caption{$\mathrm{ObsDiam}$ with distance features}
  \label{algo:ObsDiam}
\begin{lstlisting}[mathescape,basicstyle=\footnotesize]
define ObsDiam($X,\mu,F$)  (*@ \hfill @*)   ; returns List
  for $f$ in $F$: (*@\label{a:distanzen}@*)
    $V_{f}=\{\}$
    Measure ={:} (*@ \hfill @*)   ; dictionary for measures
    for $x$ in $X$:
      $V_f = V_{f}\cup\{f(x)\} $
      Measure[$f(x)$] =+ 1  (*@ \hfill @*)   ; preimage measure increase
    matrix[f,:] = MinDiamMatrix($V_f$,Measure,X)
  for $\alpha$ in $ (0,1/|X|,\ldots,(|X|-1)/|X|,1)$
    result[$\alpha$]=$\max($matrix[:,$\alpha$]$)$
  return result
\end{lstlisting}
\end{algorithm}
\begin{algorithm}[H]
  \caption{MinDiamMatrix with distance features}
  \label{algo:mindiam}
\label{algo:partialDiam}
\begin{lstlisting}[mathescape,basicstyle=\footnotesize]
define MinDiamMatrix($V_{f},\text{Measure},X$)  (*@ \hfill @*)   ; returns Matrix
    result = $(\diam(X),\ldots,\diam(X))$  (*@ \hfill @*) ; Initialize vector with length $|X|$
    for s in $(V_{f},\leq)$: (*@ \hfill @*) ; iterate through $V_{f}$
      my_of_x = Measure[s] $\cdot |X|$   (*@ \hfill @*) ; denormalization to get index
      diam_of_x = 0
      if  result[my_of_x] > diam_of_x then result[my_of_x] = diam_of_x
      for e in $\{d\in V_{f}\mid d \geq s\}\leq V_{f}$
            my_of_x =+ Measure[e] $\cdot|X|$
            diam_of_x = e - s
            if  result[my_of_x] > diam_of_x then
              result[my_of_x] = diam_of_x
    for i in $(|X|,|X|-1,\ldots,1)$:   (*@ \hfill @*)   ; repair monotonicity if necessary
        if result[i] < result[i - 1] then result[i - 1] = result[i]
    return result
\end{lstlisting}
\end{algorithm}

\DeclareDelimFormat{finalnamedelim}{\addspace\bibstring{and}\space}
\DeclareFieldFormat[article]{pages}{#1}
\renewbibmacro*{volume+number+eid}{%
  \printfield{volume}
  \setunit{\addcomma\space}%
  \printfield{eid}}

\renewbibmacro*{number+date}{%
  \printfield[parens]{year}
  \setunit{\addcomma\space}
  \iffieldundef{number}
  {}
  {
    \newunit\printtext{no\adddot}
    \printfield{number}
  }
}

\renewbibmacro*{journal+issuetitle}{%
  \printfield{journaltitle}
  \iffieldundef{series}
    {}
    {\newunit
     \printfield{series}%
     \setunit{\addspace}}%
  \usebibmacro{volume+number+eid}%
  \setunit{\addspace}%
  \usebibmacro{number+date}%
  \newunit}
\renewcommand*{\mkbibnamefamily}[1]{\textsc{#1}}
\renewcommand*{\mkbibnameprefix}[1]{\textsc{#1}}
\DeclareFieldFormat*{title}{#1}
\renewcommand*{\newunitpunct}{\addcomma\space}
\renewbibmacro{in:}{}
\printbibliography{}

\end{document}